\newcommand{\pushright}[1]{\ifmeasuring@#1\else\omit\hfill$\displaystyle#1$\fi\ignorespaces}
\newcommand{\pushleft}[1]{\ifmeasuring@#1\else\omit$\displaystyle#1$\hfill\fi\ignorespaces}
\newcommand{\fun}[1]{\ensuremath{\mathopen{}\mathclose\bgroup\left(#1\aftergroup\egroup\right)}}
\newcommand{\vect}[1]{\ensuremath{\bm{#1}}}
\newcommand{\mdp}{\ensuremath{\mathcal{M}}}
\newcommand{\states}{\ensuremath{\mathcal{S}}}
\newcommand{\actions}{\ensuremath{\mathcal{A}}}
\newcommand{\probtransitions}{\ensuremath{\mathbf{P}}} 
\newcommand{\rewards}{\ensuremath{\mathcal{R}}}
\newcommand{\sinit}{\ensuremath{\mu}}
\newcommand{\s}{\ensuremath{s}}
\newcommand{\action}{\ensuremath{a}}
\newcommand{\act}[1]{\ensuremath{\mathit{Act}\ifthenelse{\equal{#1}{}}{}{(#1)}}}
\newcommand{\policy}{\ensuremath{\pi}}
\newcommand{\policies}{\ensuremath{\Pi}}
\newcommand{\stationary}[1]{\ensuremath{\xi_{#1}}}
\newcommand{\vecrewards}{\ensuremath{\vect{\rewards}}}
\newcommand{\accrewards}[1]{\ensuremath{\vv^\text{acc}_{#1}}}
\newcommand{\momdp}{\ensuremath{\vect{\mathcal{M}}}}
\newcommand{\momdptuple}{\langle \states, \actions, \probtransitions, \vecrewards, \allowbreak \sinit, \discount \rangle}
\newcommand{\nadir}{\ensuremath{\vv^\text{n}}}
\newcommand{\ideal}{\ensuremath{\vv^\text{i}}}
\newcommand{\vpi}{\ensuremath{\vv^\policy}}
\newcommand{\vast}{\ensuremath{\vv^\ast}}
\newcommand{\pd}{\ensuremath{\succ}}
\newcommand{\pde}{\ensuremath{\succeq}}
\newcommand{\npde}{\ensuremath{\nsucceq}}
\newcommand{\pf}[2][]{\ensuremath{\mathcal{V}_{#1}^{#2}}}
\newcommand{\truepf}{\ensuremath{\pf{\ast}}}
\newcommand{\approxpf}{\ensuremath{\pf{\tau}}}
\newcommand{\interior}{\ensuremath{\text{int }}}
\newcommand{\volume}{\ensuremath{\text{vol }}}
\newcommand{\boundary}{\ensuremath{\partial}}
\newcommand{\dist}{\ensuremath{\text{dist}}}
\newcommand{\sr}{\ensuremath{s_{\vect{r}}}}
\newcommand{\rboundary}{\ensuremath{\boundary^r}}
\newcommand{\rpos}{\ensuremath{\mathbb{R}^d_{\geq 0}}}
\newcommand{\rapprox}{\ensuremath{\mathbb{R}^d_{\delta}}}
\newcommand{\bbox}{\ensuremath{\mathcal{B}}}
\newcommand{\vset}{\ensuremath{\sV}}
\newcommand{\cset}{\ensuremath{\sC}}
\newcommand{\dset}{\ensuremath{\sD}}
\newcommand{\iset}{\ensuremath{\sI}}
\newcommand{\lset}{\ensuremath{\sL}}
\newcommand{\uset}{\ensuremath{\sU}}
\newcommand{\oracle}{\ensuremath{\Omega^\tau}}
\newcommand{\observationfn}{\ensuremath{\mathcal{O}}}
 \newcommand{\encoderparameter}{\ensuremath{}}
\newcommand{\discount}{\ensuremath{\gamma}}
\newcommand{\Prob}{\ensuremath{\mathbb{P}}}
\newcommand{\expectedsymbol}[1]{\ensuremath{\mathop{\mathbb{E}}\ifthenelse{\equal{#1}{}}{}{_{#1}}}}
\newcommand{\expected}[2]{\ensuremath{\expectedsymbol{#1} \left[ #2 \right]}}
\newcommand{\normal}[3]{\ensuremath{\displaystyle \ifthenelse{\equal{#3}{}}{\mathcal{N}(#1, #2)}{\mathcal{N}(#3\,;\, #1, #2)}}}
\newcommand{\overbar}[1]{\mkern 1.5mu\overline{\mkern-1.5mu#1\mkern-1.5mu}\mkern 1.5mu}
\newcommand{\overbarit}[1]{\,\overline{\!{#1}}}
\newcommand{\embed}{\ensuremath{\phi}}
\newcommand{\latentprobtransitions}{\ensuremath{\overbar{\probtransitions}}}
\newcommand{\latentrewards}{\ensuremath{\overbarit{\rewards}}}
\newcommand{\latentbeliefupdate}{\ensuremath{\overbar{\tau}}}
\newcommand{\localtransitionloss}[1]{L_{\probtransitions}}
\newcommand{\localrewardloss}[1]{L_{\rewards}}
\newcommand{\observationloss}[1]{\ensuremath{L_{\observationfn}}}
\newcommand{\beliefloss}[1]{\ensuremath{L_{\latentbeliefupdate}}}
\newcommand{\onpolicyrewardloss}[1]{\ensuremath{L_{\latentrewards}^{\varphi}}}
\newcommand{\onpolicytransitionloss}[1]{\ensuremath{L_{\latentprobtransitions}^{\varphi}}}
\newcommand{\KR}[1]{\ensuremath{\ifthenelse{\equal{#1}{}}{K_{\latentrewards}}{K_{\latentrewards}^{#1}}}}
\newcommand{\KP}[1]{\ensuremath{\ifthenelse{\equal{#1}{}}{K_{\latentprobtransitions}}{K_{\latentprobtransitions}^{#1}}}}
\newcommand{\originaltolatentstationary}[1]{{\latentprobtransitions_{\embed_{\encoderparameter}\stationary{\ifthenelse{\equal{#1}{}}{\policy}{#1}}}}}
\def\ceil#1{\lceil #1 \rceil}
\def\1{\bm{1}}
\def\vc{{\bm{c}}}
\def\vi{{\bm{i}}}
\def\vl{{\bm{l}}}
\def\vr{{\bm{r}}}
\def\vu{{\bm{u}}}
\def\vv{{\bm{v}}}
\def\vx{{\bm{x}}}
\def\vy{{\bm{y}}}
\DeclareMathAlphabet{\mathsfit}{\encodingdefault}{\sfdefault}{m}{sl}
\SetMathAlphabet{\mathsfit}{bold}{\encodingdefault}{\sfdefault}{bx}{n}
\def\sC{{\mathcal{C}}}
\def\sD{{\mathcal{D}}}
\def\sI{{\mathcal{I}}}
\def\sL{{\mathcal{L}}}
\def\sS{{\mathcal{S}}}
\def\sU{{\mathcal{U}}}
\def\sV{{\mathcal{V}}}
\def\sX{{\mathcal{X}}}
\def\sY{{\mathcal{Y}}}
\newcommand{\normmax}{L^\infty}
\DeclareMathOperator*{\argmax}{arg\,max}
\theoremstyle{plain}
\newtheorem{theorem}{Theorem}[section]
\newtheorem*{theorem53}{Theorem 5.3}
\newtheorem*{theorem54}{Theorem 5.4}
\newtheorem*{theorem41}{Theorem 4.1}
\newtheorem*{theorem43}{Theorem 4.3}
\newtheorem{proposition}[theorem]{Proposition}
\newtheorem{lemma}[theorem]{Lemma}
\newtheorem{corollary}[theorem]{Corollary}
\theoremstyle{definition}
\newtheorem{definition}[theorem]{Definition}
\newtheorem{assumption}[theorem]{Assumption}
\theoremstyle{remark}
\newenvironment{proofsketch}{%
\proof}{\endproof}
\newcommand{\smallparagraph}[1]{\smallskip\noindent\textbf{#1}.}
\gdef\@copyrightpermission{
  \begin{minipage}{0.2\columnwidth}
   \href{https://creativecommons.org/licenses/by/4.0/}{\includegraphics[width=0.90\textwidth]{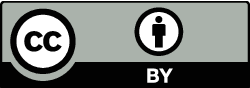}}
  \end{minipage}\hfill
  \begin{minipage}{0.8\columnwidth}
   \href{https://creativecommons.org/licenses/by/4.0/}{This work is licensed under a Creative Commons Attribution International 4.0 License.}
  \end{minipage}
  \vspace{5pt}
}
\title[Provably Unveiling the Pareto Front with MORL]{Divide and Conquer: Provably Unveiling the Pareto Front with Multi-Objective Reinforcement Learning}
\author{Willem Röpke}
\affiliation{
  \institution{Vrije Universiteit Brussel}
  \city{Brussels}
  \country{Belgium}
}
\email{willem.ropke@vub.be}
\author{Mathieu Reymond}
\affiliation{
  \institution{Université de Montréal,\\ Mila - Quebec AI Institute}
  \city{Montreal}
  \country{Canada}
}
\affiliation{
  \institution{Vrije Universiteit Brussel}
  \city{Brussels}
  \country{Belgium}
}
\author{Patrick Mannion}
\affiliation{
  \institution{University of Galway}
  \city{Galway}
  \country{Ireland}
}
\author{Diederik M. Roijers}
\affiliation{
  \institution{City of Amsterdam}
  \city{Amsterdam}
  \country{the Netherlands}
}
\affiliation{
  \institution{Vrije Universiteit Brussel}
  \city{Brussels}
  \country{Belgium}
}
\author{Ann Nowé}
\affiliation{
  \institution{Vrije Universiteit Brussel}
  \city{Brussels}
  \country{Belgium}
}
\author{Roxana R\u{a}dulescu}
\affiliation{
  \institution{Utrecht University}
  \city{Utrecht}
  \country{the Netherlands}
}
\affiliation{
  \institution{Vrije Universiteit Brussel}
  \city{Brussels}
  \country{Belgium}
}
\begin{abstract}
An important challenge in multi-objective reinforcement learning is obtaining a Pareto front of policies to attain optimal performance under different preferences. We introduce Iterated Pareto Referent Optimisation (IPRO), which decomposes finding the Pareto front into a sequence of constrained single-objective problems. This enables us to guarantee convergence while providing an upper bound on the distance to undiscovered Pareto optimal solutions at each step. We evaluate IPRO using utility-based metrics and its hypervolume and find that it matches or outperforms methods that require additional assumptions. By leveraging problem-specific single-objective solvers, our approach also holds promise for applications beyond multi-objective reinforcement learning, such as planning and pathfinding.
\end{abstract}
\keywords{Reinforcement learning; Multi-objective; Pareto front}
\newcommand{\BibTeX}{\rm B\kern-.05em{\sc i\kern-.025em b}\kern-.08em\TeX}
\begin{document}


\pagestyle{fancy}
\fancyhead{}


\maketitle 


\section{Introduction}
In sequential decision-making problems, agents often have multiple and conflicting objectives. Controlling a water reservoir, for example, involves a complex trade-off between environmental, economic and social factors \citep{castelletti2013multiobjective}. Because the objectives are conflicting, decision-makers ultimately need to make a suitable trade-off. In such situations, multi-objective reinforcement learning (MORL) can be used to compute a set of candidate optimal policies that offer the best available trade-offs, empowering decision-makers to select their preferred policy~\citep{hayes2022practical}.

We focus on learning the Pareto front, which comprises all policies yielding non-dominated expected returns. When assuming decision-makers employ a linear scalarisation function or allow stochastic policies, the Pareto front is guaranteed to be convex \citep{roijers2017multiobjective}, facilitating the use of effective solution methods \citep{yang2019generalized,xu2020predictionguided}. However, deterministic policies are often preferred for reasons of safety, accountability, or interpretability, and in such cases, the Pareto front may exhibit concave regions. Algorithms addressing this setting have been elusive, with successful solutions limited to purely deterministic environments \citep{reymond2022pareto}.

To tackle general policy classes and MOMDPs, we propose Iterated Pareto Referent Optimisation (IPRO), which decomposes this task into a sequence of constrained single-objective problems. In multi-objective optimisation (MOO), decomposition stands as a successful paradigm for computing a Pareto front. This approach makes use of efficient single-objective methods to solve the decomposed problems, thereby also establishing a robust connection between advances in multi-objective and single-objective methods \citep{zhang2007moea}. In particular, existing MORL algorithms dealing with a convex Pareto front frequently employ decomposition and rely on single-objective RL algorithms to solve the resulting problems~\citep{lu2023multiobjective,alegre2023sampleefficient}.

\vspace{-0.1em}
\smallparagraph{Contributions} IPRO is an anytime algorithm that decomposes learning the Pareto front into learning a sequence of Pareto optimal policies. We show that learning a Pareto optimal policy corresponds to a constrained single-objective problem for which principled solution methods are derived. Combining these, we guarantee convergence to the Pareto front and provide bounds on the distance to undiscovered solutions at each iteration. Our complexity analysis shows that IPRO requires a polynomial number of iterations to approximate the Pareto front for a constant number of objectives. While IPRO applies to any policy class, we specifically demonstrate its effectiveness for deterministic policies, a class lacking general methods. When comparing IPRO to algorithms that require additional assumptions on the structure of the Pareto front or the underlying environment, we find that it matches or outperforms them, thereby showcasing its efficacy. 

\section{Related work} 
When learning a single policy in MOMDPs, as is necessary in IPRO, conventional methods often adapt single-objective RL algorithms. For example, \citet{siddique2020learning} extend DQN, A2C and PPO to learn a fair policy by optimising the generalised Gini index of the expected returns. \citet{reymond2023actorcritic} extend this to general non-linear functions and establish a policy gradient theorem for this setting. When maximising a concave function of the expected returns, efficient methods exist that guarantee global convergence~\citep{zhang2020variational,zahavy2021reward,geist2022concave}.

Decomposition is a promising technique for MORL due to its ability to leverage strong single-objective methods as a subroutine \citep{felten2024multiobjective}. When the Pareto front is convex, many techniques rely on the fact that it can be decomposed into a sequence of single-objective RL problems where the scalar reward is a convex combination of the original reward vector \citep{yang2019generalized,alegre2023sampleefficient}. When the Pareto front is non-convex, \citet{vanmoffaert2013scalarized} learn deterministic policies on the Pareto front by decomposing the problem using the Chebyshev scalarisation function but do not provide any theoretical guarantees and only evaluate on discrete settings.

In MOO, a related methodology was proposed by \citet{legriel2010approximating} to obtain approximate Pareto fronts. Their approach iteratively proposes queries to an oracle and uses the return value to trim sections from the search space. In contrast, we introduce an alternative technique for query selection that ensures convergence to the \emph{exact} Pareto front and aims to minimise the number of iterations. Moreover, we introduce a procedure that deals with imperfect oracles and contribute novel results that are particularly useful for MORL. 


\section{Preliminaries}
\label{sec:preliminaries}
\smallparagraph{Pareto dominance} For two vectors $\vv, \vv' \in \mathbb{R}^d$ we say that $\vv$ Pareto dominates $\vv'$, denoted $\vv \pd \vv'$, when $\forall j \in \{1, \dotsc, d\}: v_j \geq v'_j$ and $\vv \neq \vv'$. When dropping the second condition, we write $\vv \pde \vv'$. We say that $\vv$ strictly Pareto dominates $\vv'$, denoted $\vv > \vv'$ when $\forall j \in \{1, \dotsc, d\}: v_j > v'_j$. When a vector is not pairwise Pareto dominated, it is Pareto optimal. A vector is weakly Pareto optimal whenever there is no other vector that strictly Pareto dominates it. 

In multi-objective decision-making, Pareto optimal vectors are relevant when considering decision-makers with monotonically increasing utility functions. In particular, if $\vv \pd \vv'$, then $\vv$ will be preferred over $\vv'$ by all decision-makers. The set of all pairwise Pareto non-dominated vectors is called the Pareto front, denoted $\truepf$, and an approximate Pareto front $\approxpf$ with tolerance $\tau$ is an approximation to $\truepf$ such that $\forall \vv \in \truepf, \exists \vv' \in \approxpf: \|\vv - \vv'\|_\infty \leq \tau$. We refer to the least upper bound of the Pareto front as the ideal $\ideal$, and the greatest lower bound as the nadir $\nadir$ (see \cref{fig:ipro-sets}). 

\smallparagraph{Achievement scalarising functions} Achievement scalarising functions (ASFs) scalarise a multi-objective problem such that an optimal solution to the single-objective problem is (weakly) Pareto optimal \citep{miettinen1998nonlinear}. These functions are parameterised by a reference point $\vr$, also called the referent. Points dominating the referent form the target region. ASFs are classified into two types: order representing and order approximating. An ASF $\sr$ is order representing when it is strictly increasing, i.e. $\vv > \vv' \implies \sr(\vv) > \sr(\vv')$, and only returns non-negative values for $\vv$ when $\vv \pde \vr$. An ASF is order approximating when it is strongly increasing, i.e. $\vv \pd \vv' \implies \sr(\vv) > \sr(\vv')$, but may assign non-negative values to solutions outside the target region. An ASF cannot be both strongly increasing and exclusively non-negative within the target region \citep{wierzbicki1982mathematical}.
 
As an example, consider two vectors $\vv_1 = (1, 2)$ and $\vv_2 = (1, 1)$ where $\vv_1$ Pareto dominates $\vv_2$ ($\vv_1 \pd \vv_2$) but does not strictly dominate it. With a strictly increasing ASF $\sr$, it is possible that $\sr(\vv_1) = \sr(\vv_2)$. However, a strongly increasing ASF ensures that $\sr(\vv_1) > \sr(\vv_2)$. Consequently, maximising an order representing ASF guarantees a weakly Pareto optimal solution inside the target region, while maximising an order approximating ASF guarantees a Pareto optimal solution, though this solution might lie outside the target region. We employ the augmented Chebyshev scalarisation function, a frequently used ASF \citep{nikulin2012new,vanmoffaert2013scalarized}.

\smallparagraph{Problem setup}
We consider sequential multi-objective decision-making problems, modelled as a multi-objective Markov decision process (MOMDP). A MOMDP is a tuple $\momdp = \momdptuple$ where $\states$ is a set of states, $\actions$ a set of actions, $\probtransitions$ a transition function, $\vecrewards: \states \times \actions \times \states \to \mathbb{R}^d$ a vectorial reward function with $d \geq 2$ the number of objectives, $\mu$ a distribution over initial states and $\gamma$ a discount factor. Since there is generally not a single policy that maximises the expected return for all objectives, we introduce a partial ordering over policies on the basis of Pareto dominance. We say that a policy $\policy \in \policies$ Pareto dominates another if its expected return, defined as $\vpi := \expected{\policy, \sinit}{\sum_{t=0}^\infty \gamma^t \vecrewards(\s_t, \action_t, \s_{t+1})}$, Pareto dominates the expected return of the other policy.

Our goal is to learn a Pareto front of memory-based deterministic policies in MOMDPs. Such policies are relevant in safety-critical settings, where stochastic policies may have catastrophic outcomes but can Pareto dominate deterministic policies \citep{delgrange2020simple}. Furthermore, for deterministic policies, it can be shown that memory-based policies may Pareto dominate stationary policies \citep{roijers2017multiobjective}. In this setting, it is known that the Pareto front may be non-convex and thus cannot be fully recovered by methods based on linear scalarisation. Furthermore, to the best of our knowledge, no algorithm exists that produces a Pareto front for such policies in general MOMDPs.

\section{Iterated Pareto referent optimisation}
\label{sec:ipro}

\begin{figure*}[tb]
    \centering
        \begin{subfigure}[b]{0.329\textwidth}
        \centering
        \includegraphics[trim={0.5cm 0.3cm 0.1cm 0.22cm},clip,width=\textwidth]{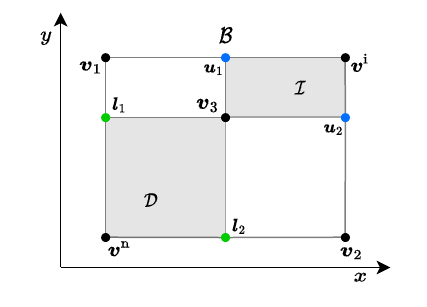}
        \subcaption{\vspace{-1em}}
        \label{fig:ipro-stage1}
    \end{subfigure}
    \begin{subfigure}[b]{0.329\textwidth}
        \centering
        \includegraphics[trim={0.5cm 0.3cm 0.1cm 0.22cm},clip,width=\textwidth]{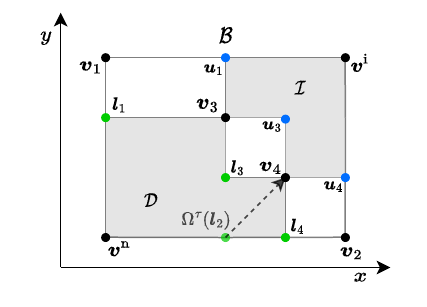}
        \subcaption{\vspace{-1em}}
        \label{fig:ipro-stage2}
    \end{subfigure}
    \begin{subfigure}[b]{0.329\textwidth}
        \centering
        \includegraphics[trim={0.5cm 0.3cm 0.1cm 0.22cm},clip,width=\textwidth]{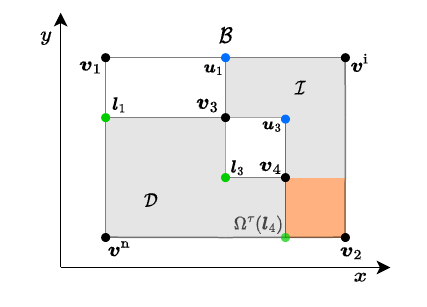}
        \subcaption{\vspace{-1em}}
        \label{fig:ipro-stage3}
    \end{subfigure}
    \Description{A figure with three subplots showcasing an example execution of IPRO.}
    \caption{(\textbf{a}) The bounding box $\bbox$, defined by the nadir $\nadir$ and ideal $\ideal$, contains all Pareto optimal solutions. The dominated set $\dset$ and infeasible set $\iset$ are defined by the current approximation to the Pareto front $\vset = \{\vv_1, \vv_2, \vv_3\}$ and are shaded. The lower bounds $\vl \in \lset$ are highlighted in green, while the upper bounds $\vu \in \uset$ are highlighted in blue. (\textbf{b})~After querying the Pareto oracle $\oracle$ with $\vl_2$, $\vv_4$ is added to the Pareto front and $\lset$ and $\uset$ are updated to represent the new corners of $\dset$ and $\iset$ respectively. (\textbf{c}) When the Pareto oracle cannot find a feasible solution strictly dominating $\vl_4$, it is added to the completed set $\cset$ and the shaded orange area is added to the infeasible set $\iset$.}
    \label{fig:ipro-sets}
\end{figure*}

We present Iterated Pareto Referent Optimisation (IPRO) to learn a Pareto front in MOMDPs. IPRO generates a sequence of constrained single-objective problems while retaining a set of guaranteed lower and upper bounds to the Pareto front. An example execution of IPRO is illustrated in \cref{fig:ipro-sets}. Formal proofs for all theoretical results are provided in \cref{ap:ipro-proofs}.

\subsection{Algorithm overview}
The core idea of IPRO is to bound the search space that may contain value vectors corresponding to Pareto optimal policies and iteratively remove sections from this space. This is achieved by leveraging an oracle to obtain a policy with its value vector in some target region and utilising this to update the boundaries of the search space. Detailed pseudocode is given in \cref{alg:ipro}.

\begin{algorithm}[t]
\caption{The IPRO algorithm.}
\label{alg:ipro}
    \begin{algorithmic}[1]
    \Require A Pareto oracle $\oracle$ with tolerance $\tau$ 
    \Ensure A $\tau$-Pareto front $\vset$
    \State Get maximal points $\{\vv^1, \dotsc, \vv^d\}$ to create the ideal $\ideal$
    \State Get minimal points to estimate the nadir $\nadir$
    \State Form a bounding box $\bbox$ from $\nadir$ and $\ideal$
    \State $\uset \gets \{\ideal\}$, $\lset \gets \{\nadir\}$
    \State $\vset \gets \{\vv^1, \dotsc, \vv^d\}$ and $\cset \gets \emptyset$
    \For{$\vv \in \{\vv^1, \dotsc, \vv^d\}$}
        \State $\lset \gets \textproc{update}(\vv, \lset)$
    \EndFor
    \While{$\max_{\vu \in \uset} \min_{\vv' \in \vset} \|\vu - \vv'\|_\infty >  \tau$}
    \State $\vl \gets \textsc{Select}(\lset)$
    \State $\textsc{success}, \vast \gets \oracle(\vl)$
    \If{$\textsc{success}$}
        \State $\vset \gets \vset \cup \{\vast\}$
        \State $\lset \gets \textproc{update}(\vast, \lset)$, $\uset \gets -\textproc{update}(-\vast, -\uset)$
    \Else{}
        \State $\cset \gets \cset \cup \{\vl\}$ 
        \State $\lset \gets \lset \setminus \{\vl\}$, $\uset \gets -\textproc{update}(-\vl, -\uset)$
    \EndIf
    \EndWhile
    \Procedure{update}{$\vast, \sX$}
        \State $\sX' \gets \{\}$
        \For{$\vv \in \sX$}
        \If{$\vast > \vv$}
        \State $\sX' \gets \sX' \cup \{(\vv_{-j}, \vast_j) \mid j \in [d] \}$
        \Else
        \State $\sX' \gets \sX' \cup \{\vv\}$
        \EndIf
        \EndFor
        \State $\sX' \gets \textsc{Prune}(\sX')$
    \EndProcedure
    \end{algorithmic}
\end{algorithm}

\smallparagraph{Bounding the search space} 
It is necessary to bound the space in which Pareto non-dominated solutions may exist. By definition, the box spanned by the nadir $\nadir$ and ideal $\ideal$ contains all such points (shown as $\bbox$ in \cref{fig:ipro-sets}). We obtain the ideal by maximising each objective independently, effectively reducing the MOMDP to a regular MDP. The solutions constituting the ideal are further used to instantiate the Pareto front $\vset$. Since obtaining the nadir is generally more complicated \citep{miettinen1998nonlinear}, we compute a lower bound of the nadir by minimising each objective independently, analogous to the instantiation of the ideal. 

\smallparagraph{Obtaining a Pareto optimal policy} To obtain individual Pareto optimal policies we introduce a \emph{Pareto oracle} (fully formalised in \cref{sec:pareto-oracle}). Informally, a Pareto oracle $\oracle$ with tolerance $\tau$ takes a referent $\vr$ as input and attempts to return a weakly Pareto optimal policy $\policy$ whose expected return $\vpi$ strictly dominates the referent, i.e. $\vpi > \vr$. The oracle's output guides IPRO in deciding which points may still correspond to Pareto optimal policies. If the oracle succeeds (\cref{fig:ipro-stage2}), $\vpi$ is guaranteed to be weakly Pareto optimal, meaning all points dominated by $\vpi$ can be discarded, while all points strictly dominating $\vpi$ are infeasible, as otherwise $\policy$ would not have been returned. If the evaluation fails (\cref{fig:ipro-stage3}), all points strictly dominating $\vr$ can be excluded as they are either infeasible or within tolerance $\tau$. This mechanism ensures efficient exploration of the Pareto front by eliminating infeasible or dominated regions.

\smallparagraph{Reducing the search space} We use the Pareto oracle to exclude sections of the search space by maintaining a dominated set $\dset$ and infeasible set $\iset$, that respectively contain points dominated by the current Pareto front and points guaranteed infeasible by a previous iteration (\cref{fig:ipro-stage1}). A naive approach would be to iteratively query the oracle and adjust $\dset$ and $\iset$ until they cover the entire bounding box. However, when Pareto oracle evaluations are expensive, such as when learning policies in a MOMDP, a more systematic approach is preferable to minimise the number of evaluations.

We propose selecting referents from a set of guaranteed lower bounds to maximise improvement in each iteration. Any remaining Pareto optimal solution $\vast$ must strictly dominate a point on the boundary of the dominated set $\dset$ and be upper bounded by a point on the boundary of the infeasible set $\iset$. Instead of considering the full boundaries, which contain infinitely many points, we restrict our attention to the inner corners. Formally, we define the lower bounds $\lset$ and upper bounds $\uset$, which cover these inner corners of $\dset$ and $\iset$ respectively (\cref{fig:ipro-stage1}). By this definition, $\vast$ dominates at least one $\vl \in \lset$, making it identifiable by a Pareto oracle. Moreover, since $\vast$ is dominated by some $\vu \in \uset$, $\uset$ provides guarantees on the distance to the remaining Pareto optimal solutions.

IPRO iteratively selects lower bounds from $\lset$ to query the oracle, updating boundaries based on the oracle's response. This process continues until the distance between every upper bound and its nearest lower bound falls below a user-defined tolerance $\tau$, ensuring a $\tau$-Pareto front is obtained. In practice, IPRO prioritises lower bounds using a heuristic selection function based on the hypervolume improvement metric, accelerating convergence by focusing on regions with the highest potential for exploration.

\smallparagraph{IPRO-2D} While in \cref{fig:ipro-sets} all unexplored sections are contained in isolated rectangles, this is a special property of bi-objective problems. In general, feasible solutions may dominate multiple lower bounds, necessitating careful updates (see the $\textproc{update}$ function in \cref{alg:ipro}). This allows for a dedicated variant, IPRO-2D, where significant simplifications can be made. When a new Pareto optimal solution is found, updating $\lset$ and $\uset$ requires adding at most two new points on either side of the adjusted boundary. Moreover, calculating the area of each rectangle is straightforward, enabling the construction of a priority queue which processes larger rectangles first to reduce the upper bound of the error quickly. Finally, instead of a full max-min operation required for the stopping criterion, the maximum error is determined by the rectangle with the greatest distance between its lower and upper bound.

\subsection{Upper bounding the error}
We now turn again to the general case for $d \geq 2$ objectives, demonstrating that $\uset$ may be used to bound the distance between the current approximation of the Pareto front $\vset_t$ and the remaining Pareto optimal solutions $\truepf \setminus \vset_t$. The true approximation error at timestep $t$ from $\vset_t$ to the true Pareto front $\truepf$ is defined as,
\begin{equation}
\label{eq:true-error}
\varepsilon^\ast_t = \sup_{\vast \in \truepf} \min_{\vv \in \vset} \|\vast - \vv\|_\infty.
\end{equation} 
Since $\uset$ is finite for any $t < \infty$ by construction, we can substitute the $\sup_{\vast \in \truepf}$ by a $\max_{\vu \in \uset}$, resulting in an upper bound on the true approximation error $\varepsilon^\ast_t$. We formalise this in \cref{th:approximation-guarantee}.

\begin{theorem}
\label{th:approximation-guarantee}
Let $\vset^\ast$ be the true Pareto front, $\vset_t$ the approximate Pareto front obtained by IPRO and $\varepsilon^\ast_t$ the true approximation error at timestep $t$. Then the following inequality holds,
\begin{equation}
\label{eq:upper-bound}
    \varepsilon^\ast_t \leq \max_{\vu \in \uset_t} \min_{\vv \in \vset_t} \|\vu - \vv\|_\infty.
\end{equation}
\end{theorem}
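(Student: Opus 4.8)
The plan is to replace the supremum over the infinite true front $\truepf$ by the maximum over the finite corner set $\uset_t$, using two structural facts that should be maintained as loop invariants of IPRO. Throughout, abbreviate $g(\vy) = \min_{\vv \in \vset_t}\|\vy - \vv\|_\infty$ for the $L^\infty$ distance of a point $\vy$ to the current approximation, so that $\error^\ast_t = \sup_{\vast \in \truepf} g(\vast)$ and the right-hand side is $\max_{\vu \in \uset_t} g(\vu)$. The Pareto-optimal points that have already been recovered lie in $\vset_t$ and contribute $g = 0$, so it suffices to bound $g(\vast)$ for the undiscovered points of $\truepf$, which are exactly those still lying in the uncovered region $\bbox \setminus (\dset \cup \iset)$.

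First I would establish a covering invariant: every $\vast \in \truepf$ is dominated by some upper bound, i.e.\ there is $\vu \in \uset_t$ with $\vu \pde \vast$. This is where the construction of $\uset$ pays off, since $\uset_t$ is precisely the set of inner corners of the infeasible set $\iset$, so the complement of $\iset$ inside $\bbox$ is the union of the down-sets $\{\vy : \vu \pde \vy\}$ over $\vu \in \uset_t$; an achievable (hence feasible) $\vast$ never lies in $\iset$, so it sits in one such down-set. I would prove this by induction on the iterations of the while loop, checking that each branch of the \textsc{update} procedure (success and failure of $\oracle$) preserves both that $\uset_t$ are the corners of $\iset$ and that $\iset$ only ever grows by adding genuinely infeasible points: those strictly dominating a returned value, or strictly dominating a referent on which $\oracle$ failed, up to tolerance.

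The crux is the second step: for the dominating corner $\vu \pde \vast$, I need $g(\vast) \le g(\vu)$, after which $g(\vast) \le g(\vu) \le \max_{\vu' \in \uset_t} g(\vu')$ and taking the supremum over $\vast$ closes the argument. Monotonicity of $g$ under domination is \emph{false} for a general antichain $\vset_t$ (a point can be dominated by $\vu$ yet sit farther from $\vset_t$ than $\vu$ does), so the proof must use more than $\vu \pde \vast$. The property I would exploit is that $\vast$ is simultaneously sandwiched from below: being Pareto optimal and undiscovered, it also dominates a lower corner $\vl \in \lset_t$ which is itself dominated by a found point $\vv \in \vset_t$. Thus $\vast$ lies in the axis-aligned box hanging beneath $\vu$ whose floor is pinned to $\vset_t$, and coordinate by coordinate the relation $\vast \pde \vl$ controls how far below $\vv$ it can drop while $\vu \pde \vast$ controls how far above, both gaps being bounded by $g(\vu)$; hence $\vast$ lies within $g(\vu)$ of $\vset_t$.

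I expect this last covering step to be the main obstacle, precisely because it is where the naive domination-monotonicity breaks and one must invoke the paired lower/upper corner structure (the rectangle decomposition that IPRO-2D makes explicit, together with its $d$-dimensional analogue). The cleanest route is probably to prove, as a standalone lemma and again by induction on the loop, the stronger invariant that $\bbox \setminus (\dset \cup \iset) \subseteq \bigcup_{\vv \in \vset_t} \{\vy : \|\vy - \vv\|_\infty \le \max_{\vu \in \uset_t} g(\vu)\}$; the theorem then follows immediately because every undiscovered $\vast$ lies in the uncovered region. Verifying that this ball-covering invariant survives both the success and failure updates, in particular that the newly created corners never open a region farther from $\vset_t$ than the current bound, is the delicate bookkeeping I would budget the most effort for.
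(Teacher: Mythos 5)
Your first step coincides with the paper's: the covering claim that every undiscovered $\vast \in \truepf \setminus \vset_t$ is dominated by some $\vu \in \uset_t$ is exactly \cref{lemma:upper}, which the paper derives directly from the definitions of the reachable boundaries (a strictly increasing segment from $\vast$ to the ideal must cross $\boundary\iset_t$) rather than by induction on the loop, but the content is the same. Where you diverge is the second step, and your suspicion is well placed: the paper passes from $\vu \pde \vast$ to $\min_{\vv \in \vset_t}\|\vast - \vv\|_\infty \leq \min_{\vv \in \vset_t}\|\vu - \vv\|_\infty$ with a single sentence, and, as you note, this monotonicity under dominance is false for the symmetric $\ell_\infty$ distance to an antichain.

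However, the repair you propose — sandwiching $\vast$ between a lower corner $\vl$ pinned to $\vset_t$ and the upper corner $\vu$ — does not close the gap. Take $d=2$ with true front $\{(0,1),\,(5,0.5),\,(10,0)\}$. After initialisation, $\vset_0 = \{(0,1),(10,0)\}$, $\uset_0 = \{(10,1)\}$ and $(0,0) \in \lset_0$. The sandwich holds for $\vast = (5,0.5)$: we have $(10,1) \pde \vast \pd (0,0)$ and $(0,0)$ is dominated by both members of $\vset_0$. Yet $\min_{\vv\in\vset_0}\|\vast - \vv\|_\infty = 5$ while $\max_{\vu\in\uset_0}\min_{\vv\in\vset_0}\|\vu - \vv\|_\infty = \|(10,1)-(10,0)\|_\infty = 1$, so your ball-covering invariant fails already at $t=0$, and so does inequality \eqref{eq:upper-bound} read literally with the symmetric norm: the gap between $\vl$ and $\vu$ in the \emph{wide} coordinate is what pushes $\vast$ far from $\vset_t$, and no bookkeeping on the corners controls it. The step does go through — and the paper's one-line deduction becomes valid — if the approximation error is measured by the one-sided dominance gap $\min_{\vv\in\vset_t}\max_{j}\max(0, v^\ast_j - v_j)$, which \emph{is} monotone under $\pde$ (if $\vu \pde \vast$ then $(v^\ast_j - v_j)_+ \leq (u_j - v_j)_+$ coordinatewise); this matches the $\varepsilon$-Pareto-front notion of the complexity literature the paper cites and is presumably the intended reading. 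So: your first half is the paper's argument, your diagnosis of the crux is sharper than the paper's own treatment of it, but your proposed fix would fail, and the honest resolution is to change the metric rather than to strengthen the corner invariants.
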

One can verify this in \cref{fig:ipro-stage2} where $\uset = \{\vu_1, \vu_3, \vu_4\}$ contains the upper bounds on the remaining Pareto optimal solutions. Note that while approximate Pareto fronts are commonly computed with regard to the $\normmax$ norm, this result can be extended to other metrics.

\subsection{Convergence to a Pareto front}
As IPRO progresses, the sequence of errors generated by \cref{th:approximation-guarantee} can be shown to be monotonically decreasing and converges to zero. Intuitively, this can be observed in \cref{fig:ipro-stage2} where the retrieval of a new Pareto optimal solution reduces the distance to the upper bounds. Additionally, the closure of a section, illustrated in \cref{fig:ipro-stage3}, results in the removal of the upper point which subsequently reduces the remaining search space. Since IPRO terminates when the true approximation error is guaranteed to be at most equal to the tolerance $\tau$, this results in a $\tau$-Pareto front.

\begin{theorem}
\label{th:ipro-convergence}
Given a Pareto oracle $\oracle$ and tolerance $\tau > 0$, IPRO converges to a $\tau$-Pareto front in a finite number of iterations. For a Pareto oracle $\oracle$ with tolerance $\tau = 0$, IPRO converges almost surely to the exact Pareto front as $t \to \infty$.
\end{theorem}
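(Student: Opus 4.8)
The plan is to control the true approximation error $\varepsilon^\ast_t$ through the computable upper bound of \cref{th:approximation-guarantee}. Write $\varepsilon_t = \max_{\vu \in \uset_t}\min_{\vv \in \vset_t}\|\vu - \vv\|_\infty$, so that $\varepsilon^\ast_t \leq \varepsilon_t$ at every iteration. The first step is to establish that $(\varepsilon_t)_t$ is monotonically non-increasing. This follows directly from the \textproc{update} routine: $\vset_t$ only ever grows, so $\min_{\vv \in \vset_t}\|\cdot - \vv\|_\infty$ can only decrease; and every point introduced into $\uset_{t+1}$ is obtained from a point $\vu$ of $\uset_t$ by replacing a single coordinate $j$ with the corresponding coordinate of the newly found $\vast$ on success, or of the closed referent $\vl$ on failure. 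Since that replacement coordinate is strictly smaller, each new upper point is dominated by its predecessor $\vu$ and therefore lies no further from the (enlarged) $\vset$. Taking the maximum over $\uset$ then yields $\varepsilon_{t+1} \leq \varepsilon_t$.

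For $\tau > 0$, finite termination follows from a packing argument inside the compact bounding box $\bbox$. The loop queries a referent only while $\varepsilon_t > \tau$, and \textsc{Select} returns a lower bound $\vl \in \lset$ associated with an upper bound $\vu$ realising this error, so that $\min_{\vv\in\vset}\|\vu - \vv\|_\infty > \tau$. I would then argue that each iteration permanently removes a portion of $\bbox$ whose scale is bounded below in terms of $\tau$: on success the oracle returns a weakly Pareto optimal $\vast > \vl$ that is added to $\vset$ and is $\ell_\infty$-separated by more than $\tau$ from every previously recorded point, whereas on failure the region strictly dominating $\vl$ is added to $\iset$ and removed permanently from contention. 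Because $\bbox$ admits only finitely many pairwise $\tau$-separated points — at most $\fun{\|\ideal - \nadir\|_\infty/\tau + 1}^d$ of them — only finitely many successes and failures can occur before $\varepsilon_t \leq \tau$, at which point \cref{th:approximation-guarantee} certifies a $\tau$-Pareto front. The main obstacle is making this separation claim precise for general $d \geq 2$: unlike the bi-objective case, a single feasible solution may dominate several lower bounds simultaneously, so one must verify that the bookkeeping of $\lset$, $\uset$ and $\cset$ inside \textproc{update} preserves the invariant that the consumed regions cannot shrink below a $\tau$-scale before the stopping criterion triggers.

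For $\tau = 0$ the loop condition $\varepsilon_t > 0$ never allows termination, so it suffices to show $\varepsilon_t \to 0$. Being monotone non-increasing and bounded below by $0$, the sequence converges to some $\varepsilon_\infty \geq 0$, and I would show $\varepsilon_\infty = 0$ almost surely by contradiction. If $\varepsilon_\infty > 0$ on an event of positive probability, then from some iteration onward there persists an upper bound $\vu \in \uset$ with $\min_{\vv\in\vset}\|\vu - \vv\|_\infty \geq \varepsilon_\infty$, whose associated region is never resolved. Because \textsc{Select} is fair — every lower bound that keeps contributing to the error is eventually queried with probability one — this region is queried infinitely often, and each such query either returns a weakly Pareto optimal point strictly inside it (lowering the matched upper bound toward $\vset$) or closes it into $\iset$; in both cases its contribution drops below $\varepsilon_\infty$, contradicting persistence. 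Hence $\varepsilon_\infty = 0$ almost surely, and since $\varepsilon^\ast_t \leq \varepsilon_t \to 0$, every Pareto optimal vector is approximated arbitrarily well, i.e.\ $\vset_t$ converges to $\truepf$.
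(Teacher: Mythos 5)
Your monotonicity step and the overall structure (control $\varepsilon^\ast_t$ via the computable bound $\varepsilon_t$ and drive the latter down) match the paper. For $\tau > 0$ you take a genuinely different route: the paper bounds the depth of the tree of lower bounds spawned by successive queries (each child lower bound improves on its parent by at least $\tau - \bar\tau$ in one coordinate, so every branch closes in finitely many steps), whereas you bound the number of \emph{successes} by an $\ell_\infty$-packing argument in $\bbox$. Your separation claim does hold — no $\vv \in \vset_t$ strictly dominates a lower bound $\vl$, so every previous point falls short of $\vl$ in some coordinate $j$, and $\vast_j \geq l_j + \tau$ gives $\|\vast - \vv\|_\infty \geq \tau$ — and this is arguably a cleaner way to count successes. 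But, as you flag yourself, it says nothing about the number of \emph{failures}: a region $\{\vv \mid \vv > \vl\}$ closed on failure can be arbitrarily small, so the packing bound does not cover it. The missing observation is that lower bounds are only created on success, finitely many per success, so once the success count is finite the total number of lower bounds ever placed in $\lset$ — and hence the number of failures — is finite as well. With that added, your $\tau>0$ argument closes, albeit with a much weaker iteration bound than the paper's complexity theorem.

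The $\tau = 0$ case contains the real gap. You assert that if an upper bound $\vu$ persists with error at least $\varepsilon_\infty$, then each query of its region "drops its contribution below $\varepsilon_\infty$". That is false for a weak oracle with zero tolerance: a success need only return some $\vast > \vl$, which may exceed $\vl$ by an arbitrarily small margin, so each query can shrink the error by a vanishing amount and the monotone sequence can stagnate strictly above zero. Concretely, if the Pareto front contains a segment that does not include its endpoint, the returned points accumulate at the open end while the relevant upper bound remains at $\ell_\infty$-distance at least the size of the jump from every point found; the error never falls below the largest such gap. This is exactly why the paper imposes the assumption that the Pareto front is a finite union of paths (closed segments), together with the assumptions that the oracle can return any prescribed Pareto optimal solution and that referent selection is fair. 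The paper's proof then proceeds differently from yours: it reuses the finite-termination result to extract, for any $\varepsilon$ below the putative infimum, a \emph{finite} set of Pareto optimal points whose inclusion in $\vset_t$ would force the error below $\varepsilon$, and shows that each of these points is almost surely eventually found. Your contradiction argument cannot be repaired without importing assumptions of this kind.
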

\begin{proofsketch}
As a corollary to \cref{th:approximation-guarantee} we first show that the sequence of errors produced by IPRO is monotonically decreasing. For $\tau > 0$, this sequence is further proven to converge to zero in a finite number of iterations. Since IPRO stops when the approximation error is at most $\tau$, this results in a $\tau$-Pareto front. 

For $\tau = 0$, we demonstrate under mild assumptions that the sequence of errors almost surely has its infimum at zero. By the monotone convergence theorem, we can therefore guarantee that IPRO almost surely converges to the exact Pareto front.
\end{proofsketch}

Finally, we analyse the complexity of IPRO for $\tau > 0$. As shown in \cref{th:ipro-convergence}, IPRO is guaranteed to terminate in a finite number of iterations; however, this number could still be arbitrarily large depending on $\tau$ and the number of objectives $d$. Similar to related work, we find that IPRO exhibits polynomial complexity in $\tau$ but exponential complexity in $d$ \cite{papadimitriou2000approximability,chatterjee2006markov}.

\begin{theorem}
\label{th:ipro-complexity}
Given a Pareto oracle $\oracle$ and tolerance $\tau > 0$, let $\forall j \in [d], k_j = \ceil{\nicefrac{(v^\text{i}_j - v^\text{n}_j)}{\tau}}$. IPRO constructs a $\tau$-Pareto front in at most
\begin{equation}
    \prod_{j=1}^d k_j - \prod_{j=1}^d (k_j - 1)
\end{equation}
iterations which is a polynomial in $\tau$ but exponential in the number of objectives $d$.
\end{theorem}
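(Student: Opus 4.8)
The plan is to reduce the iteration count to a counting problem over a uniform grid of resolution $\tau$ placed over the bounding box $\bbox$. First I would partition $\bbox = \prod_{j=1}^{d}[v^\text{n}_j, v^\text{i}_j]$ into axis-aligned cells of side length $\tau$; this produces exactly $k_j = \ceil{\nicefrac{(v^\text{i}_j - v^\text{n}_j)}{\tau}}$ cells along axis $j$ and $\prod_{j=1}^{d} k_j$ cells in total. Index each cell by $(i_1,\dots,i_d)$ with $i_j \in \{1,\dots,k_j\}$, and group the cells into \emph{diagonal chains}: the orbits of the unit shift $c \mapsto c + \vone$. A chain is determined by its minimal element, which is a cell having $i_j = 1$ for at least one coordinate $j$; hence the number of chains equals $\prod_{j=1}^{d} k_j - \prod_{j=1}^{d}(k_j - 1)$, since the cells with every $i_j \geq 2$ are precisely the non-minimal ones. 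The whole argument then amounts to charging each iteration of the while-loop of \cref{alg:ipro} to a distinct chain.

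Second, I would pin down which cells matter. Throughout IPRO the dominated set $\dset$ is downward-closed and anchored at $\nadir$, while the infeasible set $\iset$ is upward-closed and anchored at $\ideal$; the true Pareto front $\truepf$ therefore lives inside the monotone staircase band separating them. Call a cell \emph{crossed} when its lower corner lies in the downward-closed region but its upper corner does not. The key observation is that along any diagonal chain at most one cell is crossed: if a cell is crossed, its upper grid corner is excluded from the downward-closed region, and by downward-closure every strictly larger grid point --- in particular the lower corner of any later cell on the same chain --- is excluded as well, so no later cell on that chain can be crossed. Combined with the chain count, this already shows that at most $\prod_j k_j - \prod_j(k_j-1)$ cells are ever crossed.

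Third, I would make the charging explicit and injective. When IPRO queries a referent $\vl \in \lset$, I charge the iteration to the chain of the grid cell whose lower corner is $\vl$ rounded down to the grid. If the oracle $\oracle$ fails, every point strictly dominating $\vl$ is declared infeasible or within tolerance $\tau$, which resolves that entire cell together with all cells above it on its chain; if it succeeds with some $\vast$, then every point $\pde \vast$ joins $\dset$ and every point strictly dominating $\vast$ joins $\iset$, again resolving the charged cell. Because the cell side and the tolerance both equal $\tau$, a resolved cell can never host a future referent, so each chain is charged at most once. The count of chains then yields the bound of at most $\prod_{j=1}^{d} k_j - \prod_{j=1}^{d}(k_j-1)$ while-loop iterations, with the $2d$ single-objective solves of the initialisation contributing only a lower-order additive term.

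The hard part will be the injectivity of the charging: I must argue carefully that processing $\vl$ genuinely removes its cell from the unresolved band in both the success and failure branches --- including the case where a single $\vast$ dominates several elements of $\lset$ at once and the \textproc{update} procedure reshapes $\lset$ and $\uset$ --- so that the chain it occupies is never revisited. The combinatorial count is then a clean consequence of the one-crossed-cell-per-chain lemma, and the asymptotics follow immediately: with each $k_j = O(\nicefrac{1}{\tau})$ one has $\prod_j k_j - \prod_j(k_j-1) = O\big(d\,\tau^{-(d-1)}\big)$, which is polynomial in $\nicefrac{1}{\tau}$ for fixed $d$ yet exponential in the number of objectives $d$, matching \cref{th:ipro-convergence} and the cited bounds.
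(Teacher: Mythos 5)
Your grid, your diagonal-chain decomposition, and the identity ``number of chains $=$ number of cells with some index equal to $1$ $=\prod_j k_j-\prod_j(k_j-1)$'' are all correct, and this chain decomposition is the clean Dilworth-style dual of what the paper does (the paper bounds the largest antichain of cells by a shifting operation that pushes any antichain onto the upper facets of $\bbox$). The genuine gap is in your third step: charging each iteration to the chain of the \emph{referent's} cell is not injective, and your ``one crossed cell per chain'' lemma does not rescue it, because that lemma only holds at a fixed time --- as $\dset$ grows, the crossed cell on a given chain migrates upward, so the same chain can host referents at several distinct iterations. Concretely, take $d=2$, $\tau=1$, $\nadir=(0,0)$: query $\vl_1=(0,0)$ and receive $\vv_1=(10,1)$, which produces new lower bounds $(10,0)$ and $(0,1)$; query $(0,1)$ and receive $(1,10)$, which produces the new lower bound $(1,1)$; querying $(1,1)$ then charges the diagonal chain $\{[0,1]^2,[1,2]^2,\dots\}$ a second time, and continuing this staircase walk charges that single chain $\Theta(k)$ times (with $k_1=k_2=k$). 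The run as a whole still respects the $2k-1$ bound, but only because the other chains are visited correspondingly less often --- which is exactly what a per-chain, per-referent charging cannot capture. Note also that your closing remark that a resolved cell ``can never host a future referent'' is true but irrelevant to injectivity: the second charge to the chain comes from a \emph{different}, previously unresolved cell on that chain.

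The repair is to charge iterations to the cells of the \emph{returned solutions} rather than of the referents, which is essentially the paper's route: it first argues that a worst-case run can be reordered so that every oracle evaluation is successful (a failed evaluation only deletes a lower bound, so it can be postponed to the end without shortening the run), and then observes that the returned values are weakly Pareto optimal, hence no returned value strictly dominates another and each successful call lands in a fresh cell because the oracle guarantees an improvement of at least $\tau$ in some coordinate over a previously generated lower bound. The iteration count is therefore bounded by the size of an antichain of cells under strict dominance, and at that point your chain decomposition applies verbatim --- an antichain meets each diagonal chain at most once --- yielding $\prod_{j=1}^d k_j-\prod_{j=1}^d(k_j-1)$. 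So the combinatorial half of your argument is sound (and arguably tidier than the paper's shifting argument), but the reduction from ``number of iterations'' to ``size of an antichain of cells'' is missing, and your referent-based charging does not supply it. Your final asymptotics, $O\bigl(d\,\tau^{-(d-1)}\bigr)$, are fine once the bound itself is established.
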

\begin{proofsketch} 
This bound is derived by constructing a worst-case scenario for IPRO in the grid induced by $\nadir, \ideal$ and $\tau$. We show that the worst case arises when covering $d$ facets with Pareto optimal solutions. The resulting bound is obtained by calculating the original number of cells in the grid, $\prod_{j=1}^d k_j$, and subtracting the number of cells in the smaller grid that excludes the Pareto optimal facets, $\prod_{j=1}^d (k_j - 1)$. 
\end{proofsketch}

\subsection{Dealing with imperfect Pareto oracles}
\label{sec:imperfect-oracles}
While IPRO relies on a Pareto oracle that solves the scalar problem exactly, this condition cannot always be guaranteed in practice when dealing with function approximators or heuristic solvers. To overcome this, we introduce a backtracking procedure that maintains the sequence $\left\{\left(\vl_t, \vv_t\right)\right \}_{t \in \mathbb{N}}$ of lower bounds and retrieved solution in each iteration. When, at iteration $n$, the returned solution $\vv_{n}$ strictly dominates a point $\vc \in \cset_n$ or $\vast \in \vset_n$, it indicates an incorrect oracle evaluation in a previous iteration and we initiate a replay of the sequence.

Let $\bar{t}$ represent the time step when the incorrect result was returned. For the subsequence $\left\{\left(\vl_t, \vv_t\right)\right \}_{0 \leq t < \bar{t}}$, we replay the pairs using standard IPRO updates and treat $\vv_{n}$ as the solution retrieved for $\vl_{\bar{t}}$. For the subsequent pairs $\left\{\left(\vl_t, \vv_t\right)\right \}_{\bar{t} < t <  n}$, we verify for each $(\vv_t, \vl_t)$ whether the original evaluation succeeded. If so, $\vv_t$ was weakly Pareto optimal, and if a new lower bound $\vl'$ exists that is dominated by $\vv_t$, we perform an update with $(\vl', \vv_t)$. If the evaluation failed, $\vl_t$ was marked as complete, and we check whether a new lower bound $\vl'$ dominates $\vl_t$. If so, $\vl'$ is also marked as complete. This mechanism corrects earlier mistakes and reuses previous iteration outcomes as efficiently as possible.


\section{Pareto oracle}
\label{sec:pareto-oracle}
Obtaining a solution in a designated region is central to IPRO. We introduce Pareto oracles for this purpose and derive theoretically sound methods that lead to effective implementations in practice.

\subsection{Formalisation} 
\label{sec:po-definition}
In each iteration, IPRO queries a Pareto oracle with a referent from the lower bounds to identify a new weakly Pareto optimal policy in the target region. We define two Pareto oracle variants that differ in the quality of the returned policy and their adherence to the target region. When zero tolerance is required, a \emph{weak} Pareto oracle returns weakly Pareto optimal solutions.

\begin{definition}
\label{def:weak-pareto-oracle}
A weak Pareto oracle $\oracle$ with tolerance $\tau = 0$ maps a referent $\vr \in \mathbb{R}^d$ to a weakly Pareto optimal policy $\policy \in \policies$ such that $\vpi > \vr$ or returns $\textsc{False}$ when no such policy exists.
\end{definition}

While \cref{def:weak-pareto-oracle} requires no tolerance, limiting solutions to weakly Pareto optimal ones may be restrictive in practice. To address this, we define \emph{approximate} Pareto oracles, which guarantee Pareto optimal solutions but require a strictly positive tolerance. This ensures that each iteration yields meaningful progress—either identifying a new Pareto optimal solution with at least minimal improvement over the lower bound or closing an entire section. Since these oracles return Pareto optimal rather than merely weakly optimal solutions, fewer evaluations are required overall.

\begin{definition}
\label{def:approximate-pareto-oracle}
An approximate Pareto oracle $\oracle$ with intrinsic tolerance $\bar\tau \geq 0$ and user-provided tolerance $\tau > \bar\tau$ maps a referent $\vr \in \mathbb{R}^d$ to a Pareto optimal policy $\policy \in \policies$ such that $\vpi \pde \vr + \tau$ or returns $\textsc{False}$ when no such policy exists.
\end{definition}

Unlike weak Pareto oracles, approximate Pareto oracles incorporate an \emph{intrinsic} tolerance $\bar\tau$ alongside a user-defined tolerance $\tau$. Intuitively, $\bar\tau$ represents the minimal adjustment needed to ensure the oracle returns solutions strictly within the target region. The user-defined tolerance, being strictly greater, determines the minimal improvement necessary to justify further exploration. In some implementations, $\bar\tau$ is zero, allowing the user to freely select any tolerance (see \cref{sec:principled-implementations}).

To illustrate the difference between a weak and approximate Pareto oracle, we show a possible evaluation of both oracles in \cref{fig:weak-approx-po}. We note that related concepts have been studied in multi-objective optimisation \citep{papadimitriou2000approximability} and planning \citep{chatterjee2006markov}.

\begin{figure}[tb]
    \centering
        \begin{subfigure}[b]{0.34\textwidth}
        \centering
        \includegraphics[trim={0cm 0cm 0cm 0cm},clip,width=\textwidth]{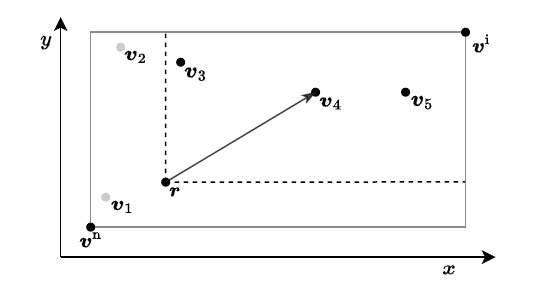}
        \subcaption{A weak Pareto oracle.}
        \label{fig:weak-po}
    \end{subfigure}
    \begin{subfigure}[b]{0.34\textwidth}
        \centering
        \includegraphics[trim={0cm 0cm 0cm 0cm},clip,width=\textwidth]{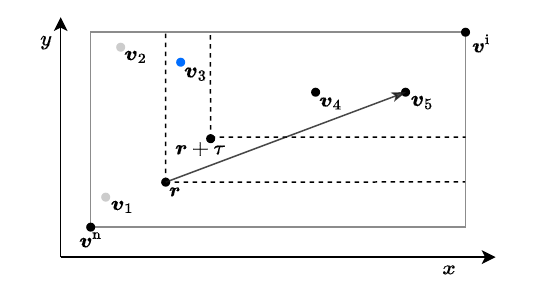}
        \subcaption{An approximate Pareto oracle.}
        \label{fig:approx-po}
    \end{subfigure}
    \Description{The figure illustrates two types of Pareto oracles: a weak Pareto oracle (top) and an approximate Pareto oracle (bottom).}
    \caption{Solutions inside the target region are black, while solutions outside the target region are grey. (a) The weak Pareto oracle returns $\vv_4$, which is in the target region but is only weakly Pareto optimal as it is dominated by $\vv_5$. (b) The approximate Pareto oracle returns a Pareto optimal solution $\vv_5$, but cannot find $\vv_3$, shown in blue.} 
    \label{fig:weak-approx-po}
\end{figure}

\subsection{Relation to achievement scalarising functions}
\label{sec:relation-asf}
In \cref{sec:preliminaries}, we introduced order representing and order approximating achievement scalarising functions (ASFs) and their role in obtaining (weakly) Pareto optimal solutions. Here, we demonstrate their direct application in constructing Pareto oracles.

We first show that evaluating a weak Pareto oracle $\oracle$ can be framed as maximising an order representing ASF over a set of allowed policies $\policies$. Since such ASFs guarantee that their maximum is reached within the target region at some weakly optimal solution, \cref{th:weak-po} follows immediately.

\begin{theorem}
\label{th:weak-po}
Let $\sr$ be an order representing ASF. Then $\oracle(\vr) = \argmax_{\policy \in \policies} \sr(\vpi)$ with tolerance $\tau = 0$ is a valid weak Pareto oracle.
\end{theorem}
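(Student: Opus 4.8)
The plan is to verify that the candidate map fulfils both obligations of \cref{def:weak-pareto-oracle}: on a referent $\vr$ that admits a strictly dominating policy it must return a \emph{weakly} Pareto optimal policy whose value strictly dominates $\vr$, and otherwise it must report \textsc{False}. Fix a maximiser $\policy^\ast \in \argmax_{\policy \in \policies}\sr(\vpi)$, write $\vast = \vv^{\policy^\ast}$ for its value, and read the oracle as succeeding (and returning $\policy^\ast$) exactly when $\sr(\vast) > 0$, equivalently when $\vast > \vr$, and failing otherwise. The first thing to show is that every such maximiser is weakly Pareto optimal: were it not, some $\policy' \in \policies$ would satisfy $\vv^{\policy'} > \vast$, and since an order representing $\sr$ is strictly increasing this would give $\sr(\vv^{\policy'}) > \sr(\vast)$, contradicting maximality. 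Hence the returned policy is weakly Pareto optimal in either branch, discharging the quality requirement.

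It then remains to show that the success test detects \emph{precisely} the existence of a strictly dominating policy. Here I would use the target-region characterisation of order representing ASFs, $\sr(\vv) > 0 \iff \vv > \vr$. If some $\policy'$ has $\vv^{\policy'} > \vr$, then $\sr(\vv^{\policy'}) > 0$, so by maximality $\sr(\vast) \geq \sr(\vv^{\policy'}) > 0$ and therefore $\vast > \vr$; the oracle then correctly returns the weakly Pareto optimal $\policy^\ast$ inside the target region. Conversely, if no policy strictly dominates $\vr$, then $\sr(\vpi) \leq 0$ for every $\policy \in \policies$, so in particular $\sr(\vast) \leq 0$ and the oracle returns \textsc{False}, exactly as the definition requires.

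The main obstacle is the mismatch between the weak dominance $\pde$ used to delineate the target region in the definition of an order representing ASF and the strict dominance $>$ demanded of the oracle's output. Closing it requires sharpening the stated property $\sr(\vv) \geq 0 \iff \vv \pde \vr$ to its strict counterpart $\sr(\vv) > 0 \iff \vv > \vr$, which is what makes the sign test exact; this follows from the defining structure of the scalarising function and is immediate for the Chebyshev function $\min_j (v_j - r_j)$, whose value is positive iff every coordinate gap $v_j - r_j$ is. A minor secondary point is ensuring the $\argmax$ is attained, which I would justify under the standard assumption that the set of attainable value vectors is compact.
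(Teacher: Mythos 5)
Your argument is essentially the paper's: you use strict monotonicity of $\sr$ to show by contradiction that any maximiser is weakly Pareto optimal, and the sign of $\sr(\vast)$ together with the target-region characterisation to show that the oracle succeeds exactly when a dominating policy exists. The one place you go beyond the paper is the ``sharpening'' $\sr(\vv) > 0 \iff \vv > \vr$. The forward implication does follow axiomatically (strict increase plus $\sr(\vr)=0$), and that is all you need to show the oracle cannot wrongly return \textsc{False}. The converse, however, is not a consequence of the definition of an order representing ASF: the definition only pins down $\{\sr \geq 0\} = \vr + \rpos$ and constrains $\sr$ on pairs related by \emph{strict} dominance, so a (discontinuous) order representing ASF can assign a strictly positive value to a point on the boundary of the target region. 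Your claim does hold for the Chebyshev instance actually used, and to your credit you are here addressing a weak-versus-strict mismatch that the paper's own proof simply glosses over (it only establishes $\vast \pde \vr$, not $\vast > \vr$ as \cref{def:weak-pareto-oracle} demands), so this is a refinement attempt rather than a gap relative to the paper --- but as stated it should be restricted to the concrete ASF or added as an explicit hypothesis. The attainment-of-argmax remark is likewise a detail the paper leaves implicit.
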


This ensures that weakly optimal solutions can be obtained by proposing referents to an order representing ASF. However, practical considerations may lead us to favour an order approximating ASF, which yields Pareto optimal solutions instead. We demonstrate in \cref{th:approx-po} that such ASFs can indeed be applied to construct approximate Pareto oracles.

\begin{theorem}
\label{th:approx-po}
Let $\sr$ be an order approximating ASF and let $\vl \in \mathbb{R}^d$ be a lower bound such that only referents $\vr$ are selected when $\vr \pde \vl$. Then $\sr$ has an inherent oracle tolerance $\bar\tau > 0$ and for any user-provided tolerance $\tau > \bar\tau$, $\oracle(\vr) = \argmax_{\policy \in \policies} s_{\vr + \tau}(\vpi)$ is a valid approximate Pareto oracle.
\end{theorem}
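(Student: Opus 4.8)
The plan is to verify that the proposed construction meets both clauses of \cref{def:approximate-pareto-oracle}: whenever it returns a policy, that policy must be Pareto optimal with its value in the target region of $\vr$, and whenever it returns \textsc{False}, no admissible policy may be overlooked. I will work with the augmented Chebyshev function instantiated at the shifted referent $\vr + \tau$, writing $s_{\vr+\tau}(\vpi) = \min_{j} \lambda_j\big(v^\policy_j - r_j - \tau\big) + \rho \sum_{j} \lambda_j\big(v^\policy_j - r_j - \tau\big)$ for positive weights $\lambda_j$ and augmentation coefficient $\rho > 0$, assuming the achievable value set $\{\vpi : \policy \in \policies\}$ is compact so the maximiser is attained (otherwise one argues via $\error$-maximisers). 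The Pareto-optimality requirement is then immediate: since $s_{\vr+\tau}$ is order approximating it is strongly increasing, so if the returned $\policy^\ast = \argmax_{\policy} s_{\vr+\tau}(\vpi)$ were Pareto dominated by some $\policy'$, we would have $s_{\vr+\tau}(\vpip) > s_{\vr+\tau}(\vv^{\policy^\ast})$, contradicting optimality. Hence every returned policy is Pareto optimal.

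The crux is quantifying the intrinsic tolerance $\bar\tau$ and showing the maximiser stays inside the target region of the \emph{original} referent $\vr$. Suppose the $(\vr+\tau)$-target region contains an achievable value, so that $\max_\policy s_{\vr+\tau}(\vpi) \geq 0$, and let $\vv^\ast$ be the maximiser. Rearranging the definition gives $\min_j \lambda_j(v^\ast_j - r_j - \tau) \geq -\rho \sum_j \lambda_j (v^\ast_j - r_j - \tau)$. Here the restriction $\vr \pde \vl$ together with $\vv^\ast \in \bbox$, i.e.\ $\vv^\ast \pde \ideal$, is essential: it bounds each term by $v^\ast_j - r_j - \tau \leq \ideal_j - l_j$, so the augmentation sum is controlled \emph{uniformly} over all admissible referents by $\sum_j \lambda_j(\ideal_j - l_j)$. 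Setting $\bar\tau \defas \rho \sum_j \lambda_j(\ideal_j - l_j) / \min_j \lambda_j > 0$ then yields $v^\ast_j \geq r_j + \tau - \bar\tau$ for every $j$, i.e.\ $\vv^\ast \pde \vr + (\tau - \bar\tau)\1$. Because $\tau > \bar\tau$ by hypothesis, this strictly dominates $\vr$, placing $\vv^\ast$ within the target region with a guaranteed margin. This step — converting non-negativity of the scalarised value into a per-coordinate lower bound by taming the augmentation term through the bounding box — is the main obstacle, and it is precisely why the referents must be bounded below by $\vl$; without such a bound the augmentation term, and hence $\bar\tau$, could be arbitrarily large.

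It remains to justify the \textsc{False} branch. If instead $\max_\policy s_{\vr+\tau}(\vpi) < 0$, then no achievable value can dominate $\vr + \tau$: any $\vpi \pde \vr + \tau$ would render both the $\min$ term and the augmentation term non-negative, forcing $s_{\vr+\tau}(\vpi) \geq 0$, a contradiction. Returning \textsc{False} therefore correctly certifies that no policy satisfies $\vpi \pde \vr + \tau$, matching the failure clause of \cref{def:approximate-pareto-oracle}. Combining the three parts, the construction returns a Pareto optimal policy dominating the referent whenever the $(\vr+\tau)$-target region is non-empty and reports \textsc{False} otherwise, so it is a valid approximate Pareto oracle with intrinsic tolerance $\bar\tau > 0$ for every user-provided $\tau > \bar\tau$.
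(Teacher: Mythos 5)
Your argument is sound and, for the augmented Chebyshev instance, actually sharper than the paper's: the chain from $s_{\vr+\tau}(\vv^\ast)\geq 0$ to $\min_j \lambda_j(v^\ast_j - r_j - \tau) \geq -\rho\sum_j\lambda_j(\ideal_j - l_j)$ and hence $\vv^\ast \pde \vr + (\tau-\bar\tau)\1$ is correct, your closed-form $\bar\tau = \rho\sum_j\lambda_j(\ideal_j-l_j)/\min_j\lambda_j$ is a legitimate (positive) intrinsic tolerance, and your uniform bound $v^\ast_j - r_j - \tau \leq \ideal_j - l_j$ handles all referents $\vr \pde \vl$ in one stroke, where the paper needs a separate rigid-transformation argument to transfer its containment from $\vl$ to dominating referents. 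Your Pareto-optimality and \textsc{False}-branch steps coincide with the paper's. The two conclusions also match: both proofs only guarantee $\vv^\ast \pde \vr + (\tau - \bar\tau)$ rather than $\vv^\ast \pde \vr + \tau$, a looseness inherited from \cref{def:approximate-pareto-oracle} itself.

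The one genuine shortfall is generality. The theorem is stated for an arbitrary order approximating ASF, and the paper's proof works at that level: it uses only the defining cone containment $\vr + \mathbb{R}^d_{\bar\delta} \subset \{\vv \mid \sr(\vv)\geq 0\} \subset \vr + \rapprox$ with $\rapprox = \{\vv \mid \dist(\vv,\rpos)\leq \delta\|\vv\|\}$, defines $\bar\tau$ as the infimal shift making $\left(\vl+\tau+\rapprox\right)\cap\{\vv \mid \ideal \pde \vv\}$ fit inside $B(\vl,\ideal)$, and shows this infimum is finite by arguing no point of $\vl+\rapprox$ other than $\vl$ is dominated by $\vl$. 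Your derivation instead opens up the specific functional form $\min_j + \rho\sum_j$, so it proves the theorem only for the augmented Chebyshev family. To recover the full statement you would either need to redo the bounding argument abstractly from the $\rapprox$ containment (essentially reproducing the paper's geometric construction), or explicitly restrict the theorem's hypothesis to the ASF of \cref{eq:aasf}. Since that is the only order approximating ASF the paper actually deploys, your version loses nothing in practice, but as a proof of the stated theorem it is incomplete on this one axis.
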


By definition, an order approximating ASF attains its maximum at a Pareto optimal solution. However, since such ASFs assign non-negative values to solutions outside the target region, this maximum may occur outside the desired area. To mitigate this, we introduced the inherent tolerance in \cref{def:approximate-pareto-oracle}. Ensuring $\tau > \bar\tau$ guarantees that new solutions remain within the correct region. Since directly determining $\bar\tau$ can be challenging, a practical alternative is to use an order approximating ASF while still optimising $\argmax_{\vpi \in \policies} \sr(\vpi)$, as done in the weak Pareto oracle.

\subsection{Principled implementations}
\label{sec:principled-implementations}
While \cref{th:weak-po,th:approx-po} establish that Pareto oracles may be implemented using an ASF, optimising the ASF over a given policy class may still be challenging. Here, we show that efficient implementations can be derived from existing literature. First, the proposed approach using ASFs can be implemented by solving an auxiliary \emph{convex} MDP in which the goal is to minimise a convex function over a set of admissible stationary distributions \citep{zahavy2021reward}. Recent work has proposed multiple methods that come with strong convergence guarantees to solve convex MDPs \citep{zhang2020variational,zahavy2021reward,geist2022concave}. 
\begin{proposition}
\label{prop:convex-mdp-po}
Let $\sr$ be an ASF that is concave for any $\vr \in \mathbb{R}^d$. Then, for any $\vr \in \mathbb{R}^d$ and tolerance $\tau \geq 0$, a valid weak or approximate Pareto oracle $\oracle$ can be implemented for the class of stochastic policies by solving an auxiliary convex MDP. 
\end{proposition}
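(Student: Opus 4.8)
The plan is to show that the policy-space optimisation $\argmax_{\policy \in \policies} \sr(\vpi)$ underlying the oracle constructions of \cref{th:weak-po,th:approx-po} is, for the class of stochastic policies, precisely an instance of a convex MDP in the sense of \citet{zahavy2021reward}. The bridge is the discounted state--action occupancy measure $\occupancy{}{\policy}$: every stationary stochastic policy induces such a measure, and conversely every element of the occupancy polytope is realised by a stationary stochastic policy. I would therefore recast the objective as a function of $\occupancy{}{\policy}$ rather than of $\policy$ directly.

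First I would recall that the expected return is a linear functional of the occupancy measure, $\vpi = \sum_{\s, \action} \occupancy{}{\policy}(\s, \action)\,\big(\sum_{\s'}\probtransitions(\s' \mid \s, \action)\,\vecrewards(\s, \action, \s')\big)$, the inner sum being the expected one-step vector reward. Next I would invoke the classical fact that the set of achievable occupancy measures forms a convex polytope, carved out by the linear Bellman flow constraints. Composing the hypothesised concave $\sr$ with the affine map $\occupancy{}{\policy} \mapsto \vpi$ then yields a concave function of $\occupancy{}{\policy}$, so that $-\sr$ is convex in $\occupancy{}{\policy}$. Consequently, maximising $\sr(\vpi)$ over $\policies$ is exactly the minimisation of a convex objective over a convex set of admissible occupancy measures, which is the convex-MDP template; its solution is obtained with the convergence-guaranteed methods of \citet{zhang2020variational,zahavy2021reward,geist2022concave}.

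Finally, I would close the loop with the two oracle definitions. Solving the convex MDP returns an optimal occupancy measure, from which the standard correspondence recovers a stationary stochastic policy attaining $\argmax_{\policy \in \policies}\sr(\vpi)$. If $\sr$ is order representing, this policy is a valid weak Pareto oracle by \cref{th:weak-po}; if $\sr$ is order approximating, then substituting $\vr + \tau$ for the referent and appealing to \cref{th:approx-po} yields a valid approximate Pareto oracle. I expect the main obstacle to lie not in invoking the convex-MDP solvers, which is immediate once the reduction is set up, but in rigorously identifying the two optimisation problems: namely, arguing that the occupancy-measure maximiser is realisable as a policy in $\policies$, and that concavity of $\sr$ in its vector argument transfers to concavity in $\occupancy{}{\policy}$ through the affine return map.
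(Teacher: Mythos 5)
Your proposal is correct and follows essentially the same route as the paper's own proof: both recast the objective over the discounted occupancy polytope, use linearity of the vector return in the occupancy measure together with preservation of concavity under affine composition to obtain a convex MDP, and then invoke \cref{th:weak-po,th:approx-po} to certify the weak and approximate oracle cases respectively. The details you flag as potential obstacles (realisability of the optimal occupancy measure by a stationary stochastic policy, and transfer of concavity through the affine return map) are exactly the standard facts the paper relies on implicitly, so no gap remains.
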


In addition, approximate Pareto oracles can be implemented without optimising an ASF but rather by solving an auxiliary \emph{constrained} MDP. Treating the referent as lower bound constraints and maximising the sum of rewards can be shown to result in a Pareto optimal solution inside the target region if one exists. One important advantage of this oracle is that there is no inherent tolerance and so $\tau$ can be chosen freely.

\begin{proposition}
\label{prop:constrained-mdp-po}
For any referent $\vr \in \mathbb{R}^d$, tolerance $\tau > 0$ and policy class, a valid approximate Pareto oracle $\oracle$ can be implemented by solving an auxiliary constrained MDP.
\end{proposition}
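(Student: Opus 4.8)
The plan is to reduce a single oracle query to one constrained MDP whose feasible region is exactly the (shifted) target region and whose scalar objective is the discounted sum of the $d$ reward components. Concretely, given the referent $\vr$, I would build the auxiliary CMDP $\cmdptuple$ on top of $\mdp$ by taking as its scalar reward the sum $\sum_{j=1}^d \vecrewards_j$ and, for each objective $j \in [d]$, adding one cost signal $\costs_j = -\vecrewards_j$ with limit $l_j = -(r_j + \tau)$. A policy $\policy$ is then feasible precisely when $v^\policy_j \geq r_j + \tau$ for every $j$, i.e. when $\vpi \pde \vr + \tau\vone$, so the feasible set coincides with the set of policies whose value lands in the target region translated by $\tau$.

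Next I would split on feasibility. If the feasible set is empty, then no policy — Pareto optimal or otherwise — attains $\vpi \pde \vr + \tau\vone$, and the oracle correctly returns \textsc{False}, matching \cref{def:approximate-pareto-oracle}. Otherwise, solving the CMDP yields a feasible policy $\policy^\ast$ that maximises $\sum_{j=1}^d v^\policy_j$ over the feasible set, and we return $\vast = \vv^{\policy^\ast}$.

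The crux is to show that $\vast$ is genuinely Pareto optimal over the \emph{entire} policy class $\policies$, not merely optimal within the feasible set. I would argue by contradiction: suppose some $\policy' \in \policies$ satisfies $\vv^{\policy'} \pd \vast$. Since $\vast$ is feasible, $v^{\policy^\ast}_j \geq r_j + \tau$ for all $j$, and domination gives $v^{\policy'}_j \geq v^{\policy^\ast}_j \geq r_j + \tau$, so $\policy'$ is itself feasible. But $\vv^{\policy'} \pd \vast$ forces $\sum_j v^{\policy'}_j > \sum_j v^{\policy^\ast}_j$, contradicting the maximality of $\policy^\ast$ on the feasible set. Hence $\vast$ is Pareto optimal, and being feasible it satisfies $\vast \pde \vr + \tau\vone$ and thus lies in the target region; the construction therefore meets the definition with zero intrinsic tolerance $\bar\tau = 0$ and an arbitrary user tolerance $\tau > 0$. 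This \emph{self-feasibility} of any dominating policy is exactly why equal positive weights on the objectives suffice to certify \textbf{strong} (not merely weak) Pareto optimality here.

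The argument is independent of the concrete policy class, since feasibility, the summed objective, and the domination step refer only to the value vectors $\vpi$; it applies verbatim to stochastic, stationary, or memory-based deterministic policies. The main obstacle I anticipate is therefore not the Pareto-optimality step but the existence-and-solvability question: guaranteeing that a maximiser of the summed reward over the feasible policies exists and is actually produced by a CMDP solver for the chosen class. For standard finite, discounted CMDPs this is classical, and I would invoke the corresponding attainability result; for more general classes I would state the availability of such a solver as a standing assumption, exactly as the proposition's phrasing ``by solving an auxiliary constrained MDP'' suggests.
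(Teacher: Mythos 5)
Your construction and argument coincide with the paper's proof: the paper likewise builds a CMDP whose $d$ cost constraints encode $\vpi \pde \vr + \tau$, maximises the sum of the reward components over the feasible policies, and derives Pareto optimality of the returned policy by contradiction from $\vv' \pd \vast \implies \sum_j v'_j > \sum_j v^\ast_j$. Your write-up is in fact slightly more careful than the paper's, since you explicitly note that any dominating policy $\policy'$ inherits feasibility from $\policy^\ast$ (which is needed before maximality over $\policies_C$ can be invoked) and you handle the infeasible/\textsc{False} case explicitly.
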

Several algorithms with strong theoretical foundations have been proposed for solving such models in a reinforcement learning context \citep{achiam2017constrained,ding2021provably}. When the constrained MDP is known and the state and action sets are finite, an optimal stochastic policy can be computed in polynomial time \citep{altman1999constrained}. Together with \cref{th:ipro-complexity}, this guarantees that IPRO obtains a Pareto front of stochastic policies in polynomial time, recovering prior guarantees \citep{papadimitriou2000approximability,chatterjee2006markov}. Although computing optimal stationary deterministic policies in constrained MDPs is NP-complete \citep{feinberg2000constrained}, mixed-integer linear programming has been shown to be effective in practice \citep{dolgov2005stationary}.

\section{Deterministic memory-based policies}
\label{sec:dmb-policies}
As shown in \cref{sec:ipro,sec:pareto-oracle}, IPRO obtains the Pareto front for any policy class with a valid Pareto oracle. We now develop a Pareto oracle specifically for deterministic memory-based policies, a class for which there is currently no method that can learn non-convex Pareto fronts in general MOMDPs.

\subsection{Motivation}
In single-objective MDPs, an optimal deterministic policy is always guaranteed to exist. However, in MOMDPs, this result does not hold, and stochastic policies may be required to capture all solutions on the Pareto front. Nevertheless, in practical applications where interpretability, explainability, and safety are critical, deterministic policies remain preferable, as noted in related work \cite{hayes2022practical}. For example, in medical applications, decisions must be interpretable, with deterministic treatment protocols being essential.

To avoid the need for randomisation in policies, memory can be used to learn additional policies that provide alternative trade-offs for the decision-maker. Consider a pick-up and delivery MOMDP where the agent can either collect a package (yielding a reward of $(3, 0)$) or deliver it (yielding $(0, 3)$), with both actions returning to the same state. Without memory, deterministic policies are restricted to always collecting or always delivering, resulting in discounted returns of $(\nicefrac{3}{1 - \gamma}, 0)$ or $(0, \nicefrac{3}{1 - \gamma})$. By incorporating memory, the agent can condition its actions on past behaviour—for instance, delivering after each collection—achieving a discounted return of $(\nicefrac{3}{1 - \gamma^2}, \nicefrac{3\gamma}{1 - \gamma^2})$. This demonstrates how memory increases the set of feasible Pareto optimal policies, as proved by \citet{white1982multiobjective}.

\subsection{ASF selection}
In our experimental evaluation, we utilise the well-known augmented Chebyshev scalarisation function \citep{nikulin2012new}, shown in \cref{eq:aasf}. We highlight that this ASF is concave for all referents, implying its applicability together with \cref{prop:convex-mdp-po} for stochastic policies as well.
\begin{equation}
\label{eq:aasf}
\sr(\vv) = \min_{j \in \{1, \dotsc, d\}} \lambda_j (v_j - r_j) + \rho \sum_{j=1}^d \lambda_j (v_j - r_j)
\end{equation}
Here, $\vect{\lambda} > 0$ serves as a normalisation constant for the different objectives, and $\rho$ is a parameter determining the strength of the augmentation term. Selecting $\vect{\lambda} = (\ideal - \nadir)^{-1}$ scales any vector $\vv$ relative to the distance between the nadir $\nadir$ and ideal $\ideal$, thereby ensuring a balanced scale across all objectives. This normalisation prevents the dominance of one objective over another, a challenge that is otherwise difficult to overcome \citep{abdolmaleki2020distributional}.

\Cref{eq:aasf} serves as a weak or approximate Pareto oracle, depending on the augmentation parameter $\rho$. When $\rho = 0$, the augmentation term is cancelled and the minimum ensures that only vectors in the target region have non-negative values. However, optimising a minimum may result in weakly Pareto optimal solutions (e.g. $(1, 2)$ and $(1, 1)$ share the same minimum). For $\rho > 0$, the optimal solution will be Pareto optimal (the sum of $(1, 2)$ is greater than that of $(1, 1)$) but may exceed the target region.

\subsection{Practical implementation}
\label{sec:practical-implementation}
In \cref{sec:principled-implementations} we demonstrated that Pareto oracle implementations with strong guarantees exist for stochastic policies. In contrast, obtaining a Pareto optimal policy that dominates a given referent is NP-hard for memory-based deterministic policies \citep{chatterjee2006markov}. To address this, we extend single-objective reinforcement learning algorithms to optimise the ASF in \cref{eq:aasf}. It is common in MORL to encode the memory of a policy using its accrued reward at timestep $t$ defined as $\accrewards{t} := \sum_{k=0}^{t-1} \gamma^{k} \vecrewards(\s_{k}, \action_{k}, \s_{k+1})$. In our implementation, this accrued reward is directly added to the observation.

\smallparagraph{DQN} We extend the GGF-DQN algorithm, which optimises for the generalised Gini welfare of the expected returns \citep{siddique2020learning}, to optimise any scalarisation function $f$. We note that GGF-DQN is itself an extension of DQN \citep{mnih2015humanlevel}. Concretely, we train a Q-network such that $\vect{Q}(\s_t, \action_t) = \vr + \gamma \vect{Q}(\s_{t+1}, \action^\ast)$ where the optimal action $\action^\ast$ is computed using the accrued reward and scalarisation function $f$,
\begin{equation}
    \action^\ast = \argmax_{\action \in \actions} f \left(\accrewards{t+1} + \gamma \vect{Q} \left(\s_{t+1}, \action \right) \right).
\end{equation}
One limitation of this action selection method is that it does not perfectly align with the objective to be optimised since,
\begin{equation}
    f\fun{\vpi} = f\fun{\expected{\policy, \sinit}{\accrewards{t+1}} + \gamma \vect{Q} \left(\s_{t+1}, \action \right)}.
\end{equation}
As computing the expectation of $\accrewards{t+1}$ is usually impractical, we use the observed accrued reward as a substitute.

\smallparagraph{Policy gradient} We extend A2C \citep{mnih2016asynchronous} and PPO \citep{schulman2017proximal} to optimise $J(\policy) = f(\vv^\policy)$, where $f$ is a scalarisation function and $\policy$ a parameterised policy with parameters $\theta$. For differentiable $f$, the policy gradient becomes $\nabla_\theta J(\policy) = f'(\vv^{\policy}) \cdot \nabla_\theta \vv^{\policy}(s_0)$ \citep{reymond2023actorcritic}. 
To ensure deterministic policies, we take actions according to $\argmax_{a \in \mathcal{A}}\policy(a|s)$ during policy evaluation. Although this potentially changes the policy, effectively employing a policy that differs from the one initially learned, empirical observations suggest that these algorithms typically converge toward deterministic policies in practice. Furthermore, recent work has theoretically analysed this practice and found that under some assumptions convergence to the optimal deterministic policy is guaranteed \citep{montenegro2024learning}.

\smallparagraph{Extended networks}
Rather than making separate calls to one of the previous reinforcement learning methods for each oracle evaluation, we employ extended networks \citep{abels2019dynamic} to improve sample efficiency. Concretely, we extend our actor and critic networks to take a referent as additional input, enabling their reuse across IPRO iterations. We further introduce a pre-training phase, where a policy is trained on randomly sampled referents for a fixed number of episodes. To maximise the benefit of this pre-training, we perform additional off-policy updates for referents not used in data collection. While this has no effect on DQN, policy gradient methods require alignment between behaviour and target policies. We address this through importance sampling in A2C and an off-policy adaptation of PPO \citep{meng2023offpolicy}.

\section{Experiments}
\label{sec:experiments}
To test the performance of IPRO, we combine it with the modified versions of DQN, A2C, and PPO proposed in \cref{sec:dmb-policies} as approximate Pareto oracles that optimise the augmented Chebyshev scalarisation function in \cref{eq:aasf}. All experiments are repeated over five seeds and additional details are 
presented in \cref{ap:experiment-details}. Our code is available at \url{https://github.com/wilrop/ipro}.

\subsection{Evaluation metrics} 
Evaluating MORL algorithms poses significant challenges due to the difficulty in measuring the quality of a Pareto front \citep{felten2023toolkit}. To address this, we compute two different metrics during learning and one for the final returned front.

We first consider the hypervolume, defined in \cref{eq:hv}, a well-established measure in MORL. The hypervolume quantifies the volume of the dominated region formed by the current estimate of the Pareto front relative to a specified reference point. However, a notable drawback of this metric is that the choice of reference point significantly influences the obtained values, potentially distorting the results. In practice, we use the nadir as the reference point.
\begin{equation}
\label{eq:hv}
    HV(\vset_t; \vr) = \volume \fun{\bigcup_{\vv \in \vset_t} \left[\vr, \vv \right]}
\end{equation}
Following the approach outlined by \citet{hayes2022practical}, we further evaluate all algorithms using utility-based metrics. Concretely, for a solution set $\vset_t$ at timestep $t$ we compute the maximum utility loss (MUL) \citep{zintgraf2015quality} compared to the true Pareto front $\truepf$ as
\begin{equation}
\label{eq:mul}
    MUL(\vset_t; \truepf) = \max_{u \in U}\left[\max_{\vv \in \truepf} u(\vv) - \max_{\vv \in \vset_t} u(\vv)\right].
\end{equation} 
We generate piecewise linear, monotonically increasing functions $u: [\nadir, \ideal] \to [0, 1]$ by sampling a grid of positive numbers as gradients. The function value at $\vv$ is obtained by summing the preceding gradients and rescaling. Our grid uses six cells per dimension, with gradients drawn uniformly from $[0, 5)$. Notably, this method produces functions biased towards risk aversion. Furthermore, we estimate $\truepf$ as the union of all final Pareto fronts obtained by both IPRO and the baseline algorithms across all runs. Lastly, we evaluate the quality of the final Pareto front by its true error as defined in \cref{eq:true-error}. This metric provides an additional measure of how closely the final approximation aligns with the true Pareto front.

\subsection{Baselines}
As IPRO is the first general-purpose method capable of learning the Pareto front for arbitrary policies in general MOMDPs, we select baselines that are tailored to specific settings. To ensure a fair comparison, we extend all baselines to accumulate their empirical Pareto fronts across evaluation steps, guaranteeing the same monotonic improvement as in IPRO.

\smallparagraph{Convex hull algorithms}
We evaluate two state-of-the-art convex hull algorithms: Generalised Policy Improvement - Linear Support (GPI-LS) \citep{alegre2023sampleefficient} and Envelope Q-Learning (EQL) \citep{yang2019generalized}. Both algorithms train vectorial Q-networks that can be dynamically adjusted with given weights to produce a scalar return. 

\smallparagraph{Pareto front algorithm}
We include Pareto Conditioned Networks (PCN), which were specifically designed to learn the Pareto front of deterministic policies in deterministic MOMDPs \citep{reymond2022pareto}. PCN trains a network to generalise across the full Pareto front by predicting the ``return-to-go'' for any state and selecting the action that best aligns with the desired trade-off.

\begin{figure*}[tb]
    \centering
    \begin{subfigure}[b]{0.32\textwidth}
        \centering
        \includegraphics[width=\textwidth]{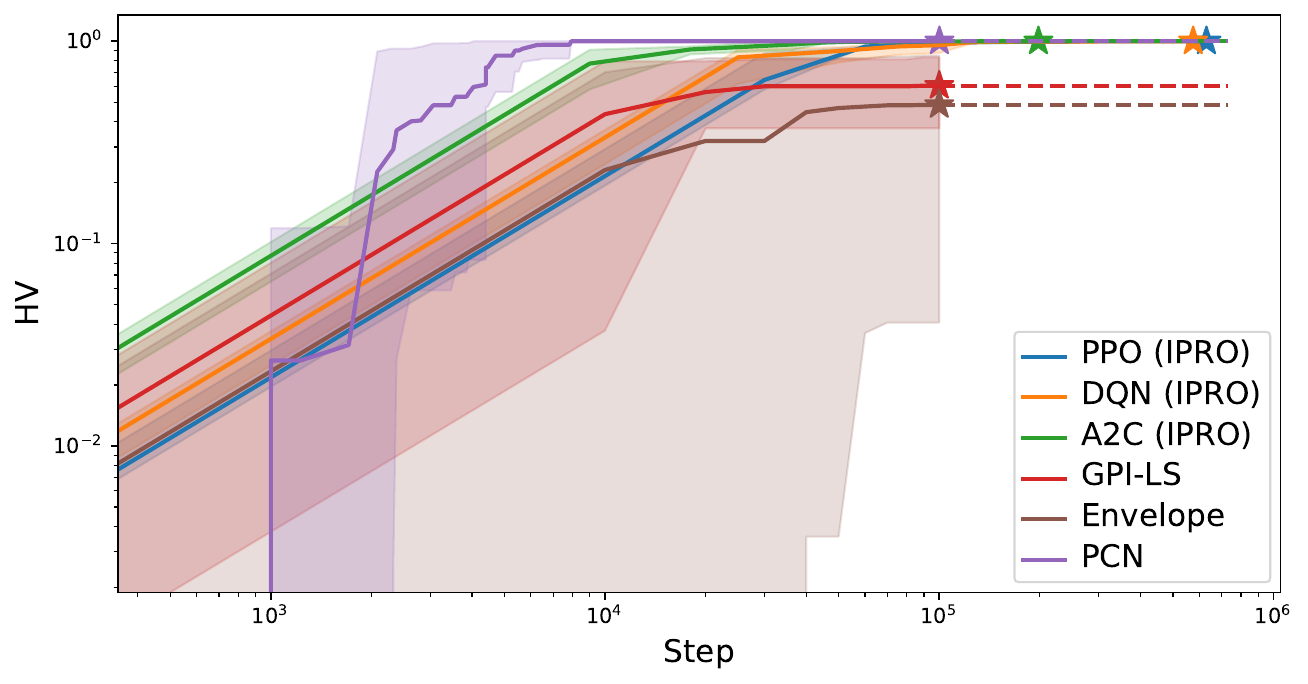}
        \label{fig:hv-dst}
    \end{subfigure}
    \begin{subfigure}[b]{0.32\textwidth}
        \centering
        \includegraphics[width=\textwidth]{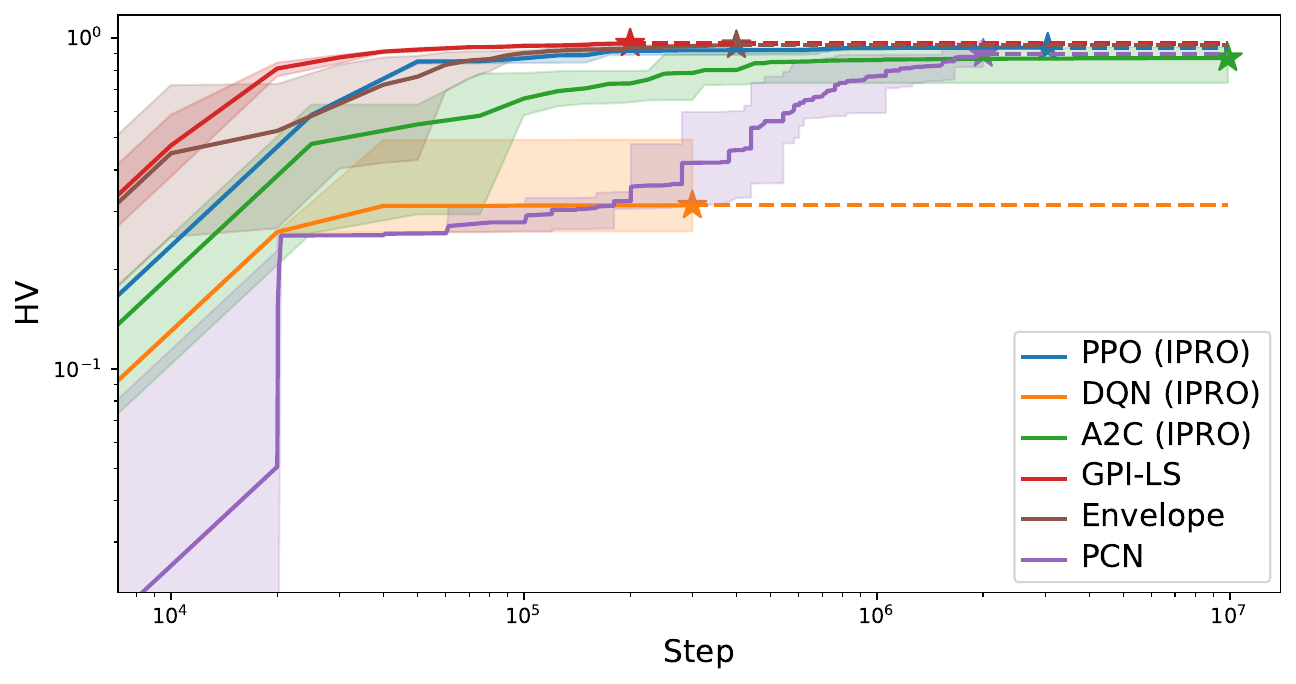}
        \label{fig:hv-minecart}
    \end{subfigure}
    \begin{subfigure}[b]{0.32\textwidth}
        \centering
        \includegraphics[width=\textwidth]{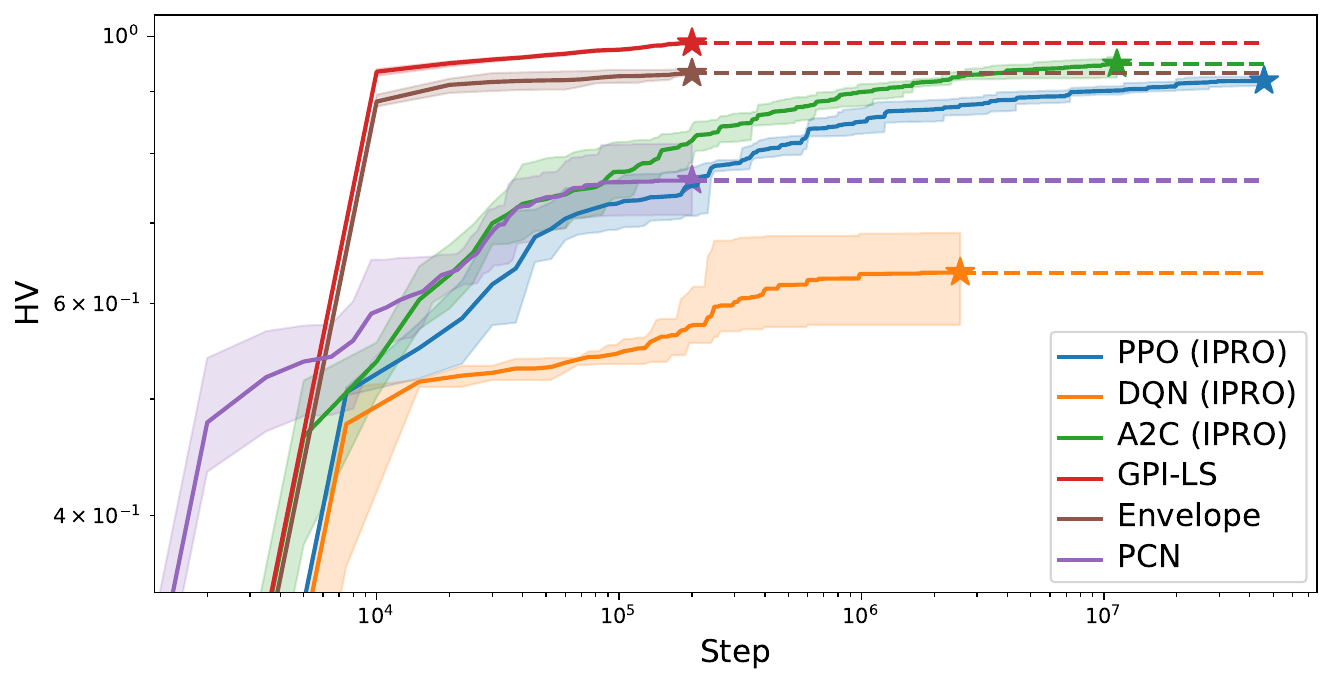}
        \label{fig:hv-reacher}
    \end{subfigure}
    \begin{subfigure}[b]{0.32\textwidth}
        \centering
        \includegraphics[width=\textwidth]{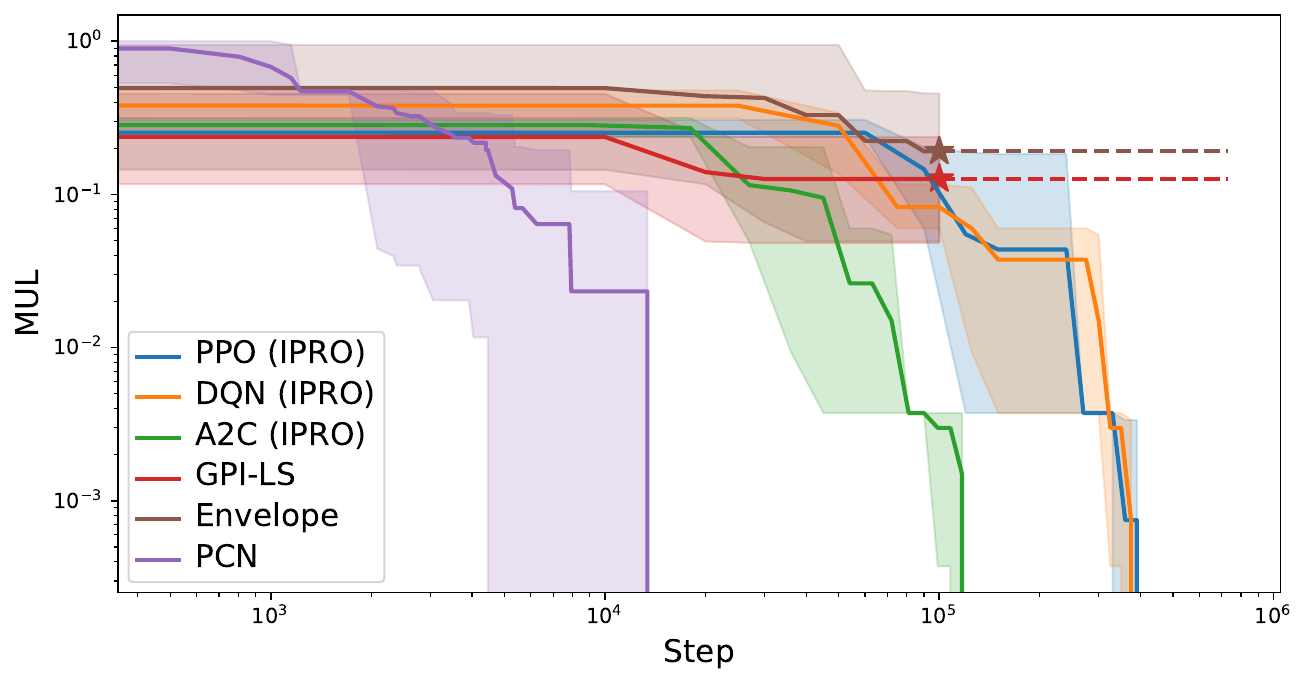}
        \subcaption{Deep Sea Treasure}
        \label{fig:mul-dst}
    \end{subfigure}
    \begin{subfigure}[b]{0.32\textwidth}
        \centering
        \includegraphics[width=\textwidth]{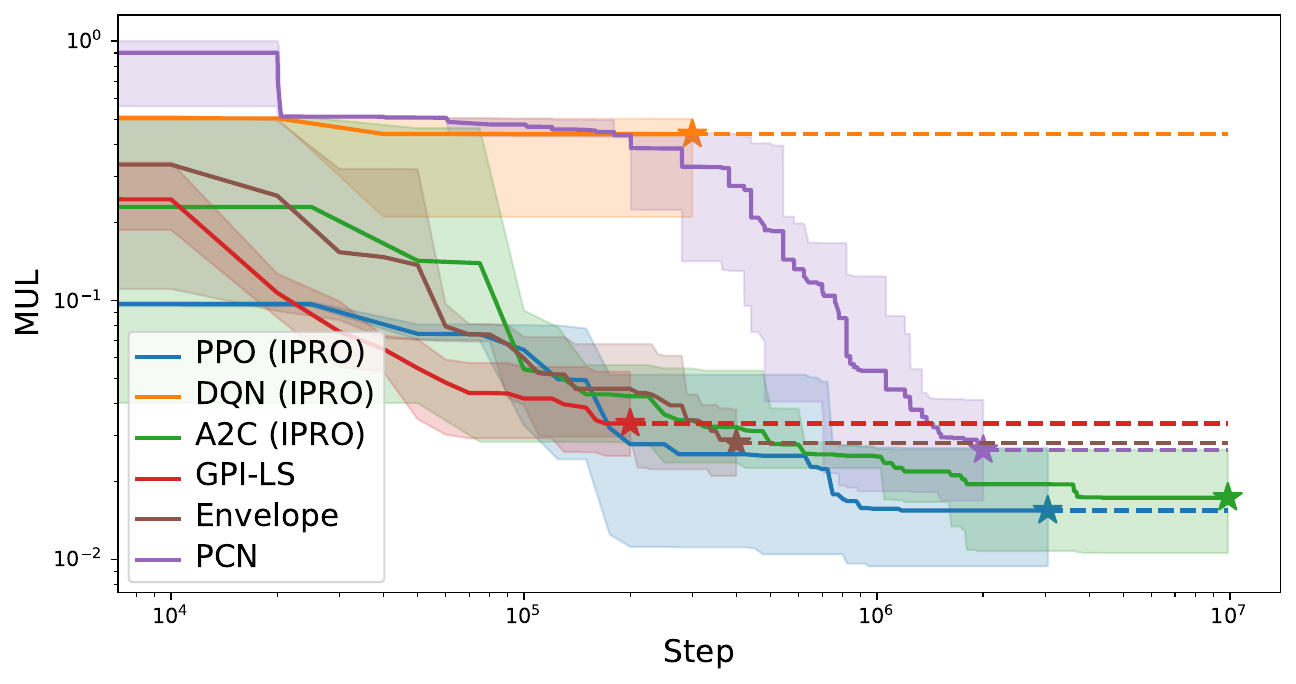}
        \subcaption{Minecart}
        \label{fig:mul-minecart}
    \end{subfigure}
    \begin{subfigure}[b]{0.32\textwidth}
        \centering
        \includegraphics[trim={0cm 0cm 0cm 0cm},clip,width=\textwidth]{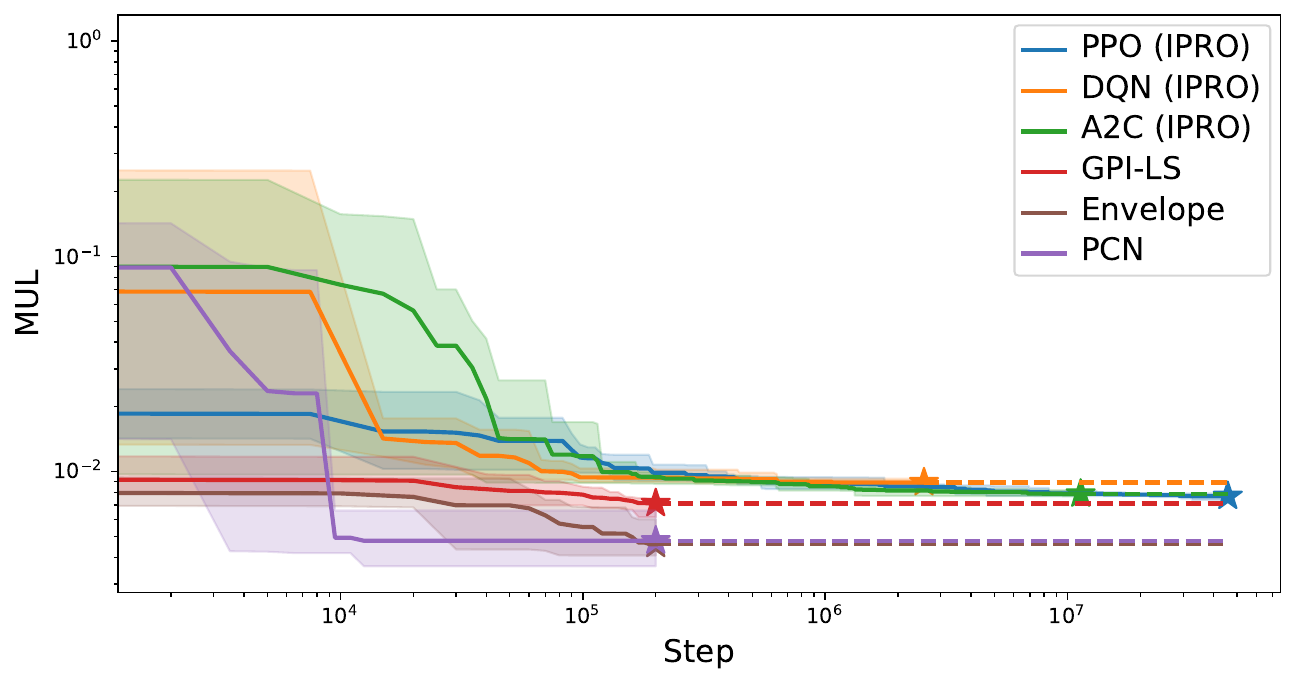}
        \subcaption{MO-Reacher}
        \label{fig:mul-reacher}
    \end{subfigure}
    \Description{}
    \caption{The mean hypervolume (top) and maximum utility loss (bottom) scaled between zero and one with 95-percentile interval on a log-log scale. Stars indicate when each algorithm finishes. The pretraining phase of IPRO is not shown.}
    \label{fig:experiments}
\end{figure*}

\subsection{Results}

\smallparagraph{Deep Sea Treasure $(d=2)$} Deep Sea Treasure (DST) is a deterministic environment where a submarine seeks treasure while minimising fuel consumption. DST has a Pareto front with solutions in concave regions \citep{vamplew2011empirical}, making it impossible for the convex hull algorithms to recover all Pareto optimal solutions. This limitation is evident in \cref{fig:hv-dst,fig:mul-dst} where GPI-LS and EQL exhibit significantly inferior performance compared to IPRO and PCN. Notably, IPRO and PCN recover the complete Pareto front in the majority of runs; however, IPRO tends to require more samples. This discrepancy can be attributed to the fact that IPRO learns only one Pareto optimal solution per iteration, whereas PCN concurrently learns multiple policies. Nonetheless, this concurrent learning approach for PCN comes at the expense of theoretical guarantees. When comparing the $\varepsilon$ metric (\cref{tab:approx-quality}), we observe that IPRO learns high-quality approximations and consistently learns the complete Pareto front when paired with PPO and DQN. The convex hull methods naturally have poorer approximations.

\begin{table}[t]
\caption{The minimum $\varepsilon$ shift necessary to obtain any undiscovered Pareto optimal solution.}
\label{tab:approx-quality}
\centering
\begin{small}
\begin{sc}
\begin{tabular}{llc}
\toprule
Env & Algorithm & $\varepsilon$ \\
\midrule
& IPRO (PPO)    & $\bm{0.0} \pm 0.0$ \\
& IPRO (A2C)    & $0.2 \pm 0.4$ \\
DST & IPRO (DQN)    & $\bm{0.0} \pm 0.0$ \\
& PCN           & $\bm{0.0} \pm 0.0$ \\
& GPI-LS        & $5.2 \pm 2.71$ \\
& Envelope      & $28.6 \pm 46.77$ \\
\midrule
& IPRO (PPO)    & $0.66 \pm 0.07$ \\
& IPRO (A2C)    & $0.54 \pm 0.11$ \\
Minecart & IPRO (DQN)    & $1.11 \pm 0.01$ \\
& PCN           & $0.67 \pm 0.2$ \\
& GPI-LS        & $\bm{0.42} \pm 0.0$ \\
& Envelope      & $\bm{0.42} \pm 0.01$ \\
\midrule
& IPRO (PPO)    & $5.75 \pm 1.22$ \\
& IPRO (A2C)    & $\bm{2.84} \pm 0.39$ \\
MO-Reacher & IPRO (DQN)    & $15.02 \pm 1.42$ \\
& PCN           & $18.95 \pm 1.76$ \\
& GPI-LS        & $8.5 \pm 0.12$ \\
& Envelope      & $11.41 \pm 0.62$ \\
\bottomrule
\end{tabular}
\end{sc}
\end{small}
\end{table}

\smallparagraph{Minecart $(d=3)$} Minecart is a stochastic environment where the agent collects two types of ore while minimising fuel consumption \citep{abels2019dynamic}. Since this environment was designed to induce a convex Pareto front, GPI-LS and EQL are expected to perform optimally. We find that IPRO achieves comparable hypervolume results and demonstrates superior maximum utility loss (MUL) compared to all other baselines when using policy gradient oracles. The anytime property of IPRO is particularly evident in the MUL results, as its Pareto front continues to improve up to $10^7$ steps. In \cref{tab:approx-quality}, the $\varepsilon$ distances for the policy gradient methods are shown to be competitive. However, we observe that the DQN variant struggles to learn a qualitative Pareto front, which may be attributed to the algorithm's ad-hoc nature. This suggests that future research focusing on value-based oracles could provide significant benefits.

\smallparagraph{MO-Reacher $(d=4)$} MO-Reacher is a deterministic environment where four balls are arranged in a circle and the goal is to minimise the distance to each ball. Since it is deterministic and has a mostly convex Pareto front, it suits all baselines. In \cref{fig:mul-reacher}, we find that IPRO obtains a hypervolume and maximum utility loss competitive to the baselines. Additionally, the policy gradient oracles result in the best approximations to the Pareto front according to the $\varepsilon$ metric in \cref{tab:approx-quality}. Due to IPRO's iterative mechanism, this comes at the price of increased sample complexity, while the baselines benefit from learning multiple policies concurrently.


These results demonstrate IPRO's competitiveness to the baselines in all environments, an impressive feat given that all baselines perform significantly worse in one of the environments. Moreover, IPRO stands out without requiring domain knowledge for proper application, unlike its competitors.

\section{Conclusion}
We introduce IPRO to provably learn a Pareto front in MOMDPs. IPRO iteratively proposes referents to a Pareto oracle and uses the returned solution to trim sections from the search space. We formally define Pareto oracles and derive principled implementations. We show that IPRO converges to a Pareto front and comes with strong guarantees with respect to the approximation error. Our empirical analysis finds that IPRO learns high-quality Pareto fronts while requiring less domain knowledge than baselines. For future work, we aim to extend IPRO to learn multiple policies concurrently and explore alternative Pareto oracle implementations.

\begin{acks}
We thank Conor Hayes and Enda Howley for their guidance throughout various stages of this work.
WR is supported by the Research Foundation – Flanders (FWO), grant number 1197622N. MR received support through Prof. Irina Rish. This research was supported by funding from the Flemish Government under the ``Onderzoeksprogramma Artifici\"{e}le Intelligentie (AI) Vlaanderen'' program.
\end{acks}



\bibliographystyle{ACM-Reference-Format} 
\balance
\bibliography{bibliography}


\newpage
\appendix
\onecolumn

\section{Theoretical results for IPRO}
\label{ap:ipro-proofs}
In this section, we provide the omitted proofs for IPRO from \cref{sec:ipro}. These results establish both the upper bound on the true approximation error and convergence guarantees to the true Pareto front in the limit or to an approximate Pareto front in a finite number of iterations.

\subsection{Definitions}
\label{ap:ipro-defs}
Before presenting the proofs for IPRO, it is necessary to define the sets that are tracked in IPRO. Let $\bbox = \prod_{j=1}^d[v^{\text{n}}_j, v^{\text{i}}_j]$ be the bounding box defined by a strict lower bound of the true nadir $\nadir$ and the ideal $\ideal$. The set $\vset_t$ contains the Pareto front obtained in timestep $t$, while the completed set $\cset_t$ contains the referents for which the evaluation of the Pareto oracle was unsuccessful. We then define the dominated set $\dset$ and the infeasible set $\iset$ as follows.

\begin{definition}
\label{def:dominated-set}
The dominated set $\dset_t$ at timestep $t$ contains all points in the bounding box that are dominated by or equal to a point in the current Pareto front, that is, $\dset_t = \left\{\vv \in \bbox \mid \exists \vv' \in \vset_t, \vv' \pde \vv \right\}$.
\end{definition}

\begin{definition}
\label{def:infeasible-set}
The infeasible set $\iset_t$ at timestep $t$ contains all points in the bounding box that dominate or are equal to a point in the union of the current Pareto front and completed referents, i.e. $\iset_t = \left\{\vv \in B \mid \exists \vv' \in \vset_t \cup C_t, \vv \pde \vv' \right\}$.
\end{definition}

Note that in the definition of the infeasible set, we consider not only those points dominated by the current Pareto front but also the points dominated by the referents that failed to result in new solutions. 

During the execution of IPRO, it is necessary to recognise the remaining unexplored sections. For this, we define the boundaries, interiors and \emph{reachable} boundaries of the dominated and infeasible set. Let $\overline{S}$ be the closure of a subset $S$ in some topological space and $\boundary S$ be its boundary. By a slight abuse of notation, we say that $\boundary \dset_t = \overline{(\bbox \setminus \dset_t)} \cap \overline{\dset_t}$ is the boundary of $\dset_t$ in $\bbox$ and $\interior \dset_t = \dset_t \setminus \boundary \dset_t$ is the interior of $\dset_t$ in $\bbox$. We define the boundary and interior of the infeasible set analogously. Finally, we define the \emph{reachable} boundaries of these sets, which together delineate the remaining available search space and illustrate all defined subsets in \cref{fig:reachable-boundaries} for the two-dimensional case.

\begin{definition}
\label{def:reachable-boundary-d}
The reachable boundary of $\dset_t$, denoted $\rboundary \dset_t$ at timestep $t$ is defined as $\rboundary \dset_t = \boundary \dset_t \setminus \iset_t$.
\end{definition}

\begin{definition}
\label{def:reachable-boundary-i}
The reachable boundary of $\iset_t$, denoted $\rboundary \iset_t$ at timestep $t$ is defined as $\rboundary \iset_t = \boundary \iset_t \setminus \dset_t$.
\end{definition}

\begin{figure}[htb]
    \centering
        \begin{subfigure}[b]{0.3\textwidth}
        \centering
        \includegraphics[width=\textwidth]{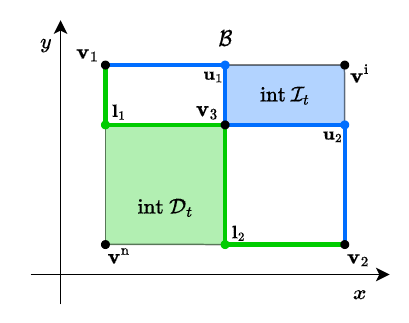}
        \subcaption{}
        \label{fig:ipro-rb-stage1}
    \end{subfigure}
    \qquad
    \begin{subfigure}[b]{0.3\textwidth}
        \centering
        \includegraphics[width=\textwidth]{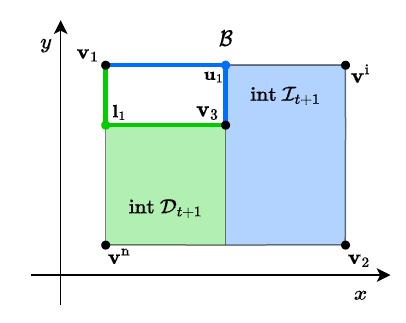}
        \subcaption{}
        \label{fig:ipro-rb-stage2}
    \end{subfigure}
    \caption{(a) The reachable boundaries of $\dset_t$ (green) and $\iset_t$ (blue) indicated with solid lines and their interiors (shaded) when no section is completed. (b) When completing the section at $\vl_2$, parts of the reachable boundary at timestep $t$ become unreachable at timestep $t+1$.}
    \label{fig:reachable-boundaries}
\end{figure}

For the reachable boundaries of the dominated and infeasible sets, we define two important subsets containing the respective lower and upper bounds of the remaining solutions on the Pareto front. The set of lower bounds $\lset$ contains the points on the reachable boundary of the dominated set such that no other point on the reachable boundary exists which is dominated by it. Similarly, the set of upper bounds $\uset$ contains the points on the reachable boundary of the infeasible set such that there is there is no other point on the reachable boundary that dominates it. Conceptually, these points are the inner corners of their respective boundary as observed in \cref{fig:reachable-boundaries}.

\begin{definition}[Lower Bounds]
\label{def:lower-set}
The set of lower bounds at timestep $t$ is defined as $\lset_t = \left\{\vl \in \rboundary \dset_t \mid \nexists \vv \in \rboundary \dset_t, \vl \pd \vv \right\}$.
\end{definition}
  
\begin{definition}[Upper Bounds]
\label{def:upper-set}
The set of upper bounds at timestep $t$ is defined as $\uset_t = \left\{\vu \in \rboundary \iset_t \mid \nexists \vv \in \rboundary \iset_t, \vv \pd \vu \right\}$.
\end{definition}

\subsection{Assumptions}
\label{sec:assumptions}
We explicitly state and motivate the assumptions underpinning our theoretical results. We emphasise that all assumptions are either on an implementation level, efficiently verifiable or guaranteed to hold for significant domains of interest.

First, we assume that the problem is not trivial and there exist unexplored regions in the bounding box $\bbox$ after finding the first $d$ weakly Pareto optimal solutions. This assumption is not hindering, since after the initialisation phase we can run a pruning algorithm such as $\textsc{PPrune}$ \citep{roijers2017multiobjective}, which takes as input a set of vectors and outputs only the Pareto optimal ones, and terminate IPRO if this is the case. 
\begin{assumption}
\label{assumption:degenerate}
For any MOMDP $\momdp$ with $d$-dimensional reward function, we assume that for the initial Pareto front $\vset_0 = \{\vv_1, \dotsc, \vv_d\}$ it is guaranteed that $|\textsc{PPrune}(\vset_0) | > 1$.
\end{assumption}

In addition, we provide assumptions necessary for weak Pareto oracles to ensure their convergence to the true Pareto front in the limit. Intuitively, we first assume that the referent selection mechanism is not antagonistic and we select in some iterations a referent that may reduce the error estimate. Note that this can be readily implemented using a randomised method which assigns a strictly positive probability to each lower bound in $\lset$. Alternatively, we can explicitly construct the set of lower bounds that are expected to reduce the error and select from this set rather than the entire set of lower bounds.

\begin{assumption}
\label{assumption:weak-oracle-referent-selection}
Let $\varepsilon_t = \max_{\vu \in \uset_t} \min_{\vv \in \vset_t} \|\vu - \vv\|_\infty$ be the upper bound on the true error $\varepsilon^\ast_t$ at timestep $t$. We define $\uset^\varepsilon_t \subseteq \uset_t$ to be the subset of upper bounds for which the error is equal to $\varepsilon_t$ and $\lset^\varepsilon_t \subseteq \lset_t$ to be the subset of lower bounds such that for all $\vl \in \lset^\varepsilon_t$ there is an $\vu \in \uset^\varepsilon_t: \vu > \vl$. As $t \to \infty$, an $\vl \in \lset^\varepsilon_t$ is almost surely selected as a referent.
\end{assumption}

The next assumption ensures that the oracle is not antagonistic and that it is capable of yielding any Pareto optimal solution. In other words, no solution is excluded by the Pareto oracle a priori. While this may be challenging to verify, in practice this can be satisfied by implementing a robust oracle.

\begin{assumption}
\label{assumption:weak-oracle-strength}
Let $\oracle$ be a weak Pareto oracle with tolerance $\tau=0$. For all undiscovered Pareto optimal solutions $\vast \in \truepf \setminus \vset_t$ there exists some lower bound $\vl$ such that $\oracle(\vl) = \vast$ and as $t \to \infty$, $\vl$ is almost surely added to $\lset_t$.
\end{assumption}

The final assumption that is necessary is on the shape of the Pareto front. Concretely, we assume that every segment of the Pareto front contains its endpoints. This is necessary to ensure that we may close all gaps between segments eventually, as otherwise we are never able to reduce the error estimate of IPRO below that of the largest gap. Importantly, \cref{assumption:weak-oracle-pf-shape} holds when considering stochastic policies in finite MOMDPs, since the set of occupancy measures is a closed convex polytope \citep{altman1999constrained}.

\begin{assumption}
\label{assumption:weak-oracle-pf-shape}
The Pareto front is the union of a finite number of paths (i.e. a continuous function $f: [0, 1] \to \mathbb{R}^d$).
\end{assumption}

\subsection{Supporting lemmas}
\label{ap:supporting-lemmas}
We provide supporting lemmas that formalise the contents of the sets defined in \cref{ap:ipro-defs} and their relation to the remaining feasible solutions. Concretely, we first demonstrate that the interior of the infeasible set contains only infeasible points or points within the acceptable tolerance, which is a consequence of having a strictly positive distance to the boundary. Combined with the dominated set, which inherently contains only dominated solutions, we can then significantly reduce the search space that is left to explore.

\begin{lemma}
\label{lemma:interior-infeasible}
Given an oracle $\oracle$ with tolerance $\tau$, then at all timesteps $t$ and for all $\vi \in \iset_t$, $\vi$ is infeasible or within the tolerance $\tau$ of a point on the current estimate of the Pareto front $\vset_t$.
\end{lemma}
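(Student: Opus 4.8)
The plan is to unfold \cref{def:infeasible-set} and reason pointwise. I fix a timestep $t$ and an arbitrary $\vi \in \iset_t$; by definition there is a witness $\vv' \in \vset_t \cup \cset_t$ with $\vi \pde \vv'$, and I split on whether the witness lies in the current front $\vset_t$ or in the completed set $\cset_t$. The crux in both cases is that $\vv'$ inherits a guarantee from the oracle: every element of $\vset_t$ is a value vector returned by $\oracle$ and is hence (weakly) Pareto optimal, whereas every element of $\cset_t$ is a referent on which $\oracle$ reported failure, certifying that nothing sufficiently better is attainable. The geometry of \cref{fig:reachable-boundaries} makes both statements intuitive.

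First I would treat a front witness $\vv' \in \vset_t$. If $\vi = \vv'$ then $\vi \in \vset_t$, so it lies at distance $0 \le \tau$ from a point of the front and the second disjunct holds trivially. Otherwise $\vi \pd \vv'$, and I claim $\vi$ is infeasible: if some policy $\policy'$ attained $\vpip \pde \vi$, then $\vpip \pde \vi \pd \vv'$ would give $\vpip \pd \vv'$, contradicting Pareto optimality of $\vv'$. This is immediate for an approximate oracle (\cref{def:approximate-pareto-oracle}), whose returns are genuinely Pareto optimal. For a weak oracle (\cref{def:weak-pareto-oracle}) only weak optimality is certified, so a dominating $\vpip$ contradicts the guarantee only when the domination is strict; here I invoke the observation behind the lemma's name, that a point at strictly positive distance from $\boundary \iset_t$ strictly dominates its witness in every coordinate, so such interior points are infeasible while the residual boundary points are absorbed by the tolerance.

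Next I would treat a completed witness $\vv' \in \cset_t$, a referent for which $\oracle(\vv')$ returned \textsc{False}. By the oracle specification this means no admissible policy improves on $\vv'$ by the required margin, so no (weakly) Pareto optimal value vector strictly dominates $\vv'$. I then show that any $\vi$ with $\vi > \vv'$ is infeasible: an attainable $\vpip \pde \vi > \vv'$ would satisfy $\vpip > \vv'$ componentwise, and passing, under the standing assumptions, to a weakly Pareto optimal policy dominating $\policy'$ — which still strictly dominates $\vv'$ in every coordinate — would have made the oracle succeed, a contradiction; the points of $\iset_t$ that weakly but not strictly dominate $\vv'$ tie it in some coordinate and fall inside the $\tau$-band, hence within tolerance of the front. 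The hard part will be exactly this weak-versus-strict gap: since both returned solutions and failed referents only rule out \emph{strictly} dominating attainable vectors, the boundary of $\iset_t$ is not certified infeasible outright, and the argument must route those boundary points through the tolerance $\tau$ (and, for $\tau = 0$, through the interior characterisation highlighted in the surrounding discussion). Assembling both cases yields the claim for every $\vi \in \iset_t$.
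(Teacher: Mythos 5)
Your skeleton matches the paper's: unfold \cref{def:infeasible-set}, split on whether the witness lies in $\vset_t$ or in $\cset_t$, and turn the oracle's guarantees into infeasibility, with the tolerance catching what remains. But two steps do not go through as written. First, the strictness upgrade is not optional bookkeeping — it is the engine of the whole proof — and you only invoke it in passing for one sub-case. The definition of $\iset_t$ gives only $\vi \pde \vv'$, while every oracle guarantee you want to contradict is phrased in terms of \emph{strict} dominance ($\vpi > \vr$ for success; weak Pareto optimality meaning nothing \emph{strictly} dominates a returned value). The paper resolves this by working in $\interior \iset_t$ and using an open-ball argument: an interior point $\vv$ admits a perturbation $\vv - \delta$ still inside $\iset_t$, whose own witness $\bar\vv$ then satisfies $\vv > \bar\vv$ by transitivity. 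You state the analogous observation for the weak-oracle sub-case but then claim the leftover boundary points are ``absorbed by the tolerance'' — which is vacuous for a weak oracle, whose tolerance is $\tau = 0$: being within tolerance $0$ of $\vset_t$ in $\|\cdot\|_\infty$ means lying on $\vset_t$. A feasible point that dominates a merely weakly Pareto optimal return while tying it in one coordinate can sit on $\boundary\iset_t$ arbitrarily far from every front point, so your boundary case is not recoverable; the paper's proof (and its downstream uses in \cref{lemma:lower-strict,lemma:upper}) in fact establish and use only the interior version of the claim.

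Second, in the completed-witness case your infeasibility argument overreaches for the approximate oracle. A failed evaluation at $\vv' \in \cset_t$ with tolerance $\tau > 0$ certifies only that no feasible value satisfies $\vpi \pde \vv' + \tau$; it says nothing about feasible points that exceed $\vv'$ by less than $\tau$ in some coordinate. Your step ``would have made the oracle succeed'' therefore needs $\vi \pde \vv' + \tau$, not merely $\vi > \vv'$. The paper splits on exactly this: when $\vv \pde \bar\vv + \tau$ it concludes infeasibility, and for the remaining sliver it exhibits, via the way $\lset$ is constructed from previously returned solutions, a front point $\vast$ with $\vast + \tau \pde \vv$, which is what ``within tolerance'' means here. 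Your substitute — that points tying $\vv'$ in a coordinate ``fall inside the $\tau$-band'' — asserts the conclusion without producing a nearby front point and is false in general (a point matching $\vv'$ in one coordinate can be far above it in another). To repair the argument: restrict to $\interior\iset_t$, derive $\vv > \bar\vv$ by the perturbation step, and then run the two oracle cases with the explicit $\tau$-margin split.
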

\begin{proof}
Recall that the interior of the infeasible set is defined as follows,
\begin{equation}
    \interior \iset_t = \iset_t \setminus \boundary \iset_t.
\end{equation}
Let $\vv \in \interior \iset_t$ be a point in the interior of the infeasible set. Then there exists an open ball centred around $\vv$ with a strictly positive radius $r$ such that $B_r(\vv) \subseteq \interior \iset_t$. Let $\vv' \in B_r(\vv)$ be a point in the ball such that $\vv > \vv'$ which can be obtained by taking $\vv$ and subtracting a value $\delta \in (0, r)$. Since $\vv' \in \interior \iset_t$, the definition of the infeasible set (\cref{def:infeasible-set}) ensures that there exists a point $\bar{\vv} \in \vset_t \cup C_t$ such that $\vv' \pde \bar{\vv}$. By the transitivity of Pareto dominance, we then have that $\vv > \bar{\vv}$.

Let us now consider the two cases for $\bar{\vv}$. Assume first that $\bar{\vv} \in \vset_t$. If $\vv$ is a feasible solution and knowing that $\vv > \bar{\vv}$ implies that $\bar{\vv}$ is not weakly Pareto optimal. Therefore, $\bar{\vv}$ would not have been returned by a weak or approximate Pareto oracle. As such, $\vv$ must be infeasible.

When $\bar{\vv} \in C_t$ it was added after the oracle evaluation at $\bar\vv$ was unsuccessful. For a weak Pareto oracle $\oracle$ with tolerance $\tau = 0$, this again guarantees that $\vv$ is infeasible since $\vv > \bar\vv$. For an approximate Pareto oracle $\oracle$ with tolerance $\tau > 0$, we distinguish between two cases. If $\vv \pde \bar\vv + \tau$, $\vv$ is infeasible since $\oracle$ would otherwise have returned it. Finally, by the construction of the set of lower bounds $\lset$, there must exist a point $\vast$ on the current Pareto front $\vset_t$ such that $\vast + \tau \pde \vv$ and therefore $\vv$ was within the tolerance.
\end{proof}

Given the result for the infeasible solutions, we now focus instead on the remaining feasible solutions. Here, we demonstrate that all feasible solutions are strictly lower bounded by $\lset$ and upper bounded by $\uset$.

\begin{lemma}
\label{lemma:lower-strict}
At any timestep $t$, the set of lower bounds $\lset_t$ contains a strict lower bound for all remaining feasible solutions, i.e.,
\begin{equation}
\label{eq:lower-strict}
    \vv \in B \setminus (\interior \iset_t \cup \dset_t) \implies \exists \vl \in \lset_t, \vv >\vl.
\end{equation}
\end{lemma}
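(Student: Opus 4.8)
The plan is to produce the required inner corner in two moves: project $\vv$ straight down onto the boundary of $\dset_t$ to get a reachable-boundary point strictly below $\vv$, and then descend along $\boundary\dset_t$ to a Pareto-minimal point, which I will argue lands in $\lset_t$. I would rely on two structural facts. First, $\dset_t$ is closed, being the finite union $\bigcup_{\vv' \in \vset_t}([\nadir, \vv'] \cap \bbox)$ of boxes anchored at the nadir, and $\nadir \in \dset_t$ since every front point weakly dominates it. Second, $\iset_t$ is an \emph{upper set} in $\bbox$: if $\vu \in \iset_t$ and $\vu' \geq \vu$ then $\vu' \in \iset_t$, because membership only requires weakly dominating some element of $\vset_t \cup \cset_t$. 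I also use that $\vv > \nadir$ componentwise, which holds because $\nadir$ is a strict lower bound of the true nadir while $\vv$, a remaining feasible return, is bounded below by it.

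For the projection, I would take the segment $\vw(s) = \vv - s(\vv - \nadir)$ for $s \in [0,1]$, with $\vw(0) = \vv \notin \dset_t$ and $\vw(1) = \nadir \in \dset_t$, and set $s^\ast = \inf\{s : \vw(s) \in \dset_t\}$. Closedness of $\dset_t$ gives $\vw^\ast := \vw(s^\ast) \in \dset_t$, while the points $\vw(s)$ with $s < s^\ast$ avoid $\dset_t$, so $\vw^\ast \in \boundary \dset_t$. Since $\vv \notin \dset_t$ sits at positive distance from the closed set $\dset_t$, we have $s^\ast > 0$, and combined with $\vv - \nadir > 0$ this yields $\vw^\ast < \vv$ componentwise. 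I would then rule out $\vw^\ast \in \iset_t$: if $\vw^\ast \geq \vv''$ for some $\vv'' \in \vset_t \cup \cset_t$, then $\vv'' \leq \vw^\ast < \vv$ is a strict inequality, so an entire $\bbox$-neighbourhood of $\vv$ would weakly dominate $\vv''$ and hence lie in $\iset_t$, contradicting $\vv \notin \interior \iset_t$. Therefore $\vw^\ast \in \rboundary \dset_t$.

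To descend to a lower bound, I would consider the compact nonempty set $\bar L = \{\vw \in \boundary \dset_t : \vw \leq \vw^\ast\}$ and pick $\vl \in \argmin_{\vw \in \bar L} \sum_{j} w_j$. By construction $\vl \leq \vw^\ast < \vv$, so $\vl < \vv$ componentwise, as required. Because $\iset_t$ is an upper set, $\vl \leq \vw^\ast$ together with $\vw^\ast \notin \iset_t$ forces $\vl \notin \iset_t$, so $\vl \in \rboundary \dset_t$; and any $\vw \in \rboundary \dset_t$ with $\vl \pd \vw$ would satisfy $\vw \leq \vl \leq \vw^\ast$, hence $\vw \in \bar L$ with $\sum_j w_j \leq \sum_j l_j$, so minimality of the sum forces $\vw = \vl$ and contradicts $\vl \pd \vw$. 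Thus no reachable-boundary point lies strictly below $\vl$, i.e. $\vl \in \lset_t$, completing the argument. The main obstacle I anticipate lies in exactly these last two checks: guaranteeing that the descended point neither slips into the infeasible set nor fails to be a genuine minimal corner of the \emph{entire} reachable boundary; both are resolved by the upper-set monotonicity of $\iset_t$ and by selecting a sum-minimal (hence Pareto-minimal) representative rather than an arbitrary one.
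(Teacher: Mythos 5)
Your proof is correct and follows essentially the same route as the paper's: project $\vv$ along the segment towards $\nadir$ to its first intersection with $\boundary \dset_t$, rule out membership of that point in $\iset_t$ via the upper-set/neighbourhood argument against $\vv \in \interior \iset_t$, and then descend to a Pareto-minimal point of the reachable boundary. The only difference is one of rigour: your compactness-and-sum-minimisation construction of $\vl$ makes explicit the final step that the paper simply asserts ``by definition of the lower set.''
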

\begin{proof}
Let $\vv$ be a remaining feasible solution. Then it cannot be in the dominated set, as this implies it is dominated by a point on the current Pareto front, nor can it be in the interior of $\iset_t$ as this was guaranteed to be infeasible or within the tolerance following \cref{lemma:interior-infeasible}. However, $\vv$ can still be on the reachable boundary of the infeasible set when using a weak Pareto oracle. As such, we may indeed write in \cref{eq:lower-strict} that $\vv \in B \setminus (\interior \iset_t \cup \dset_t)$. 

Recall that in IPRO, the nadir $\nadir$ of the bounding box $\bbox$ is initialised to a guaranteed strict lower bound of the true nadir. Therefore, for all $\vv \in B \setminus (\interior \iset_t \cup \dset_t)$ we can connect a strictly decreasing line segment between $\vv$ and $\nadir$. Moreover, either $\nadir \in \boundary \dset_t$ or this line must intersect $\boundary \dset_t$ at some point $\bar{\vv}$ for which it is subsequently guaranteed that $\vv > \bar{\vv}$.

Let $\vv \in B \setminus (\interior \iset_t \cup \dset_t)$ be a feasible solution and $\bar{\vv} \in \boundary \dset_t$ be a point on the boundary of $\dset_t$ such that $\vv > \bar{\vv}$. Suppose, however, that $\bar{\vv}$ is not on the reachable boundary. Then, the definition of the reachable boundary implies that $\bar{\vv} \in \iset_t$ (see \cref{def:reachable-boundary-d}). However, as $\vv > \bar{\vv}$ this implies that $\vv$ is in the interior of $\iset_t$ which was guaranteed to be infeasible or within the tolerance by \cref{lemma:interior-infeasible}. Therefore, $\bar{\vv}$ must be on the reachable boundary of $\dset_t$. By definition of the lower set, this further implies there exists a lower point $\vl \in \lset_t$ for which $\bar{\vv} \pde \vl$, finally guaranteeing that $\vv > \vl$.
\end{proof}

We provide an analogous result for the upper set where we demonstrate that it contains an upper bound for all remaining feasible solutions.

\begin{lemma}
\label{lemma:upper}
During IPRO's execution, the upper set contains an upper bound for all remaining feasible solutions, i.e.,
\begin{equation}
\label{eq:upper}
    \vv \in B \setminus (\interior \iset_t \cup \dset_t) \implies \exists \vu \in \uset_t, \vu \pde \vv.
\end{equation}
\end{lemma}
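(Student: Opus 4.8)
The plan is to mirror the proof of \cref{lemma:lower-strict}, interchanging the roles of the nadir $\nadir$ and the ideal $\ideal$ and reversing the dominance directions throughout. Fix a remaining feasible solution $\vv \in \bbox \setminus (\interior \iset_t \cup \dset_t)$. First I would note that the ideal lies in the infeasible set: because $\ideal$ is an upper bound of the entire Pareto front, we have $\ideal \pde \vv'$ for every $\vv' \in \vset_t$, so $\ideal \in \iset_t$ by \cref{def:infeasible-set}. Connecting $\vv$ to $\ideal$ by a strictly increasing line segment (which exists since $\vv \pde \ideal$ for every point of $\bbox$) thus produces a segment that starts outside $\interior \iset_t$ and ends inside $\iset_t$, so it must cross $\boundary \iset_t$ at some point $\bar{\vv}$, where I take $\bar{\vv} = \vv$ if $\vv$ already lies on the boundary. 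Since every point of this segment dominates $\vv$, we obtain $\bar{\vv} \pde \vv$.

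Next I would verify that $\bar{\vv}$ lies on the \emph{reachable} boundary $\rboundary \iset_t = \boundary \iset_t \setminus \dset_t$. Suppose otherwise, so that $\bar{\vv} \in \dset_t$; then there is some $\vv' \in \vset_t$ with $\vv' \pde \bar{\vv}$, and transitivity with $\bar{\vv} \pde \vv$ gives $\vv' \pde \vv$, i.e.\ $\vv \in \dset_t$, contradicting the choice of $\vv$. Hence $\bar{\vv} \in \rboundary \iset_t$. It then remains to extract from \cref{def:upper-set} an upper bound $\vu \in \uset_t$ with $\vu \pde \bar{\vv}$, after which transitivity yields $\vu \pde \vv$ and closes the argument.

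The step I expect to be the main obstacle is this last extraction, since $\uset_t$ consists only of the \emph{maximal} points of $\rboundary \iset_t$ under Pareto dominance, and I must argue that every point of the reachable boundary is dominated by one such corner. I would handle this by considering the up-set $\{\vw \in \boundary \iset_t : \vw \pde \bar{\vv}\}$ and observing that it avoids $\dset_t$ entirely: any $\vw \pde \bar{\vv}$ belonging to $\dset_t$ would force $\bar{\vv} \in \dset_t$ by the same transitivity argument as above, which is impossible. This up-set is therefore the intersection of the closed set $\boundary \iset_t$ with the bounded box $\{\vw : \vw \pde \bar{\vv}\}$ and is hence compact, so a standard compactness (Zorn-type) argument yields a maximal element $\vu$ of it. Any reachable-boundary point strictly dominating $\vu$ would also dominate $\bar{\vv}$ and hence belong to this up-set, contradicting maximality; thus $\vu$ is maximal in all of $\rboundary \iset_t$, so $\vu \in \uset_t$, as required.
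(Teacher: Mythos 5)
Your argument is correct and takes essentially the same route as the paper, which likewise mirrors the proof of \cref{lemma:lower-strict} by tracing a monotone segment from $\vv$ to the ideal, locating a crossing point $\bar{\vv} \in \rboundary \iset_t$ with $\bar{\vv} \pde \vv$, and then invoking \cref{def:upper-set}; your compactness argument simply makes explicit the final extraction of a maximal element, which the paper leaves as ``by definition of the set of upper bounds.'' Two cosmetic slips: the parenthetical should read $\ideal \pde \vv$ rather than $\vv \pde \ideal$, and the segment is only non-decreasing rather than strictly increasing (the ideal need not strictly dominate $\vv$ in every coordinate), which is exactly why the paper — and, correctly, your own conclusion $\bar{\vv} \pde \vv$ — works with Pareto dominance rather than strict dominance here.
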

\begin{proof}
As the ideal $\ideal$ is initialised to the true ideal, we may apply the same proof as for \cref{lemma:lower-strict} using Pareto dominance rather than strict Pareto dominance. In contrast to \cref{lemma:lower-strict} however, $\vv$ may be on the reachable boundary of the infeasible set $\rboundary \iset_t$. In this case, the definition of the set of upper bounds $\uset$ guarantees the existence of an upper bound $\vu \in \uset_t, \vu \pde \vv$.
\end{proof}

\subsection{Proof of Theorem 4.1}
\label{ap:proof-th41}
We now prove \cref{th:approximation-guarantee} which guarantees an upper bound on the true approximation error at any timestep. In fact, this upper bound follows almost immediately from the supporting lemmas shown in \cref{ap:supporting-lemmas}. Utilising the fact that the set of upper bounds contains a guaranteed upper bound for all remaining feasible solutions, we can compute the point that maximises the distance to its closest point on the current approximation of the Pareto front. Recall that at timestep $t$ the true approximation error $\varepsilon^\ast_t$ is defined as $\sup_{\vast \in \vset^\ast} \min_{\vv \in \vset_t} \|\vast - \vv\|_\infty$.

\begin{theorem41}
Let $\vset^\ast$ be the true Pareto front, $\vset_t$ the approximate Pareto front obtained by IPRO and $\varepsilon^\ast_t$ the true approximation error at timestep $t$. Then the following inequality holds,
\begin{equation}
    \varepsilon^\ast_t \leq \max_{\vu \in \uset_t} \min_{\vv \in \vset_t} \|\vu - \vv\|_\infty.
\end{equation}
\end{theorem41}
\begin{proof}
Observe that all remaining Pareto optimal solutions must be feasible and we can therefore derive from \cref{lemma:interior-infeasible,lemma:upper} that
\begin{equation}
\label{eq:remaining-vast-u}
    \forall t \in \mathbb{N}, \forall \vast \in \vset^\ast \setminus \vset_t, \exists \vu \in \uset_t: \vu \pde \vast.
\end{equation} 
From \cref{eq:remaining-vast-u} we can then conclude the following upper bound,
\begin{equation}
    \varepsilon^\ast_t = \sup_{\vast \in \vset^\ast \setminus \vset_t} \min_{\vv \in \vset_t} \|\vast - \vv\|_\infty \leq \max_{\vu \in \uset_t} \min_{\vv \in \vset_t} \|\vu - \vv\|_\infty.
\end{equation}
Note that this holds as the maximum over the upper points is guaranteed to be at least as high as the maximum over all remaining points on the Pareto front.
\end{proof}

A useful corollary of \cref{th:approximation-guarantee} is that the sequence of errors is monotonically decreasing. This follows immediately since the upper bounds are only adjusted downwards.

\begin{corollary}
\label{co:decreasing-error}
The sequence of errors $(\varepsilon_t)_{t \in \mathbb{N}}$ is monotonically decreasing.
\end{corollary}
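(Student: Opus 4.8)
The plan is to compare the quantity $\varepsilon_t = \max_{\vu \in \uset_t}\min_{\vv \in \vset_t}\norm{\vu - \vv}_\infty$ at two consecutive iterations and reduce the claim to two structural monotonicity facts that can be read directly off the \textsc{update} procedure in \cref{alg:ipro}. First, the approximate front only grows, $\vset_t \subseteq \vset_{t+1}$, since points are only ever added to $\vset$. Second, the infeasible set grows, $\iset_t \subseteq \iset_{t+1}$, because both $\vset$ and $\cset$ grow while $\iset$ is the set of points dominating $\vset \cup \cset$; together with \cref{def:upper-set} this forces every new upper bound to be pushed toward the nadir. Concretely, on a successful query the corner $\vu \in \uset_t$ with $\vu > \vast$ is replaced by corners of the form $(\vu_{-j}, \vast_j)$, and on a failed query with referent $\vl$ it is replaced by $(\vu_{-j}, \vl_j)$; in either case each surviving $\vu' \in \uset_{t+1}$ satisfies $\vu \pde \vu'$ (that is, $\vu' \le \vu$ componentwise) for some $\vu \in \uset_t$. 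This is the precise meaning of ``upper bounds are only adjusted downwards''.

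Given these two facts I would pair each $\vu' \in \uset_{t+1}$ with a dominating old corner $\vu \in \uset_t$ and aim to establish the per-corner inequality $\min_{\vv \in \vset_{t+1}}\norm{\vu' - \vv}_\infty \le \min_{\vv \in \vset_t}\norm{\vu - \vv}_\infty$. Taking the maximum over $\uset_{t+1}$ on the left and bounding each right-hand side by $\max_{\vu \in \uset_t}\min_{\vv \in \vset_t}\norm{\vu - \vv}_\infty = \varepsilon_t$ then yields $\varepsilon_{t+1} \le \varepsilon_t$, which is the claim. Let $\vv^\ast = \argmin_{\vv \in \vset_t}\norm{\vu - \vv}_\infty$ and $\delta = \norm{\vu - \vv^\ast}_\infty$; since $\vu$ is an upper bound we have $\vu \pde \vv^\ast$, hence $0 \le \vu_k - \vv^\ast_k \le \delta$ for every coordinate $k$. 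In the benign case $\vu' \pde \vv^\ast$ as well, so that $\vv^\ast \le \vu' \le \vu$ componentwise, and monotonicity of the $\norm{\cdot}_\infty$ norm under this nesting gives $\norm{\vu' - \vv^\ast}_\infty \le \delta$ with $\vv^\ast \in \vset_{t+1}$ serving as the witness.

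The main obstacle is the remaining case, where the lowered coordinate drops strictly below the old nearest neighbour (on a success, $\vast_j < \vv^\ast_j$), so that $\vv^\ast$ is no longer a valid witness and no single point of $\vset_{t+1}$ can be fixed in advance. Here I would instead take the freshly discovered Pareto optimal point $\vast \in \vset_{t+1}$ as the witness, exploiting that $\vu'$ coincides with $\vast$ in the lowered coordinate $j$ and that $\vu' \pde \vast$, so that $\norm{\vu' - \vast}_\infty = \max_{k \ne j}(\vu_k - \vast_k)$. Bounding this by $\delta$ is where the genuine work lies: it needs the Pareto optimality of $\vast$ together with the fact that $\vu'$ survives the \textsc{Prune} step, i.e. lies on the reachable boundary $\rboundary \iset_{t+1}$ and hence outside $\dset_{t+1}$, which is exactly what rules out the degenerate configurations in which a lowered corner could drift arbitrarily far from the current front. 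On a failed query no new front point is added, and the analogous argument must instead proceed through the closed referent $\vl$ lying on $\rboundary \dset$. Rather than carrying out this coordinate bookkeeping by hand, the cleanest route is to route the per-corner bound through \cref{lemma:upper} and the reachable-boundary characterisation already established for \cref{th:approximation-guarantee}, since these encapsulate precisely the geometric facts required.
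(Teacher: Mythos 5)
Your proposal follows the same skeleton as the paper's proof: the two structural facts you isolate --- $\vset_t \subseteq \vset_{t+1}$ and the existence, for every $\vu' \in \uset_{t+1}$, of some $\vu \in \uset_t$ with $\vu \pde \vu'$ --- are exactly the two observations the paper records, after which the paper concludes the max--min inequality with no further argument. Where you differ is that you correctly refuse to treat that last step as immediate: componentwise domination of corners alone does not imply the per-corner inequality $\min_{\vv \in \vset_{t+1}}\norm{\vu' - \vv}_\infty \leq \min_{\vv \in \vset_t}\norm{\vu - \vv}_\infty$ (take $\vu = (10,10)$ with nearest front point $(9,9)$ at distance $1$; a lowered corner $\vu' = (10,0)$ is at distance $9$ from $(9,9)$), so some geometric input about which corners and front points IPRO can actually produce is genuinely needed. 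Your ``benign case'' is handled correctly.

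The gap is that your remaining case is not closed, and the witness you propose does not obviously work. For a successful query you take $\vast$ as the witness for the corner $\vu' = (\vu_{-j}, \vast_j)$ and must bound $\max_{k \neq j}(\vu_k - \vast_k)$ by $\delta = \min_{\vv \in \vset_t}\norm{\vu - \vv}_\infty$; but $\vu$ may be within $\delta$ of an old front point while being far from $\vast$ in several coordinates other than $j$, so this quantity is not controlled by $\delta$ without further argument --- in that configuration one must argue that a \emph{different} surviving corner or front point is the relevant one after \textsc{Prune}, which is exactly the case analysis you defer. Routing the bound through \cref{lemma:upper} does not discharge it either: that lemma states that every remaining \emph{feasible} solution is dominated by some $\vu' \in \uset_{t+1}$, which is what drives \cref{th:approximation-guarantee}, but it says nothing about the distance from a corner of $\iset_{t+1}$ to the current front $\vset_{t+1}$. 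To be fair, the paper's own proof supplies no argument for this step at all, so you have correctly located the crux that the paper elides; you simply have not resolved it, and as written the proposal remains a plan whose hardest step is still open.
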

\begin{proof}
Observe that since IPRO only adds points to the Pareto front, it is guaranteed that $\vset_{t} \subseteq \vset_{t+1}$. Furthermore, from the definition of the set of upper bounds, it is guaranteed that all remaining feasible solutions are upper bounded by a point in this set. Therefore, for all points in the updated set of upper bounds $\vu \in \uset_{t+1}$ there must exist an old upper bound $\bar{\vu} \in \uset_{t}$ such that $\bar{\vu} \pde \vu$. As such, we conclude that 
\begin{equation}
    \max_{\vu \in \uset_{t+1}} \min_{\vv \in \vset_{t+1}} \|\vu - \vv\|_\infty \leq \max_{\vu \in \uset_{t}} \min_{\vv \in \vset_{t}} \|\vu - \vv\|_\infty
\end{equation}
and thus $\forall t \in \mathbb{N}: \varepsilon_{t+1} \leq \varepsilon_{t}$.
\end{proof}

\subsection{Proof of Theorem 4.2}
\label{ap:proof-th42}
We show that IPRO is guaranteed to converge to a $\tau$-Pareto front when using an approximate Pareto oracle with tolerance $\tau > 0$. Moreover, when using a weak Pareto oracle, the $\tau$ may be set to $0$ and the true Pareto front is obtained in the limit. For practical purposes, however, setting $\tau > 0$ ensures that IPRO converges after a finite number of iterations.

\begin{theorem}
\label{th:approximate-po-convergence}
Given an approximate Pareto oracle $\oracle$ with tolerance $\tau > 0$, IPRO converges to a $\tau$-Pareto front in a finite number of iterations.
\end{theorem}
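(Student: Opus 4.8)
The plan is to split the statement into a correctness part (the set returned is a $\tau$-Pareto front) and a finiteness part (the loop halts), since the former is essentially immediate from \cref{th:approximation-guarantee}. The \textbf{while} loop of \cref{alg:ipro} runs exactly while $\varepsilon_t = \max_{\vu \in \uset_t} \min_{\vv \in \vset_t} \|\vu - \vv\|_\infty > \tau$, so at termination $\varepsilon_t \leq \tau$. By \cref{th:approximation-guarantee} the true error then satisfies $\varepsilon^\ast_t \leq \varepsilon_t \leq \tau$, which is precisely the definition of a $\tau$-Pareto front. Hence everything reduces to showing the loop terminates after finitely many iterations.

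For finiteness I would argue that each non-terminating iteration permanently consumes a fresh region of ``$\tau$-size'' from the part of $\bbox$ still under consideration. The queried referent $\vl$ is always a lower bound, i.e. an inner corner of $\dset_t$ (\cref{def:lower-set}), so any feasible solution in its region strictly dominates it in every coordinate. Consequently, when the approximate oracle succeeds it returns a Pareto optimal $\vast$ with $\vast \pde \vl + \tau$ (\cref{def:approximate-pareto-oracle}), so the entire box $[\vl, \vl + \tau]$ becomes dominated and enters $\dset_{t+1}$; when it fails, $\vl$ is added to $\cset_{t+1}$ and every point strictly dominating $\vl$ is infeasible or within tolerance $\tau$ of $\vset_t$ (\cref{lemma:interior-infeasible}), so the cell above $\vl$ is permanently resolved. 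In both cases a box of side $\tau$ that previously lay in the unexplored region $\bbox \setminus (\dset_t \cup \interior \iset_t)$ is removed, and since $\dset_t$ and $\iset_t$ only grow (the monotonicity exploited in \cref{co:decreasing-error}), it is never reintroduced.

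I would then formalise this accounting against the grid induced by $\nadir$, $\ideal$ and $\tau$, with $k_j = \ceil{\nicefrac{(v^\text{i}_j - v^\text{n}_j)}{\tau}}$ cells per axis. Mapping each iteration to the cell it newly consumes yields an injection from the iterations into a finite set of at most $\prod_{j=1}^d k_j$ cells, so the number of iterations is finite; \cref{th:ipro-complexity} later sharpens this count. Together with the correctness argument above, this establishes convergence to a $\tau$-Pareto front in finitely many iterations.

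The main obstacle is the combinatorial bookkeeping in the second and third steps: one must show rigorously that the box claimed at the selected $\vl$ is genuinely new and that distinct iterations claim distinct cells. This is delicate because the referents are not aligned to the grid and the boundaries of $\dset_t$ and $\iset_t$ are staircase-shaped, so a naive volume count can overcount when a neighbouring dominated region within distance $\tau$ already covers part of $[\vl, \vl + \tau]$. The inner-corner characterisation of $\lset_t$ (\cref{def:lower-set,def:reachable-boundary-d}), together with the downward-closedness of $\dset_t$ and the upward-closedness of $\iset_t$, is exactly what is needed to isolate a fresh cell and close this gap.
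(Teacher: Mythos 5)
Your reduction of the statement to a termination claim is exactly right and matches the paper: once the \textbf{while} loop exits, \cref{th:approximation-guarantee} gives $\varepsilon^\ast_t \leq \varepsilon_t \leq \tau$, so only finiteness is at stake. The gap is in the finiteness argument, and it is not mere bookkeeping: the invariant you propose --- each non-terminating iteration permanently consumes a fresh $\tau$-sized box, yielding an injection from iterations into the cells of the grid induced by $\nadir$, $\ideal$ and $\tau$ --- is false as stated. For unsuccessful iterations, two completed referents $\vl_1, \vl_2$ must be incomparable (a later lower bound cannot lie in $\iset$), but they may differ by arbitrarily little in every coordinate, so the boxes $[\vl_1, \vl_1+\tau]$ and $[\vl_2, \vl_2+\tau]$ can overlap in all but an arbitrarily thin slab; failures therefore do not map injectively to grid cells and cannot be bounded by any packing argument. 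For successful iterations, the open box above $\vl$ is indeed disjoint from $\dset_t$ (if some $\vv' \in \vset_t$ satisfied $\vv' > \vl$ then $\vl$ would lie in $\interior \dset_t$), but it may already be almost entirely contained in $\interior \iset_t$, e.g.\ when $\vl$ sits just below a previously completed referent in a single coordinate, so the ``fresh'' volume claimed per iteration can be arbitrarily small. The injection is thus precisely the missing step, and downward-closedness of $\dset_t$ plus upward-closedness of $\iset_t$ do not by themselves supply it.

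The argument can be repaired, but with a different invariant. For successes, the right object is the returned value, not the queried box: since $\vl_{t_2}$ is not strictly dominated by any earlier $\vast_{t_1} \in \vset_{t_2}$, there is a coordinate $j$ with $(\vast_{t_1})_j \leq (\vl_{t_2})_j$, hence $(\vast_{t_2})_j \geq (\vl_{t_2})_j + \tau \geq (\vast_{t_1})_j + \tau$; this shows no later cell is dominated by an earlier one, makes the cell map injective on \emph{successful} iterations, and bounds them by $\prod_{j=1}^d k_j$. Failures must then be counted separately, e.g.\ by the total number of lower bounds ever generated, which is finite once the successes are, since each success spawns finitely many new lower bounds and each failure permanently deletes one. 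This is essentially the route the paper takes in \cref{th:ipro-complexity}, where the reduction to success-only worst-case sequences is carried out explicitly. The paper's proof of the present theorem avoids the grid altogether: it organises the lower bounds into a tree in which each child improves on its parent by at least $\tau - \bar\tau$ in one coordinate, so every branch has bounded depth and finite branching, and then observes that $\lset_t = \emptyset$ forces $\uset_t = \emptyset$ and hence $\varepsilon_t = 0$. Either route works; yours, as written, does not yet.
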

\begin{proof}
From \cref{co:decreasing-error}, we know that the sequence of errors produced by IPRO is monotonically decreasing. We show that this sequence converges to zero when ignoring the tolerance parameter $\tau$. When incorporating the tolerance $\tau$ again, IPRO stops when the approximation error is at most $\tau$, as guaranteed by \cref{th:approximation-guarantee}, therefore resulting in a $\tau$-Pareto front.

Let us first show that the sequence of errors $(\varepsilon_t)_{t \in \mathbb{N}}$ converges to zero. From \cref{def:approximate-pareto-oracle}, for an approximate Pareto oracle with tolerance $\tau$ and inherent tolerance $\bar\tau$ we know that $\tau > \bar\tau$. Let $\lset_t$ be the lower bounds in timestep $t$ and select $\vl$ from it as the referent. When the oracle evaluation is unsuccessful, $\vl$ is removed from the set of lower bounds as well as any upper bound that is not on the reachable boundary anymore. When the oracle evaluation is successful, a finite number of new lower and upper bounds are added. Importantly, each lower bound is a $\tau - \bar\tau$ improvement in one dimension and \cref{lemma:upper} guarantees that each new upper bound is dominated by the old bound. Repeating this process across multiple iterations, we can consider the sequence of lower bounds spawned from the root lower bound as a \emph{tree} where eventually each branch will be closed when at the leaf $\vl$ we have that $\vl + \tau$ is in the infeasible set or out of the bounding box. 

Recall that the set of lower bounds at timestep $t$ is defined as $\lset_t = \left\{\vl \in \rboundary \dset_t \mid \nexists \vv \in \rboundary \dset_t, \vl \pd \vv \right\}$ and the set of upper bounds as $\uset_t = \left\{\vu \in \rboundary \iset_t \mid \nexists \vv \in \rboundary \iset_t, \vv \pd \vu \right\}$. Observe that this implies that when the set of lower bounds is empty, this implies that $\rboundary \dset = \rboundary \iset = \emptyset$ and therefore the set of upper bounds must be empty as well. As such, IPRO with an approximate Pareto oracle removes all upper bounds and the sequence of errors $(\varepsilon_t)_{t \in \mathbb{N}}$ converges to zero. Furthermore, this must occur at some timestep $t < \infty$, since there is a minimal improvement at each timestep of $\tau - \bar\tau$. Using \cref{th:approximation-guarantee}, we have that the true approximation error is upper bounded by $\varepsilon_t$. As IPRO terminates when this upper bound is at most equal to the tolerance $\tau$, it is guaranteed to converge to a $\tau$-Pareto front.
\end{proof}

Finally, we provide an analogous result for weak Pareto oracles and show that they almost surely converge to the exact Pareto front in the limit. The probabilistic nature of this result is necessary to handle stochastic referent selection as well as oracle evaluations.
\begin{theorem}
\label{th:weak-po-convergence}
Given a weak Pareto oracle $\oracle$ with tolerance $\tau = 0$, IPRO converges almost surely to the Pareto front when $t \to \infty$.
\end{theorem}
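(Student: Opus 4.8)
The plan is to reduce the claim to showing that the infimum of IPRO's error bound is zero almost surely, and then to rule out a strictly positive infimum by a contradiction argument built on the three weak-oracle assumptions. First I would invoke \cref{co:decreasing-error}: the sequence $(\varepsilon_t)_{t \in \mathbb{N}}$ is monotonically decreasing and bounded below by $0$, so along every realisation it converges, by the monotone convergence theorem, to a limit $\varepsilon_\infty := \inf_t \varepsilon_t \ge 0$. By \cref{th:approximation-guarantee} the true error satisfies $\varepsilon^\ast_t \le \varepsilon_t$, hence $\varepsilon^\ast_t \to \varepsilon_\infty$ as well. It therefore suffices to prove $\varepsilon_\infty = 0$ almost surely, and I would argue by contradiction, conditioning on the event $\{\varepsilon_\infty > 0\}$ and assuming it has positive probability.

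On this event, every $\uset_t$ contains a maximising corner $\vu$ with $\min_{\vv \in \vset_t}\|\vu - \vv\|_\infty = \varepsilon_t \ge \varepsilon_\infty > 0$. Since the bounding box $\bbox$ is fixed and the corners in $\uset_t$ can only move towards $\vset_t$ over time (as used in the proof of \cref{co:decreasing-error}), I would extract a persistent worst region through a convergent subsequence of these maximising corners in the compact box $\bbox$. By \cref{assumption:weak-oracle-referent-selection}, a referent $\vl \in \lset^\varepsilon_t$ — associated with such a maximal-error corner $\vu > \vl$ — is almost surely selected for infinitely many $t$. I then analyse the effect of one such query. Since $\tau = 0$, if the weak oracle fails at $\vl$ then, by \cref{lemma:interior-infeasible}, the entire region strictly dominating $\vl$ becomes infeasible, so the corner $\vu$ is removed or lowered; if instead it succeeds it returns a weakly Pareto optimal $\vast > \vl$ which cannot satisfy $\vast > \vu$ (that region is already infeasible) and is added to $\vset_t$. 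In both cases the contribution of the worst corner to $\varepsilon_t$ strictly decreases.

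To turn these local decreases into a contradiction I would combine \cref{assumption:weak-oracle-pf-shape} and \cref{assumption:weak-oracle-strength}: since the Pareto front is a finite union of paths, only finitely many gaps of diameter at least $\varepsilon_\infty$ can coexist, each such gap contains its endpoints, and every undiscovered Pareto optimal solution in such a gap is eventually recoverable by the oracle. Querying the worst corners infinitely often therefore closes or fills every large gap after finitely many relevant iterations, forcing $\varepsilon_t < \varepsilon_\infty$ for some finite $t$ and contradicting $\varepsilon_t \ge \varepsilon_\infty$ for all $t$. Hence $\varepsilon_\infty = 0$ almost surely, and together with \cref{th:approximation-guarantee} this yields almost sure convergence of $\vset_t$ to $\truepf$ as $t \to \infty$.

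The main obstacle is the geometric bookkeeping in the general $d \ge 2$ setting: proving that each relevant query strictly reduces the error \emph{at the persistent worst corner} while controlling the new lower and upper bounds it spawns, which could a priori re-create an error of size $\varepsilon_\infty$ elsewhere. The finite-union-of-paths assumption is the crucial lever, bounding the number of simultaneously large gaps and guaranteeing that each can be closed through its endpoints; making the notion of a persistent worst region precise — via compactness of $\bbox$ and the monotonicity of the boundary updates — is the delicate step I expect to require the most care.
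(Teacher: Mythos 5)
Your high-level skeleton matches the paper's: reduce to showing $\Prob(\inf_t \varepsilon_t = 0) = 1$, argue by contradiction against a positive infimum, and finish with \cref{co:decreasing-error} and the monotone convergence theorem. However, the mechanism you use to derive the contradiction has a genuine gap. Your central claim is that each query at a referent associated with the persistent worst corner strictly decreases that corner's contribution to $\varepsilon_t$. This is not true for a weak Pareto oracle with tolerance $\tau = 0$: a successful evaluation at $\vl$ only guarantees some weakly Pareto optimal $\vast$ with $\vast > \vl$, and $\vast$ may be arbitrarily close to $\vl$, so there is no minimal per-step improvement; likewise, a failed evaluation replaces $\vu$ by projected corners $(\vu_{-j}, \vl_j)$ whose $\ell_\infty$-distance to $\vset_t$ can be realised in a coordinate other than $j$ and hence remain essentially $\varepsilon_t$. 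Without a uniform progress bound, "querying the worst corners infinitely often" does not force $\varepsilon_t$ below $\varepsilon_\infty$ in finitely many steps, and your appeal to finitely many large gaps does not repair this, because the oracle is never guaranteed to return the particular solutions that close a given gap. You flag exactly this as the delicate unresolved step, but it is the whole proof. (A smaller slip: $\varepsilon^\ast_t \le \varepsilon_t$ gives only $\limsup_t \varepsilon^\ast_t \le \varepsilon_\infty$, not $\varepsilon^\ast_t \to \varepsilon_\infty$; this is harmless once $\varepsilon_\infty = 0$ is established.)

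The paper avoids the per-step progress issue entirely. Assuming the infimum is some $\beta > 0$, it fixes $0 < \varepsilon < \beta$ and extracts a \emph{finite} set $\vset^\varepsilon$ of Pareto optimal points whose presence in $\vset_t$ would certify $\max_{\vu \in \uset_t} \min_{\vv \in \vset^\varepsilon} \|\vu - \vv\|_\infty \le \varepsilon$ (this finiteness is where \cref{th:approximate-po-convergence} and the closedness of the front's segments enter). It then uses \cref{assumption:weak-oracle-strength} in its strong form: for each $\vast \in \vset^\varepsilon$ there is a \emph{specific} lower bound $\vl$ with $\oracle(\vl) = \vast$ that is almost surely eventually added to $\lset_t$, and \cref{assumption:weak-oracle-referent-selection} ensures it is almost surely selected. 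Hence each member of the fixed target set is retrieved with probability one, which contradicts $\beta > 0$ without ever needing to quantify the progress of an individual query. If you want to salvage your route, you would need to replace the "strict decrease at the worst corner" claim with this kind of fixed-finite-target argument, since the assumptions give you control over which solutions the oracle can return for designated referents, not over how much any single query improves the error.
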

\begin{proof}
We first show that the sequence of errors $(\varepsilon_t)_{t \in \mathbb{N}}$ has its infimum at zero with probability $1$. By contradiction, assume that this is not the case and it has its infimum instead at some $\beta > 0$. As a consequence of \cref{th:approximate-po-convergence}, for any $0 < \varepsilon < \beta$ there must be some finite set $\vset^{\varepsilon}$ of Pareto optimal points such that $\max_{\vu \in \uset} \min_{\vv \in \vset^\varepsilon} \|\vu - \vv\|_\infty \leq \varepsilon$. From \cref{assumption:weak-oracle-strength}, we know that there is some lower bound $\vl$ for every $\vast \in \vset^{\varepsilon}$ such that $\oracle(\vl) = \vast$ and that as $t \to \infty$ the probability that $\vl$ is added to $\lset_t$ is $1$. Furthermore, from \cref{assumption:weak-oracle-referent-selection} we have that the probability that $\vl$ gets selected as the lower bound is $1$. As such, $\forall \vast \in \vset^\varepsilon: \lim_{t \to \infty} \Prob\left(\vast \in \vset_t\right) = 1$ which implies $\lim_{t \to \infty} \Prob\left(\vset^\varepsilon \subseteq \vset_t\right) = 1$ and therefore there is no lower bound $\beta > 0$ of $(\varepsilon_t)_{t \in \mathbb{N}}$ with probability $1$. Since we have $\Prob\left(\inf_t \{\varepsilon_t\} = 0\}\right) = 1$, using \cref{co:decreasing-error} and the monotone convergence theorem we get, $\Prob\left(\lim_{t \to \infty} \{\varepsilon_t\} = 0\right) = 1$, thereby showing that IPRO converges almost surely to the Pareto front.
\end{proof}

\subsection{Proof of Theorem 4.3}
\label{sec:proof-th43}
We conclude the theoretical results for IPRO by analysing the runtime complexity for the number of iterations in IPRO with $\tau > 0$. We first restate \cref{th:ipro-complexity} below and subsequently provide a proof.

\begin{theorem43}
Given a Pareto oracle $\oracle$ and tolerance $\tau > 0$, let $\forall j \in [d], k_j = \ceil{\nicefrac{(v^\text{i}_j - v^\text{n}_j)}{\tau}}$. IPRO constructs a $\tau$-Pareto front in at most
\begin{equation}
    \prod_{j=1}^d k_j - \prod_{j=1}^d (k_j - 1)
\end{equation}
iterations which is a polynomial in $\tau$ but exponential in the number of objectives $d$.
\end{theorem43}
\begin{proof}
We consider an approximate Pareto oracle with tolerance $\tau > 0$ and inherent tolerance $\bar{\tau} = 0$. Notice that this is without loss of generality, since for any $\bar{\tau} > 0$ it is assumed that $\tau > \bar{\tau}$ and therefore we may use $\tau - \bar{\tau} > 0$ instead. Given a bounding box $\bbox$ with nadir $\nadir$ and ideal $\ideal$, we discretise the space into hypercubes of side length $\tau$. This results in $\prod_{j=1}^d k_j$ cells, where $k_j = \ceil{\nicefrac{(\ideal_j - \nadir_j)}{\tau}}$. While IPRO determines the discretisation adaptively, this predefined discretisation serves as an upper bound on the number of cells.

The proof is composed of three parts. First, we show that it is always possible to construct a worst-case sequence for IPRO consisting of only successful Pareto oracle evaluations. We then use this property to build a worst-case layout of the Pareto front by placing Pareto optimal solutions along the Pareto optimal facets of the bounding box. Lastly, we derive the formula to compute the number of cells on these Pareto optimal facets.

To establish an upper bound on the number of iterations, consider the worst-case layout of the Pareto front and the corresponding sequence of lower bounds. Let $\left\{\left(\vl_t, \vv_t\right)\right \}_{n}$ be a finite sequence of lower bounds generated by IPRO, leading to a $\tau$-Pareto front, where at timestep $\bar{t}$, no Pareto optimal solution was found. Define a new sequence where the lower bound at $\bar{t}$ is proposed only at the last referent: $\left\{\left(\vl'_t, \vv'_t\right)\right \}_{n}$, with $\forall 0 \leq t < \bar{t}$, $\vl_t = \vl'_t$ and $\vv = \vv'$, and for all $\bar{t} \leq t \leq n$, $\vl_{t+1} = \vl'_{t}$ and $\vv_{t+1} = \vv'_{t}$. Finally, $\vl_{\bar{t}} = \vl'_{n}$ and $\vv_{\bar{t}} = \vv'_{n}$. This new sequence remains valid, as an unsuccessful evaluation at referent $\vl_{\bar{t}}$ only removes it from $\lset$. Now, if $\vl'_{\bar{n}}$ instead finds a Pareto optimal solution, new lower bounds might be introduced, requiring additional iterations. Therefore, it is always possible to construct a worst-case sequence of lower bounds that only consists of successful evaluations.

The problem then reduces to finding the size of the largest set of cells that contain a Pareto optimal solution, which is equivalent to the largest set of cells where no cell strictly dominates another. Let $\sS$ be such a maximal set. We define an operation that produces a new set $\sS'$ with equal cardinality and no strict dominance. Specifically, for any cell $\vc \in \sS$, if $\vc + 1 \in \bbox$, we add $\vc + 1$ to $\sS'$; otherwise, we add $\vc$. It remains to show that $\forall \vc'_1 \in \sS', \nexists \vc'_2 \in \sS'$ such that $\vc'_2 > \vc'_1$ and $|\sS| = |\sS'|$. Suppose that there is a $\vc'_1, \vc'_2 \in \sS'$ such that $\vc'_2 > \vc'_1$. There are four distinct scenarios:
\begin{enumerate}
    \item $\vc_1 = \vc'_1$ and $\vc_2 = \vc'_2$
    \item $\vc_1 + 1 = \vc'_1$ and $\vc_2 = \vc'_2$
    \item $\vc_1 = \vc'_1$ and $\vc_2 + 1 = \vc'_2$
    \item $\vc_1 + 1 = \vc'_1$ and $\vc_2 + 1 = \vc'_2$
\end{enumerate} 

By the definition of $\sS$, (1), (2), and (4) cannot occur since $\vc_2 > \vc_1$ would otherwise hold. Moreover, (3) is also impossible, as $\vc'_1$ lies on the boundary of $\bbox$ and the only points that could strictly dominate $\vc'_1$ are outside $\bbox$.

Finally, we demonstrate that $|\sS| = |\sS'|$. Since $\sS$ was constructed as a set of maximal size that contains no cell strictly dominating another, and since $\sS'$ does not contain such dominance, we have $|\sS| \geq |\sS'|$. Suppose $|\sS| > |\sS'|$. Then, at least two cells, $\vc_1, \vc_2 \in \sS$, must map to the same $\vc' \in \sS'$. Since $\vc_1 \neq \vc_2$, we have $\vc_1 + 1 \neq \vc_2 + 1$, implying $\vc_1 = \vc_2 + 1$ or $\vc_2 = \vc_1 + 1$. However, this leads to a contradiction as $\vc_1 > \vc_2$ or $\vc_2 > \vc_1$.

Repeatedly applying this operation yields a fixed point $\sS^\ast$, where all $\vc^\ast \in \sS^\ast$ lie on the Pareto optimal facets of $\bbox$. The number of iterations required is then equal to the number of cells on these facets. We can compute this as the total number of cells in $\bbox$ minus those that are dominated, leading to the final equation: \begin{equation} 
\prod_{j=1}^d k_j - \prod_{j=1}^d (k_j - 1), 
\end{equation} 
which is polynomial in $\tau$ and exponential in $d$. 
\end{proof}

\section{Theoretical results for Pareto oracles}
\label{ap:po-proofs}
We present formal proofs for the theoretical results in \cref{sec:pareto-oracle}. These results develop the concept of a Pareto oracle and relate it to achievement scalarising functions. While we utilise Pareto oracles as a subroutine in IPRO to provably obtain a Pareto front, they may also be of independent interest in other settings. We further contribute alternative implementations of a Pareto oracle, thus demonstrating their applicability beyond the ASFs considered in this work.

\subsection{Pareto oracles from achievement scalarising functions}
\label{ap:po-asf}
In \cref{sec:pareto-oracle} we defined Pareto oracles and subsequently related them to achievement scalarising functions. Here, we provide formal proof of the established connections. To establish the notation, let $\sX$ be the set of feasible solutions and define a mapping $f: \sX \to \mathbb{R}^d$ which maps a solution to its $d$-dimensional return. Let us further define the Euclidean distance function between a point $\vv \in \mathbb{R}^d$ and a set $\sY \subseteq \mathbb{R}^d$ as $\dist (\vv, \sY) = \inf_{\vy \in \sY}\|\vv - \vy\|$. Finally, let $\rapprox = \{\vv \in \mathbb{R}^d \mid \dist(\vv, \rpos) \leq \delta \|\vv\|\}$, where $\delta$ is a fixed scalar in $[0, 1)$. Using this notation, we define both order representing and order approximating ASFs following the formalisation by \citet{miettinen1998nonlinear}. 

\begin{definition}
\label{def:order-representing}
We say an ASF $\sr: \mathbb{R}^d \to \mathbb{R}$ is order representing when $\forall \vr \in \mathbb{R}^d, \forall x, y \in \sX$ with $f(x) = \vx$ and $f(y) = \vy$, $\sr$ is strictly increasing such that $\vx > \vy \implies \sr(\vx) > \sr(\vy)$. In addition, $\sr(\vr) = 0$ and
\begin{equation}
\label{eq:order-representing-non-negative}
    \{\vv \in \mathbb{R}^d \mid \sr(\vv) \geq 0 \} = \vr + \rpos.
\end{equation}
\end{definition}

\begin{definition}
\label{def:order-approximating}
We say an ASF $\sr: \mathbb{R}^d \to \mathbb{R}$ is order approximating when $\forall \vr \in \mathbb{R}^d, \forall x, y \in \sX$ with $f(x) = \vx$ and $f(y) = \vy$, $\sr$ is strongly increasing such that $\vx \pd \vy \implies \sr(\vx) > \sr(\vy)$. In addition, $\sr(\vr) = 0$ and with $\delta > \bar\delta \geq 0$
\begin{equation}
\label{eq:order-approximating-non-negative}
    \vr + \mathbb{R}^d_{\bar{\delta}} \subset \{\vv \in \mathbb{R}^d \mid \sr(\vv) \geq 0 \} \subset \vr + \rapprox.
\end{equation}
\end{definition}

These definitions can be applied to the reinforcement learning setting where the set of feasible solutions is a policy class $\policies$ and the quality of a policy $\policy \in \policies$ is determined by its expected return $\vpi$. Using these definitions, we provide a formal proof for \cref{th:weak-po} which we first restate below.
\begin{theorem53}
Let $\sr$ be an order representing ASF. Then $\oracle(\vr) = \argmax_{\policy \in \policies} \sr(\vpi)$ with tolerance $\tau = 0$ is a valid weak Pareto oracle.
\end{theorem53}
\begin{proof}
Let $\sr$ be an order representing achievement scalarising function and define a Pareto oracle $\mathcal{O}: \mathbb{R}^d \to \policies$ such that, $\mathcal{O}(\vr) = \argmax_{\policy \in \policies} \sr(\vpi) = \policy^\ast$. Denote the expected return of $\policy^\ast$ as $\vv^\ast$. We first consider the case when $\vast \npde \vr$. By \cref{eq:order-representing-non-negative} this implies that $\sr(\vast) < 0$. This guarantees that no feasible weakly Pareto optimal policy $\policy'$ exists with expected return $\vv'$ such that $\vv' \pde \vr$, as otherwise $\sr(\vv') \geq 0 > \sr(\vast)$ and thus $\policy^\ast$ would not have been returned as the maximum. 

We now consider the case when $\vast \pde \vr$. Then $\policy^\ast$ is guaranteed to be weakly Pareto optimal. By contradiction, if $\policy^\ast$ is not weakly Pareto optimal, another policy $\policy'$ exists such that $\vv' > \vast$. However, this would imply that $\sr(\vv') > \sr(\vast)$ and thus $\policy^\ast$ would not have been returned as the maximum.
\end{proof}

We provide a similar result using order approximating ASFs instead. While such ASFs enable the Pareto oracle to return Pareto optimal solutions rather than only weakly optimal solutions, the quality of the oracle with respect to the target region becomes dependent on the approximation parameter $\delta$ of the ASF. The core idea in the proof of \cref{th:approx-po} is that we can define a lower bound on the shift necessary to ensure only feasible solutions in the target region have a non-negative value. When feasible solutions exist in the shifted target region, we can then conclude by the strongly increasing property of the ASF that the maximum is Pareto optimal. 

\begin{theorem54}
Let $\sr$ be an order approximating ASF and let $\vl \in \mathbb{R}^d$ be a lower bound such that only referents $\vr$ are selected when $\vr \pde \vl$. Then $\sr$ has an inherent oracle tolerance $\bar\tau > 0$ and for any user-provided tolerance $\tau > \bar\tau$, $\oracle(\vr) = \argmax_{\policy \in \policies} s_{\vr + \tau}(\vpi)$ is a valid approximate Pareto oracle.
\end{theorem54}
\begin{proof}
Let $\vl$ be the lower bound for all referents $\vr$. We define $\bar\tau$ to be the minimal shift such that all feasible solutions with non-negative values for an order approximating ASF $s_{\vl + \bar\tau}$ with the shifted referent $\vl + \bar\tau$ are inside the box $B(\vl, \ideal)$ defined by the lower bound and ideal. The lower bound on $\bar\tau$ is clearly zero which implies that no shift is necessary. We now define an upper bound for this shift which ensures that no feasible solution has a non-negative value except potentially $\vl$ itself.

Recall the definition of $\rapprox = \{\vv \in \mathbb{R}^d \mid \dist(\vv, \rpos) \leq \delta \|\vv\|\}$, where $\delta$ is a fixed scalar in $[0, 1)$. We refer to $\vl + \rapprox$ as the extended target region. Suppose there exists a point in this extended target region $\vv \in \vl + \rapprox$ such that $\vl \pd \vv$. This implies we can write $\vv = \vl + \vx$, where $\vx$ is a non-positive vector. However, this then further implies that, $\text{dist}(\vx, \mathbb{R}^d_{\geq 0}) = \inf_{\vect{s} \in \mathbb{R}^d_{\geq 0}}\|\vx - \vect{s}\| = \|\vx\|$ as $0$ is the closest point in $\mathbb{R}^d_{\geq 0}$ for a non-positive vector. However, for $\delta \in [0, 1)$ it cannot be true that $\|\vx\| \leq \delta \|\vx\|$. Therefore, there exists no point in $\vl + \rapprox$ that is dominated by $\vl$. As such, for all points $\vv$ in the extended target region that are not equal to $\vl$, there must be a dimension $j \in \{1, \dotsc, d\}$ such that $v_j > l_j$. Consider now the shift imposed by the $\normmax$ distance between the lower point $\vl$ and ideal $\ideal$. This ensures that all points in the extended target region except $\vl$ are strictly above the ideal in at least one dimension, further implying that they are infeasible by the definition of the ideal. As such, $\|\ideal - \vl\|_\infty$ is an upper bound for $\bar\tau$.

Let us now formally define $\bar\tau$ for an order approximating ASF with approximation constant $\delta$,
\begin{equation}
\label{eq:lower-bound-epsilon}
    \bar\tau = \inf \left\{0 < \tau \leq \|\ideal - \vl\|_\infty \mid \left(\vl + \tau + \rapprox \right) \cap \{\vv \in \mathbb{R}^d \mid \ideal \pde \vv\} \subseteq B(\vl, \ideal)\right\}.
\end{equation}

In \cref{fig:proof-th32} we illustrate that this shift ensures all feasible solutions with non-negative values are inside the box. Observe, however, that by the nature of this shift, it can also ensure that some feasible solutions in the bounding box are excluded from the non-negative set. 

\begin{figure}[htb]
    \centering
        \begin{subfigure}[b]{0.3\textwidth}
        \centering
        \includegraphics[width=\textwidth]{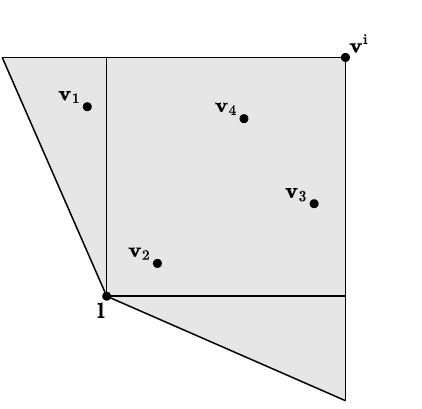}
        \subcaption{}
        \label{fig:proof-shift1}
    \end{subfigure}
    \qquad
    \begin{subfigure}[b]{0.3\textwidth}
        \centering
        \includegraphics[width=\textwidth]{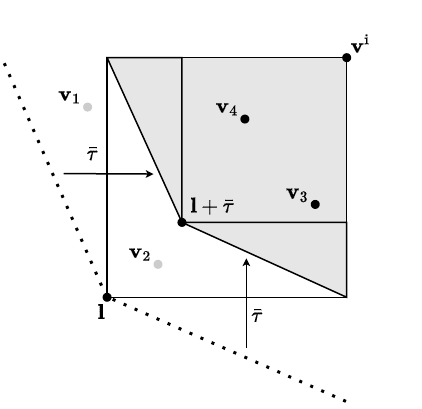}
        \subcaption{}
        \label{fig:proof-shift2}
    \end{subfigure}
    \caption{(a) A possible non-negative set (shaded) for an order approximating ASF with referent $\vl$. (b) Shifting $\vl$ by $\bar\tau$ ensures that all feasible solutions with non-negative values are in the box $B(\vl, \ideal)$.}
    \label{fig:proof-th32}
\end{figure}

Let us now show that the Pareto oracle $\oracle(\vr) = \argmax_{\policy \in \policies} s_{\vr + \tau}(\vpi)$ with $\tau > \bar\tau$ functions as required for the referent $\vl$. Assume there exists a Pareto optimal optimal $\policy'$ with expected return $\vv'$ such that $\vv' \pde \vl + \tau$. Then $s_{\vl + \tau}(\vv') \geq 0$ and therefore the maximisation will return a non-negative solution $\policy^\ast$ with expected returns $\vast$. By the definition of $\bar\tau$ we know that all feasible solutions $\policy$ with non-negative value $s_{\vl + \tau}(\vpi)$ satisfy the condition $\vpi \pde \vl + (\tau - \bar\tau)$ and therefore $\vast \pde \vl + (\tau - \bar\tau)$. Moreover, as the ASF is guaranteed to be strongly increasing, there exists no policy $\policy$ such that $\vv^{\policy} \pd \vast$ and therefore $\policy^\ast$ is Pareto optimal.


Given the lower bound $\vl$, for all referents $\vr$ such that $\vr \pde \vl$ and with $\tau > \bar\tau$, the Pareto oracle remains valid. To see this, observe that $\vr = \vl + \vx$ where $\vx$ is now a non-negative vector. Then,
\begin{equation}
\begin{split}
& \left(\vl + \tau + \rapprox \right) \cap \{\vv \in \mathbb{R}^d \mid \ideal \pde \vv\} \subseteq B(\vl, \ideal) \\
\implies & \left(\vl + \tau + \rapprox \right) \cap \{\vv \in \mathbb{R}^d \mid \ideal - \vx \pde \vv\} \subseteq B(\vl, \ideal - \vx).
\end{split}
\end{equation}
This implication can be shown by contradiction. Assume that,
\begin{equation}
    \exists \vv \in \left(\vl + \tau + \rapprox \right) \cap \{\vv \in \mathbb{R}^d \mid \ideal - \vx \pde \vv\} \text{ and } \vv \notin B(\vl, \ideal - \vx).
\end{equation}
However, by definition of $\vv$, $\ideal - \vx \pde \vv$ and
\begin{align*}
    &\vv \in \left(\vl + \tau + \rapprox \right) \cap \{\vv \in \mathbb{R}^d \mid \ideal - \vx \pde \vv\} \\
    \implies & \vv \in \left(\vl + \tau + \rapprox \right) \cap \{\vv \in \mathbb{R}^d \mid \ideal\pde \vv\}\\
    \implies & \vv \in B(\vl, \ideal) \\
    \implies & \vv \pde \vl.
\end{align*}
As $\vv \pde \vl$ and $\ideal - \vx \pde \vv$ this implies $\vv \in B(\vl, \ideal - \vx)$, which is a contradiction. Therefore $\left(\vl + \tau + \rapprox \right) \cap \{\vv \in \mathbb{R}^d \mid \ideal - \vx \pde \vv\} \subseteq B(\vl, \ideal - \vx)$. By a rigid transformation and recalling that $\vr = \vl + \vx$, we obtain,
\begin{equation}
    \left(\vr + \tau + \rapprox \right) \cap \{\vv \in \mathbb{R}^d \mid \ideal \pde \vv\} \subseteq B(\vr, \ideal).
\end{equation}
We can subsequently apply the same reasoning to establish the validity of the Pareto oracle for the lower bound $\vl$ to all dominating referents $\vr$.
\end{proof}

\subsection{Alternative Pareto oracles}
\label{ap:alternative-po}
To conclude the theoretical results for Pareto oracles, we demonstrate that both convex MDPs and constrained MDPs may be leveraged to implement them.

\smallparagraph{Convex MDPs} A convex Markov decision process is a generalisation of an MDP, where an agent seeks to minimise a convex function (or equivalently maximise a concave function) over a convex set of admissible occupancy measures. Let $\mathcal{K}_\gamma$ be the set of discounted state occupancy measures for some discount factor $\gamma$. The expected return $\vpi$ of some policy $\pi$ can be written as a linear function of the occupancy measure of the policy $d_\pi$ and the reward function of the MDP, $\vpi = \sum_{\s, \action} \vecrewards(\s, \action) d_\pi(\s, \action)$. \Cref{prop:convex-mdp-oracle} then follows immediately.

\begin{proposition}
\label{prop:convex-mdp-oracle}
Let $\momdp = \momdptuple$ be a MOMDP with $d$ objectives. For a given oracle tolerance $\tau \geq 0$ and referent $\vr$, we define a convex MDP $\mdp_{\text{conv}}$ with the same states, actions, transition function, discount factor and initial state distribution as $\momdp$. For $\sr$ defined in \cref{eq:aasf} and $\Pi$ the set of stochastic policies, $\oracle(\vr) = \argmax_{d_\policy \in \mathcal{K}_\gamma} s_{\vr + \tau}(\vpi)$ is a valid weak or approximate Pareto oracle.
\end{proposition}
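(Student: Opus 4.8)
The plan is to reduce the statement to the two characterisation results already established for achievement scalarising functions, namely \cref{th:weak-po} and \cref{th:approx-po}, by observing that the optimisation over the occupancy-measure set $\mathcal{K}_\gamma$ is a convex programme and coincides with the policy-space optimisation appearing in those theorems. First I would recall the linearity identity $\vpi = \sum_{\s, \action} \vecrewards(\s, \action) d_\pi(\s, \action)$ stated above, so that $\vpi$ is a linear function of the occupancy measure $d_\pi$. Since $\sr$ from \cref{eq:aasf} is concave in its argument for every fixed referent, the composite map $d_\pi \mapsto s_{\vr + \tau}(\vpi)$ is concave, being the composition of a concave function with a linear one. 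As $\mathcal{K}_\gamma$ is a closed convex polytope for finite MOMDPs \citep{altman1999constrained}, the programme $\max_{d_\policy \in \mathcal{K}_\gamma} s_{\vr + \tau}(\vpi)$ is exactly the maximisation of a concave objective over a convex set of admissible occupancy measures, i.e. a convex MDP $\mdp_{\text{conv}}$.

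Next I would argue that this convex programme computes the same oracle as the policy-space optimisation in \cref{th:weak-po,th:approx-po}. The key is the standard bijection between stationary stochastic policies and occupancy measures: every $d \in \mathcal{K}_\gamma$ is realised by the stationary policy $\policy(\action \mid \s) = d(\s, \action) / \sum_{\action'} d(\s, \action')$, and conversely each $\policy \in \policies$ induces an occupancy measure $d_\policy \in \mathcal{K}_\gamma$. Because the expected return, and hence $s_{\vr + \tau}(\vpi)$, depends on a policy only through its occupancy measure, the optimal value and the associated maximiser of $\max_{d_\policy \in \mathcal{K}_\gamma} s_{\vr + \tau}(\vpi)$ agree with those of $\argmax_{\policy \in \policies} s_{\vr + \tau}(\vpi)$. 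Thus solving the convex MDP returns precisely the policy that the ASF-based oracle would return.

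Finally I would close the argument by invoking the two ASF characterisations according to the augmentation parameter. When the augmentation term in \cref{eq:aasf} is switched off ($\rho = 0$), $\sr$ is order representing, so by \cref{th:weak-po} the solver with $\tau = 0$ is a valid weak Pareto oracle; when $\rho > 0$, $\sr$ is order approximating, so by \cref{th:approx-po} it has an inherent tolerance $\bar\tau > 0$ and the solver with any $\tau > \bar\tau$ is a valid approximate Pareto oracle. Combined with the equivalence of the previous paragraph, the convex-MDP implementation inherits whichever guarantee corresponds to the chosen $\sr$, which establishes the proposition.

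The step I expect to be the main obstacle is the policy/occupancy-measure equivalence in the second paragraph: one must be careful that the occupancy-measure optimum is attained by an admissible stationary stochastic policy and that no policy in $\policies$ outside this representation can achieve a strictly larger value of the scalarised objective. For finite MOMDPs this follows from the polytope structure of $\mathcal{K}_\gamma$ together with the fact that the return factors through the occupancy measure, but the correspondence must be made explicit so that the hypotheses of \cref{th:weak-po,th:approx-po} — which are phrased over $\policies$ — transfer verbatim to the convex formulation.
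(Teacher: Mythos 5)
Your argument is essentially the paper's own proof: both establish that the composition of the concave ASF with the linear map $d_\policy \mapsto \vpi$ over the convex polytope $\mathcal{K}_\gamma$ yields a convex MDP, and then invoke \cref{th:weak-po} and \cref{th:approx-po} according to whether $\sr$ is order representing or order approximating. The only addition is that you make explicit the policy/occupancy-measure correspondence needed to transfer those theorems from $\policies$ to $\mathcal{K}_\gamma$, a step the paper leaves implicit; this is a reasonable elaboration rather than a different route.
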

\begin{proof}
Since \cref{eq:aasf} is concave for any referent $\vr$ and the composition of a linear function and concave function preserves concavity, the problem is concave. Furthermore, $\mathcal{K}_\gamma$ is by definition a convex polytope for the set of stochastic policies. As such, $\mdp_{\text{conv}}$ is a convex MDP and since $\sr$ can be constructed as both an order representing and order approximating achievement scalarising function, \cref{th:weak-po} and \cref{th:approx-po} can be applied.
\end{proof}

This reformulation enables the use of techniques with strong theoretical guarantees. For instance, \citet{zhang2020variational} propose a policy gradient method that converges to the global optimum, and \citet{zahavy2021reward} introduce a meta-algorithm using standard RL algorithms that converges to the optimal solution with any tolerance, assuming reasonably low-regret algorithms. Additionally, it has been demonstrated that for any convex MDP, a mean-field game can be constructed, for which any Nash equilibrium in the game corresponds to an optimum in the convex MDP \citep{geist2022concave}.

\smallparagraph{Constrained MDPs}
A constrained Markov decision process $\mdp_{\text{const}}$ is an MDP, augmented with a set of $m$ auxiliary cost functions $C_j: \states \times \actions \times \states \to \mathbb{R}$ and related limit $c_j$. Let $J_{C_j}(\policy)$ denote the expected discounted return of policy $\policy$ for the auxiliary cost function $C_j$. The feasible policies from a given class of policies $\policies$ is then $\policies_C = \{\policy \in \policies \mid \forall i, J_{C_j}(\policy) \geq c_j\}$. Finally, the reinforcement learning problem in a CMDP is as follows,
\begin{equation}
    \policy^\ast = \argmax_{\policy \in \policies_C}v^\policy.
\end{equation}

We demonstrate that an approximate Pareto oracle can be implemented by solving an auxiliary constrained MDP, where the constraints ensure that the target region is respected and the scalar reward function is designed such that only Pareto optimal policies are returned as the optimal solution. Importantly, since constrained MDPs have no inherent tolerance, the user is free to select any tolerance $\tau > 0$.

\begin{proposition}
\label{prop:cmdp-oracle}
Let $\momdp = \momdptuple$ be a MOMDP with $d$ objectives. For a given oracle tolerance $\tau > 0$ and referent $\vr$, we define a constrained MDP $\mdp_{\text{const}}$ with the same states, actions, transition function, discount factor and initial state distribution as $\momdp$. $\mdp_{\text{const}}$ has $d$ cost functions corresponding to the original $d$ reward function with limits $\vr + \tau$ and the scalar reward function is the sum of the original reward vector. Then $\oracle(\vr) = \argmax_{\policy \in \policies_C}v^\policy$ is a valid approximate Pareto oracle.
\end{proposition}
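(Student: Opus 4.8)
The plan is to verify that an optimal policy of the constructed CMDP $\mdp_{\text{const}}$ meets every requirement of an approximate Pareto oracle (\cref{def:approximate-pareto-oracle}) with intrinsic tolerance $\bar\tau = 0$. By construction the $j$-th cost constraint reads $v^\policy_j \ge r_j + \tau$, so the feasible policy set is precisely $\policies_C = \{\policy \in \policies \mid \vpi \pde \vr + \tau\}$, i.e. the policies whose expected return lands in the shifted target region, while the scalar objective is the sum $\sum_{j=1}^d v^\policy_j$.

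First I would dispose of the infeasible case. If $\policies_C = \emptyset$, the characterisation above shows that no policy at all --- let alone a Pareto optimal one --- attains $\vpi \pde \vr + \tau$; returning $\textsc{False}$ is therefore correct, matching the definition. Otherwise, let $\policy^\ast = \argmax_{\policy \in \policies_C} \sum_{j=1}^d v^\policy_j$. Membership $\policy^\ast \in \policies_C$ gives $\vv^{\policy^\ast} \pde \vr + \tau$ for free, so the returned policy already lies in the target region.

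The substantive step is to show $\policy^\ast$ is Pareto optimal over the \emph{entire} class $\policies$, not merely over the constrained subclass $\policies_C$ that the optimiser searches. I would argue by contradiction: suppose some $\policy' \in \policies$ satisfies $\vpip \pd \vv^{\policy^\ast}$. The crucial observation is that $\policies_C$ is upward closed under Pareto dominance --- since $\vv^{\policy^\ast} \pde \vr + \tau$ and $\vpip \pde \vv^{\policy^\ast}$, transitivity of $\pde$ forces $\vpip \pde \vr + \tau$, so $\policy' \in \policies_C$. But $\vpip \pd \vv^{\policy^\ast}$ means every component is at least as large and at least one is strictly larger, whence $\sum_{j=1}^d v^{\policy'}_j > \sum_{j=1}^d v^{\policy^\ast}_j$, contradicting the optimality of $\policy^\ast$ within $\policies_C$. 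Hence no dominating policy exists and $\policy^\ast$ is Pareto optimal.

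The delicate point --- and the one I would state most carefully --- is exactly this mismatch between the constrained feasible set over which the CMDP optimises and the full policy class over which Pareto optimality is required; the upward-closedness of the target region is what bridges it, since any candidate dominator is automatically feasible and thus already competes for the maximum. I would close by contrasting this oracle with the ASF construction of \cref{th:approx-po}: here the membership $\vpi \pde \vr + \tau$ is enforced \emph{exactly} as a hard constraint rather than recovered approximately through a shift, so no positive inherent tolerance $\bar\tau$ is incurred and any user-chosen $\tau > 0$ is admissible, justifying the freedom to pick $\tau$ noted before the statement.
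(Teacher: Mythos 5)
Your proof is correct and follows essentially the same route as the paper's: maximising the sum of returns over the constrained feasible set, and deriving a contradiction from any Pareto-dominating policy via strict inequality of the component sums. You are in fact slightly more careful than the paper, which leaves implicit the upward-closedness observation that a dominating policy $\policy'$ must itself lie in $\policies_C$ before its larger sum can contradict optimality, and which does not explicitly treat the $\policies_C = \emptyset$ case.
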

\begin{proof}
Assume the construction outlined in the theorem and that there exists a Pareto optimal policy $\policy$ such that $\vpi \pde \vr + \tau$. Then $\policies_C$ is non-empty and the Pareto oracle $\mathcal{O}^\tau(\vr) = \argmax_{\policy \in \policies_C}v^\policy$ returns a Pareto optimal policy $\policy^\ast$ with expected return $\vast$ such that $\vast \pde \vr + \tau$. If $\policy^\ast$ is not Pareto optimal, there exists a policy $\policy'$ with expected return $\vv'$ such that $\vv' \pd \vast$. This then implies that,
\begin{equation}
    \sum_{j \in \{1, \dots, d\}} v'_j > \sum_{j \in \{1, \dots, d\}} v^{\ast}_j
\end{equation}
which leads to a contradiction.
\end{proof}

\section{Experiment details}
\label{ap:experiment-details}
In this section, we provide details concerning the experimental evaluation presented in \cref{sec:experiments}. All experiments were run on a single CPU core with access to at most 4GB of RAM and were allowed a maximum of three days wallclock time.

\subsection{Environments}
We initialise each experiment with predefined minimal and maximal points to establish the bounding box of the environment. It is important to emphasise that these points can be obtained using conventional reinforcement learning algorithms without requiring any modifications, justifying their omission from our evaluation process.

\smallparagraph{Deep Sea Treasure (DST)} We initialise IPRO with $(124, -50)$ and $(1, -1)$ as the maximal points and give $(0, -50)$ as the only minimal point. We set the discount factor to $1$, signifying no discounting, and maintain a fixed time horizon of $50$ timesteps for each episode. We note that we one-hot encode the observations due to the discrete nature of the state space. Finally, a tolerance $\tau$ of $0$ was set to allow IPRO to find the complete Pareto front in this environment.

\smallparagraph{Minecart} In the Minecart environment, we set $\gamma = 0.98$ to align with related work. For minimal points, IPRO is initialised with the nadir $(-1, -1, -200)$ for each dimension. For maximal points, we consider the nadir and set each dimension to its theoretical maximum: $(1.5, -1, -200), (-1, 1.5, -200), (-1, -1, 0)$. Our reference point is also the nadir and the time horizon is 1000. A tolerance of $1 \times 10^{-15}$ was used.

\smallparagraph{MO-Reacher} In the Reacher environment, we use $(-50, -50, -50, -50)$ in each dimension as the minimal points, and similarly, set this vector to 40 for each dimension for the maximal points. The discount factor $\gamma$ is set to $0.99$. The reference point is again set to the nadir, a time horizon of 50 was used and tolerance was set to $1 \times 10^{-15}$.


\subsection{Hyperparameters}
\label{sec:hyperparameters}
In \cref{tab:hp-desc} we provide a description of all hyperparameters used in our Pareto oracles and the algorithms for which they apply. Finally, in \cref{tab:hp-dqn,tab:hp-a2c,tab:hp-ppo} we give the hyperparameter values used in our reported experiments.

\begin{table}[H]
\caption{A description of the relevant hyperparameters.}
\label{tab:hp-desc}
\centering
\begin{tabular}{lll}
\multicolumn{1}{c}{\bf Parameter} &\multicolumn{1}{c}{\bf{Algorithm}}  &\multicolumn{1}{c}{\bf{Description}} \\ \hline \\
scale     & DQN, A2C, PPO    & Scale the output of \cref{eq:aasf} \\
$\rho$ & DQN, A2C, PPO & Augmentation parameter from \cref{eq:aasf} \\
pretrain\_iters & DQN, A2C, PPO & The number of pretraining iterations \\
num\_referents & DQN, A2C, PPO & The number of additional referents to sample for pretraining \\
pretraining\_steps & DQN, A2C, PPO & Number of global steps while pretraining \\
online\_steps & DQN, A2C, PPO & Number of global steps while learning online \\
critic\_hidden & DQN, A2C, PPO & Number of hidden neurons per layer for the critic\\
lr\_critic & DQN, A2C, PPO & The learning rate for the critic\\
actor\_hidden & A2C, PPO & Number of hidden neurons per layer for the actor\\
lr\_actor & A2C, PPO & Learning rate for the actor \\
n\_steps & A2C, PPO & Number of environment interactions before each update \\
gae\_lambda & A2C, PPO & $\lambda$ parameter for generalised advantage estimation \\
normalise\_advantage & A2C, PPO & Normalise the advantage\\
e\_coef & A2C, PPO & Entropy loss coefficient to compute the overall loss \\ 
v\_coef & A2C, PPO & Value loss coefficient to compute the overall loss \\
max\_grad\_norm & A2C, PPO & Maximum gradient norm\\
clip\_coef & PPO& Clip coefficient used in the PPO surrogate objective \\
num\_envs & PPO & Number of parallel environments to run in\\
clip\_range\_vf & PPO & Clipping range for the value function\\
update\_epochs & PPO & Number of update epochs to execute\\
num\_minibatches & PPO & Number of minibatches to divide a batch in\\
batch\_size & DQN & Batch size for each update \\
buffer\_size & DQN & Size of the replay buffer \\
soft\_update & DQN & Multiplication factor for the soft update\\
pre\_epsilon\_start & DQN & Pretraining starting exploration probability \\
pre\_epsilon\_end & DQN & Pretraining final exploration probability \\
pre\_exploration\_frac & DQN & Pretraining exploration fraction of total timesteps \\
pre\_learning\_start & DQN & Pretraining start of learning \\
online\_epsilon\_start & DQN & Online starting exploration probability \\
online\_epsilon\_end & DQN & Online final exploration probability \\
online\_exploration\_frac & DQN & Online exploration fraction of total timesteps \\
online\_learning\_start & DQN & Online start of learning \\
\end{tabular}
\end{table}

\begin{table}[H]
\caption{The hyperparameters used in the DQN oracles.}
\label{tab:hp-dqn}
\centering
\begin{tabular}{llll}
\multicolumn{1}{c}{\bf Parameter} &\multicolumn{1}{c}{\bf{DST}}  &\multicolumn{1}{c}{\bf{Minecart}} &\multicolumn{1}{c}{\bf{MO-Reacher}} \\ \hline \\
scale                     & 100 & 100 & 10 \\
$\rho$                    & 0.1 & 0.01 & 0.01 \\
pretrain\_iters           & / & 50 & 50 \\
num\_referents            & / & 32 & 16 \\
online\_steps             & 2.5e+04 & 2.0e+04 & 7.5e+03 \\
pretraining\_steps        & / & 2.0e+04 & 7.5e+03 \\
critic\_hidden            & (256, 256) & (256, 256, 256, 256) & (256, 256, 256, 256) \\
lr\_critic                & 0.0003 & 0.0001 & 0.0007 \\
batch\_size               & 512 & 32 & 16 \\
buffer\_size              & 1.0e+04 & 1.0e+05 & 1.0e+05 \\
soft\_update              & 0.25 & 0.1 & 0.1 \\
pre\_learning\_start      & / & 1.0e+03 & 1.0e+03 \\
pre\_epsilon\_start       & / & 0.75 & 0.5 \\
pre\_epsilon\_end         & / & 0.2 & 0.1 \\
pre\_exploration\_frac    & / & 0.75 & 0.75 \\
online\_learning\_start   & 100 & 100 & 100 \\
online\_epsilon\_start    & 1 & 0.5 & 0.5 \\
online\_epsilon\_end      & 0.05 & 0.1 & 0.05 \\
online\_exploration\_frac & 0.75 & 0.25 & 0.5
\end{tabular}
\end{table}

\begin{table}[H]
\caption{The hyperparameters used in the A2C oracles.}
\label{tab:hp-a2c}
\centering
\begin{tabular}{llll}
\multicolumn{1}{c}{\bf Parameter} &\multicolumn{1}{c}{\bf{DST}}  &\multicolumn{1}{c}{\bf{Minecart}} &\multicolumn{1}{c}{\bf{MO-Reacher}} \\ \hline \\
scale                & 100 & 100 & 100 \\
$\rho$               & 0.01 & 0.01 & 0.01 \\
pretrain\_iters      & 75 & 75 & 75 \\
num\_referents       & 16 & 16 & 16 \\
online\_steps        & 5.0e+03 & 2.5e+04 & 5.0e+03 \\
pretraining\_steps   & 2.5e+03 & 7.5e+04 & 2.5e+04 \\
critic\_hidden       & (128,) & (128, 128, 128) & (64, 64) \\
lr\_critic           & 0.001 & 0.0001 & 0.0007 \\
actor\_hidden        & (128,) & (128, 128, 128) & (64, 64) \\
lr\_actor            & 0.0001 & 0.0001 & 0.001 \\
n\_steps             & 16 & 32 & 16 \\
gae\_lambda          & 0.95 & 0.95 & 0.95 \\
normalise\_advantage & False & False & False \\
e\_coef              & 0.01 & 0.1 & 0.1 \\
v\_coef              & 0.5 & 0.5 & 0.1 \\
max\_grad\_norm      & 0.5 & 50 & 1
\end{tabular}
\end{table}

\begin{table}[H]
\caption{The hyperparameters used in the PPO oracles.}
\label{tab:hp-ppo}
\centering
\begin{tabular}{llll}
\multicolumn{1}{c}{\bf Parameter} &\multicolumn{1}{c}{\bf{DST}}  &\multicolumn{1}{c}{\bf{Minecart}} &\multicolumn{1}{c}{\bf{MO-Reacher}} \\ \hline \\
scale                & 100 & 100 & 100 \\
$\rho$               & 0.1 & 0.01 & 0.01 \\
pretrain\_iters      & / & 100 & 100 \\
num\_referents       & / & 32 & 8 \\
online\_steps        & 3.0e+04 & 2.5e+04 & 7.5e+03 \\
pretraining\_steps   & / & 2.0e+04 & 1.5e+04 \\
critic\_hidden       & (128, 128) & (256, 256) & (128, 128, 128) \\
lr\_critic           & 0.0006 & 0.0001 & 0.001 \\
actor\_hidden        & (128, 128) & (256, 256) & (128, 128, 128) \\
lr\_actor            & 0.0002 & 0.0001 & 0.0003 \\
n\_steps             & 16 & 32 & 64 \\
gae\_lambda          & 0.95 & 0.95 & 0.95 \\
normalise\_advantage & False & False & False \\
e\_coef              & 0.05 & 0.1 & 0.1 \\
v\_coef              & 0.5 & 0.1 & 0.1 \\
max\_grad\_norm      & 5 & 50 & 5 \\
clip\_coef           & 0.2 & 0.1 & 0.2 \\
num\_envs            & 8 & 16 & 8 \\
anneal\_lr           & False & False & False \\
clip\_range\_vf      & 0.2 & 0.5 & 0.2 \\
update\_epochs       & 2 & 16 & 4 \\
num\_minibatches     & 4 & 4 & 4
\end{tabular}
\end{table}

\end{document}